\theoremstyle{thmstyletwo}%
\newtheorem{theorem}{Theorem}
\newtheorem{proposition}[theorem]{Proposition}%
\newtheorem{remark}{Remark}%
\newtheorem{lemma}{Lemma}
\newtheorem{definition}{Definition}
\newtheorem{assumption}{Assumption}
\numberwithin{equation}{section}
\newcommand{\RR}{\mathbb{R}}
\newcommand{\Id}{\mathrm{Id}}
\newcommand{\Hi}{\mathcal{H}}
\DeclareMathOperator*{\Argmin}{Argmin}
\newcommand{\prox}{\normalfont \textrm{prox}}
\newcommand{\lp}{\normalfont \textrm{lp}}
\newcommand{\dom}{\textrm{dom }}
\DeclareMathOperator*{\argmin}{arg\,min}
\newcommand{\vertiii}[1]{{\left\vert\kern-0.25ex\left\vert\kern-0.25ex\left\vert #1
    \right\vert\kern-0.25ex\right\vert\kern-0.25ex\right\vert}}
\begin{document}

\DOI{DOI HERE}
\copyrightyear{2021}
\vol{00}
\pubyear{2021}
\access{Advance Access Publication Date: Day Month Year}
\appnotes{Paper}
\copyrightstatement{Published by Oxford University Press on behalf of the Institute of Mathematics and its Applications. All rights reserved.}
\firstpage{1}


\title[A BCD framework for non-convex composite opt. App. to precision matrix estimation]{A block-coordinate descent framework for non-convex composite optimization. Application to sparse precision matrix estimation.}

\author{Guillaume Lauga*
\address{\orgdiv{Laboratoire J.A. Dieudonné}, \orgname{Université Côte d'Azur}, \orgaddress{\street{28 Avenue de Valrose}, \postcode{06103, Nice}, \country{France}}}}

\authormark{Guillaume Lauga}

\corresp[*]{Corresponding author: \href{email:guillaume.lauga@univ-cotedazur.fr}{guillaume.lauga@univ-cotedazur.fr}}

\received{Date}{0}{Year}
\revised{Date}{0}{Year}
\accepted{Date}{0}{Year}


\abstract{Block-coordinate descent (BCD) is the method of choice to solve numerous large scale optimization problems, however their theoretical study for non-convex optimization, has received less attention. In this paper, we present a new block-coordinate descent (BCD) framework to tackle non-convex composite optimization problems, ensuring decrease of the objective function and convergence to a solution. This framework is general enough to include variable metric proximal gradient updates, proximal Newton updates, and alternated minimization updates. This generality allows to encompass three versions of the most used solvers in the sparse precision matrix estimation problem, deemed Graphical Lasso: graphical ISTA \citep{rolfs2012iterative}, Primal GLasso \citep{mazumder2012}, and QUIC \citep{hsieh2014quic}. We demonstrate the value of this new framework on non-convex sparse precision matrix estimation problems, providing convergence guarantees and up to a $100$-fold reduction in the number of iterations required to reach state-of-the-art estimation quality.}
\keywords{Non-convex optimization; Block-coordinate descent; Composite optimization; Graphical Lasso}


\maketitle

\section{Introduction}
We introduce in this paper a new block coordinate descent framework to solve the following separable composite optimization problem
\begin{equation} \label{eq:optim_general}
    \mathbf{\widehat{\mathbf{x}}} \in \Argmin_{\mathbf{x}=(x_1,\dots,x_{L}) \in \Hi} \Psi(\mathbf{x}) = f(\mathbf{x}) + g( \mathbf x)
\end{equation}
where $\Hi$ is the direct sum of real, separable, and finite dimensional Hilbert spaces
$(\Hi_\ell)_{1\le \ell\le L}$, 
$f:\Hi\to (-\infty,+\infty]$ is continuously differentiable, and
\begin{equation} \label{eq:composite_penalty}
    g(\mathbf x) := \sum_{\ell=1}^L (\phi_\ell \circ \psi_\ell)(x_\ell)
\end{equation}
is the sum of composite functions defined for all $\ell \in \{1,\ldots,L\}$ as $\psi_\ell: \Hi_\ell  \to [0,+\infty]$, $\phi_\ell : [0,+\infty] \to (-\infty,+\infty]$.
This problem is both non-smooth and non-convex, and encapsulates a lot of interesting problems for instance in image restoration \citep{repetti2020forward,lauga2024multilevel}, and in sparse matrix estimation \citep{sun2018graphical,williams2020beyond,pouliquen2025quete}. 
Indeed, this class of composite functions include widely popular non-convex, sparsity inducing penalties such as SCAD \citep{fan2001variable}, MCP \citep{zhang2010nearly}, $\ell_q^q$ regularizers with $q\in(0,1)$ \citep{grasmair2008sparse} or log-sum penalty \citep{candes2008reweighted,prater2022proximity}. 

They key strategy to solve this problem is to iteratively minimize approximations of the objective function $\Psi$, notably avoiding to computing the proximal operator of $g$. The optimization problem is decomposed into smaller subproblems that are easier to solve \citep{nutini2022let,nutini2015coordinate,salzo_parallel_2022,repetti2021variable}. In this paper, we construct a \textit{block-coordinate descent} algorithm exploiting a similar decomposition of Problem \eqref{eq:optim_general}.  
\paragraph{Composite optimization.} 
Examples of algorithms able to solve variations of Problem \eqref{eq:optim_general} are numerous \citep{burke1985descent,cartis2011composite,fletcher2009model,drusvyatskiy2016error,lewis2015proximal,powell1983general,powell1984global,wright1990convergence,chen1997convergence,combettes2005forward,combettes2021fixed,chouzenoux2014variable}. 
Here, we focus on those that can solve our problem with composite penalties (see \eqref{eq:composite_penalty}). 
Assuming that $f$ is continuously differentiable, and that the proximity operator of $g$ is available, a common approach is the forward-backward algorithm \citep{chen1997convergence,combettes2005forward,combettes2021fixed,chouzenoux2014variable} whose convergence to a critical point has been established under convexity \citep{combettes2005forward,combettes2021fixed}, and  under non-convexity \citep{attouch2013convergence} assumptions, and which can be accelerated via preconditioning strategies \citep{chouzenoux2014variable,chouzenoux2016block,frankel2015splitting}. 
These methods, known as variable metric forward-backward algorithms, display great performance in non-convex settings \citep{chouzenoux2014variable,repetti2014preconditioned}, and such idea can be applied to construct variable metric block-coordinate forward-backward approaches \citep{chouzenoux2016block}. %
To handle the specific structure of Problem \eqref{eq:optim_general}, with $g$ being a composite function, authors of \citep{chouzenoux2013majorize,chouzenoux2014variable,chouzenoux2016block,chen1997convergence,ochs2015iteratively,geiping2018composite,repetti2021variable} proposed to iteratively majorize the function $g$ with a functional easier to minimize. The introduction of \citep{repetti2021variable} provides an exhaustive discussion about the variations of this approach with a forward-backward (proximal-gradient) backbone. We also follow this approach to handle the structure of Problem \eqref{eq:optim_general}.

In this work, we assume that Problem \eqref{eq:optim_general} is such that for every $\ell \in \{1,\ldots,L\}$, the function $\psi_\ell:\Hi_\ell$ is convex, proper, lower semi-continuous and Lipschitz continuous on its domain. Furthermore, $\phi_\ell:[0,+\infty] \rightarrow (-\infty,+\infty]$ is a concave, strictly increasing and differentiable function, with a locally Lipschitz continuous derivative such that $(\phi'_\ell \circ \psi_\ell)$ is Lipschitz continuous on the domain of $\psi_\ell$. 
Our main challenge is the non-proximability of the composition in general. Fortunately, in a lot of interesting contexts, the proximity operator of $\psi_\ell$ is available.
This has led authors of \citep{repetti2021variable} to propose an algorithm that uses a majorant of $g$ whose proximity operator only involves the one of $\psi_\ell$. This majorant needs not to be updated at each iteration, thus providing a framework analogous to that of reweighted minimization \citep{candes2008reweighted}. Under minimal assumptions, authors of \citep{repetti2021variable} were able to construct a forward-backward algorithm that converges to a critical point of the objective function.

In this work, we build upon their results and propose to exploit the \emph{block structure} of Problem \eqref{eq:optim_general} to unlock substantial computational improvements, especially in high-dimensional regimes.

\paragraph{Block-coordinate descent.} %
BCD is a generic optimization framework that has been widely adopted across various fields of machine learning \citep{wright2015coordinate}, statistics \citep{friedman2007pathwise} or deep learning \citep{pmlr-v97-zeng19a,gratton2024block,zhang2017convergent} due to its amenability to large scale applications. BCD methods come in many flavors, and the right choice of algorithm depends on the problem at hand (as can be seen in \cite{pouliquen2025quete} for the Graphical Lasso problem). 

A first approach is to minimize $\Psi$ one block at a time $\ell$, keeping the other fixed.
This method is closely related to Gauss-Seidel method for equation solving \citep{ortega1970iterative} and some convergence guarantees can be shown under various but strict assumptions \citep{ortega1970iterative,bertsekas1999nonlinear,tseng2001convergence}. Notably, convexity of $\Psi$ along the blocks is not enough to ensure convergence \citep{auslender1976optimisation}. However, under separability of the function $g$, these assumptions can be relaxed \citep{tseng2001convergence}: if $\Psi$ is quasi-convex and hemivariate regular in each block, if the updates follow an essentially cyclic rule (i.e., each one of them is updated at least once within a given number of iterations), and if
either $\Psi$ is pseudoconvex in every pair of blocks or has at most one minimizer with respect to each block; then, every cluster point is a critical point. %

In most cases, complete minimization at each iteration is out of reach. One therefore considers (possibly inexact) descent updates instead. For the non-smooth problem at hand, convergence guarantees have been obtained for proximal-gradient descent updates in various settings in \citep{bolte2014proximal,chouzenoux2016block,richtarik2014iteration,richtarik2016parallel,fercoq2015,wright2015coordinate,salzo_parallel_2022,lauga2025BCD}. %
Depending on the update rules (i.e., the order in which blocks are updated), the convergence guarantees vary. For example, it has not been shown yet that random selection of the blocks to update at each iteration would yield convergence of the \textit{iterates} to critical points of non-convex functions \citep{fercoq2015,wright2015coordinate,richtarik2016parallel,salzo_parallel_2022}. Thus we restrict our attention to "deterministic" approaches, which can still incorporate some randomness through re-shuffling of the order of updates \cite{lauga2025BCD}. In this setting, convergence to a critical point of non-convex functions was proven for instance in \citep{chouzenoux2016block} using an essentially cyclic update rule. It was later shown in \citep{lauga2025BCD} that several blocks could be updated at the same time with the same convergence guarantees. Variable metric was incorporated in several works \citep{bonettini2018block,chouzenoux2016block}, without the possibility of parallel updates like in \cite{lauga2025BCD}, which was, until recently, a small blind spot of the analysis of deterministic block-coordinate descent approaches. %

\paragraph{Sparse precision matrix estimation.}
To motivate our study, we address  the problem of estimating sparse precision matrix. 
Estimating sparse precision matrices (inverses of covariance matrices) is an important research topic in graphical models, as they can be interpreted as the adjacency matrices of conditional independence relational graphs under the assumption of Gaussian data \citep{koller2009probabilistic}.
The Graphical Lasso \citep{banerjee2008model,friedman2008} is arguably the most popular estimator for sparse precision matrices and reads as a composite optimization problem of the form \ref{eq:optim_general}.
Consider a dataset of $P$ samples $\{X^{(i)}\}_{i=1}^P$, drawn \textit{i.i.d.} from a $d$ dimensional, zero-mean, Gaussian distribution with covariance matrix $\Sigma \in \mathbb{S}^d_{++}$ (the space of $d \times d$ symmetric, positive-definite matrices). %
The Graphical Lasso estimation problem reads
\begin{equation} \label{pb:graphical_lasso}
    \widehat{\boldsymbol{\Theta}} = \argmin_{\boldsymbol{\Theta} \in \mathbb{S}^d_{++}} - \log \det(\boldsymbol{\Theta}) + \mathrm{Tr}(S^\top\boldsymbol{\Theta}) + \lambda \sum_{i,j=1}^d |[\boldsymbol{\Theta}]^{(i,j)}|.
\end{equation}

Several approaches have been developed over the years to solve this problem \citep{banerjee2008model,friedman2008,mazumder2012,pouliquen2024schur} (see \citep{pouliquen2025quete} and references therein for a computational comparison). Due to the large scale of the problem, most of these methods are BC approaches, except Graphical-ISTA \citep{rolfs2012iterative}, which computes full updates. 
The BC methods can be divided into two classes: alternated minimization (e.g. Primal G-Lasso \citep{mazumder2012}), and variable metric proximal gradient or proximal Newton methods (e.g. QUIC \citep{hsieh2013big,hsieh2014quic} or \citep{treister2014block}). They are designed to solve the \emph{convex} Graphical Lasso problem. However, the resulting estimator is biased due to the $\ell_1$ penalization as this penalty shrinks \emph{all} non-zero coefficients to zero \citep{beck2009fast}. 

Several improvements can be made in order to improve the quality of the estimated precision matrix $\boldsymbol{\widehat{\Theta}}$, the most significant of which is to replace the $\ell_1$-norm by a \emph{non-convex} surrogate.
Non-convex penalties are well known to mitigate the bias of the $\ell_1$-norm \citep{fan2001variable,bertrand2022beyond,selesnick2014sparse,fan2018lamm}. Interestingly, the solvers that were designed to tackle the convex Graphical Lasso problem can be easily used to solve non-convex ones, with iterative $\ell_1$-reweighting \citep{candes2008reweighted,pouliquen2025quete,sun2018graphical}.
However, this comes at the price of a much more important computational cost. Indeed, instead of solving one minimization problem, a sequence of the same minimization problem is solved (between $4$ \citep{sun2018graphical,candes2008reweighted} and $20$ \citep{pouliquen2025quete}, in practice).

\paragraph{Contributions.}  Our contributions are two-fold:
\begin{enumerate}
    \item Inspired by \citep{repetti2021variable}, we propose a new, and general, variable metric block-coordinate descent framework for the minimization of non-smooth, non-convex composite optimization problems. We show under standard assumptions that with this framework, \emph{convergence to a critical point of the underlying optimization problem \eqref{eq:optim_general} is guaranteed.} The generality of the framework allows us to propose (i) a variable metric block-coordinate forward-backward algorithm (amenable to some inexactness), (ii) a block-coordinate proximal Newton algorithm, and (iii) a Gauss-Seidel-like algorithm. A notable difference with previous works is the possibility of deterministic parallel updates.%
    \item With this general framework, we can encompass several algorithms designed to tackle the convex Graphical Lasso. Hence, we can avoid the full minimization previously required at each iteration of the reweighting algorithm \citep{sun2018graphical,pouliquen2025quete}, which greatly accelerates the convergence of the algorithms in practice. In particular, we can cast with minor modifications the following algorithms in the non-convex setting:  Graphical ISTA \cite{rolfs2012iterative}, P(rimal) G-Lasso \citep{mazumder2012}, and QUIC \citep{hsieh2013big,hsieh2014quic}. Numerical experiments show that estimation performances can be maintained for a fraction of the previous computational cost. %
\end{enumerate}

\paragraph{Organization of the paper.}
This paper is organized as follows. In Section \ref{sec:background} we present the usual concepts and definitions in non-convex optimization. We then introduce in Section \ref{sec:framework} our proposed framework and its convergence analysis. This analysis is first tailored for a variable metric and inexact block-coordinate forward-backward algorithm and then adapted to its Newton and Gauss-Seidel version. We conclude this paper with numerical experiments on the non-convex Graphical Lasso problem in Section \ref{sec:graphical_LASSO}.

\paragraph{Notations.} 
We denote by $\vertiii{\cdot}$ the Euclidean norm on $\Hi$ and by $\Vert \cdot \Vert$ the Euclidean norm on the $L$ spaces $(\Hi_\ell)_{1\leq \ell \leq L}$. Similarly, the scalar product on $\Hi$ will be denoted by $\langle \langle \cdot,\cdot \rangle \rangle$ and the scalar product on $\Hi_\ell$ by  $\langle \cdot,\cdot \rangle$; the potential ambiguity between two spaces is cleared up as the variables on which the scalar product is applied will be indexed by $\ell$. Note that for all $\mathbf{x},\mathbf{y} \in \Hi$, $\langle \langle \mathbf{x},\mathbf{y} \rangle \rangle = \sum_{\ell=1}^L \langle x_\ell,y_\ell \rangle$. For a closed set $\Omega$, the distance function to $\Omega$ is defined for all $\mathbf{x}\in\Hi$ as $\mathrm{dist}_\Omega(\mathbf{x}) = \inf_{\mathbf{y} \in \Omega} \vertiii{\mathbf{x}-\mathbf{y}}$. To refer to the functional taken with respect to a set of blocks indexed by $J \subseteq \{1,\ldots,L\}$ while the other blocks are fixed, we will note it : $(x_{\ell \in J}\mid x_{\ell \notin J})$. If $J$ only index is $\ell$, we will simply write $(x_\ell \mid x_{\bar \ell})$. Hence,
\begin{equation*}
    \forall y \in \Hi_\ell, \quad f(y,x_{\bar \ell}) = f(x_1,x_2, \ldots, x_{\ell-1},y,x_{\ell+1},\ldots,x_L).
\end{equation*}
For a continuously differentiable function $f$, we denote for all $\mathbf{x} \in \Hi$ by $\nabla_\ell f(\mathbf{x})$ the gradient of $f$ taken with respect to the variables in the $\ell$-th block, and $\nabla_J f(\mathbf{x})$ if it is taken with respect to a subset $J$ of variables. Similarly, $\partial_J f$ will denote the $\ell \in J$-components of the set $\partial f$.

\section{Optimization background} \label{sec:background}
In this section, we present definitions of concepts and results we use throughout this paper in a non-convex setting. For additional information on non-convex optimization, we refer the reader to \citep{VarAnalRockafellar}.
\subsection{Subgradients}
\begin{definition}{\textbf{Subgradient \citep{VarAnalRockafellar}.}}
    Let $\Psi:\Hi \mapsto (-\infty+\infty]$, and let $\mathbf{x}\in\Hi$. The Fréchet subdifferential of $\Psi$ at $\mathbf{x}$ is denoted by $\hat{\partial} \Psi(\mathbf{x})$ and is given by the set
    \begin{align*}
        \hat{\partial} \Psi(\mathbf{x}) = \left\{ \mathbf{\hat{s}} \in \Hi \;|\; \lim_{\mathbf{y}\rightarrow \mathbf{x}} \inf_{\mathbf{y} \neq \mathbf{x}} \frac{1}{\vertiii{\mathbf{x}-\mathbf{y}}} \left(\Psi(\mathbf{y})-\Psi(\mathbf{x})-\langle \mathbf{y}-\mathbf{x}, \mathbf{\hat{s}} \rangle \right) \geq 0 \right\}.
    \end{align*}
    If $\mathbf{x} \notin \dom \Psi$, then $\hat{\partial} \Psi(\mathbf{x}) = \emptyset$.
    The limiting subdifferential of $\Psi$ at $\mathbf{x}$ is denoted by $\partial \Psi(\mathbf{x})$ and is given by
    \begin{align*}
\partial \Psi(\mathbf{x}) = \big\{ \mathbf{\hat{s}} \in \Hi \;|\; \exists & \left( \mathbf{x}^k,\mathbf{\hat{s}}^k \right) \overset{k\rightarrow\infty}{\rightarrow} \left(\mathbf{x},\mathbf{\hat{s}}\right) \\ & \textrm{ such that } \Psi(\mathbf{x}^k) \overset{k\rightarrow\infty}{\rightarrow} \Psi(\mathbf{x}) \textrm{ and } (\forall k \in \mathbb{N}) ~\mathbf{\hat{s}}^k \in \hat{\partial} \Psi(\mathbf{x}^k) \big\}.
\end{align*}
Both $\hat{\partial} \Psi (\mathbf{x})$ and $\partial \Psi(\mathbf{x})$ are closed sets \citep[Theorem 8.6]{VarAnalRockafellar}.
Recall that if $\Psi$ is convex, its subdifferential is given for all $\mathbf{x} \in \Hi$ by
\begin{equation}
\label{eq:subdif_convex}
    \partial \Psi(\mathbf{x}) = \{\mathbf{s} \in \Hi \;|\; \Psi(\mathbf{x}) + \langle \mathbf{s}, \mathbf{y}-\mathbf{x} \rangle \leq \Psi(\mathbf{y}), \forall \mathbf{y} \in \Hi \}.
\end{equation}

\end{definition}

This subdifferential has notable properties. If $\mathbf{x}$ is a (local) minimizer of $\Psi$ if $\mathbf{0}\in \partial \Psi(\mathbf{x})$.
\begin{proposition}{\textbf{Chain rule \citep{VarAnalRockafellar}}.}
Let $f:\Hi \rightarrow \RR$ be a differentiable function and $g:\Hi \rightarrow \RR$, then we have $\partial(f+g) = \nabla f + \partial g$.
\end{proposition}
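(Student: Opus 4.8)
The statement to prove is the chain rule: for $f$ differentiable and $g$ arbitrary, $\partial(f+g) = \nabla f + \partial g$. Let me sketch a proof plan.

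The plan is to prove this by first establishing the corresponding identity at the level of the Fréchet subdifferential, namely $\hat\partial(f+g)(\mathbf{x}) = \nabla f(\mathbf{x}) + \hat\partial g(\mathbf{x})$, and then to lift it to the limiting subdifferential by passing to limits. The Fréchet-level identity is the natural starting point because differentiability of $f$ means that near a point $\mathbf{x}$, $f$ is well approximated by its linearization, so adding $f$ to $g$ simply shifts the relevant difference quotient by the linear term $\langle \mathbf{y}-\mathbf{x}, \nabla f(\mathbf{x})\rangle$ up to an $o(\vertiii{\mathbf{y}-\mathbf{x}})$ error.

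Concretely, first I would fix $\mathbf{x} \in \dom g$ (outside the domain both sides are empty, so there is nothing to prove) and take $\mathbf{\hat s} \in \hat\partial g(\mathbf{x})$. Writing $\Psi = f + g$ and using $f(\mathbf{y}) = f(\mathbf{x}) + \langle \mathbf{y}-\mathbf{x}, \nabla f(\mathbf{x})\rangle + o(\vertiii{\mathbf{y}-\mathbf{x}})$, one checks directly from the definition that
\[
\liminf_{\mathbf{y}\to\mathbf{x}, \mathbf{y}\neq\mathbf{x}} \frac{\Psi(\mathbf{y}) - \Psi(\mathbf{x}) - \langle \mathbf{y}-\mathbf{x}, \nabla f(\mathbf{x}) + \mathbf{\hat s}\rangle}{\vertiii{\mathbf{y}-\mathbf{x}}} = \liminf_{\mathbf{y}\to\mathbf{x},\mathbf{y}\neq\mathbf{x}} \frac{g(\mathbf{y}) - g(\mathbf{x}) - \langle \mathbf{y}-\mathbf{x}, \mathbf{\hat s}\rangle}{\vertiii{\mathbf{y}-\mathbf{x}}} \geq 0,
\]
so $\nabla f(\mathbf{x}) + \mathbf{\hat s} \in \hat\partial \Psi(\mathbf{x})$; this gives $\nabla f(\mathbf{x}) + \hat\partial g(\mathbf{x}) \subseteq \hat\partial \Psi(\mathbf{x})$. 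The reverse inclusion follows by symmetry: apply the same argument to $g = \Psi + (-f)$, noting $-f$ is also differentiable with gradient $-\nabla f$. Hence $\hat\partial \Psi(\mathbf{x}) = \nabla f(\mathbf{x}) + \hat\partial g(\mathbf{x})$.

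Then I would promote this to the limiting subdifferential. Given $\mathbf{s} \in \partial g(\mathbf{x})$, take sequences $\mathbf{x}^k \to \mathbf{x}$ with $g(\mathbf{x}^k) \to g(\mathbf{x})$ and $\mathbf{s}^k \in \hat\partial g(\mathbf{x}^k)$, $\mathbf{s}^k \to \mathbf{s}$. Continuity of $f$ gives $f(\mathbf{x}^k) \to f(\mathbf{x})$, hence $\Psi(\mathbf{x}^k)\to\Psi(\mathbf{x})$; continuity of $\nabla f$ gives $\nabla f(\mathbf{x}^k) \to \nabla f(\mathbf{x})$, so $\nabla f(\mathbf{x}^k) + \mathbf{s}^k \to \nabla f(\mathbf{x}) + \mathbf{s}$; and by the Fréchet identity $\nabla f(\mathbf{x}^k) + \mathbf{s}^k \in \hat\partial \Psi(\mathbf{x}^k)$. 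Therefore $\nabla f(\mathbf{x}) + \mathbf{s} \in \partial \Psi(\mathbf{x})$, which shows $\nabla f(\mathbf{x}) + \partial g(\mathbf{x}) \subseteq \partial \Psi(\mathbf{x})$; the reverse inclusion again follows by symmetry via $g = \Psi - f$. The main obstacle is being careful that the passage to limits in the limiting subdifferential uses precisely the hypotheses stated — continuity (hence boundedness near $\mathbf{x}$) and continuous differentiability of $f$ — and that one invokes the Fréchet-level equality at the perturbed points $\mathbf{x}^k$, not just at $\mathbf{x}$; the rest is routine and in fact this is a classical result, so I would likely just cite \citep[Exercise 8.8]{VarAnalRockafellar} rather than reproduce the argument in full.
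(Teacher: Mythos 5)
Your proposal is correct, and in fact the paper offers no proof of this proposition at all --- it is stated as a known result with a citation to Rockafellar--Wets, exactly the Exercise~8.8 you end up invoking. Your two-stage argument (establish the identity for the Fr\'echet subdifferential via the first-order expansion of $f$, then pass to the limiting subdifferential along the defining sequences, using symmetry via $g=\Psi-f$ for the reverse inclusions) is the standard proof of that exercise. One point worth keeping: you correctly observe that the limiting-level identity needs $\nabla f(\mathbf{x}^k)\to\nabla f(\mathbf{x})$, i.e.\ continuous (or strict) differentiability rather than the bare differentiability stated in the proposition; this is harmless here since $f$ is assumed continuously differentiable throughout the paper, but it is a genuine hypothesis that the proposition's wording elides.
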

We also have the following separability result for the subdifferential:
\begin{proposition}\label{prop:separability}{\textbf{Separability of the subdifferential \citep{VarAnalRockafellar}.}} Suppose that $f$ is continuously differentiable and that $g$ is separable in $\Hi = \Hi_1 \times \ldots \times\Hi_L$. Then for all $\mathbf{x} = (x_1,\ldots,x_L)\in \Hi_1 \times \ldots \times\Hi_L$ we have
    \begin{equation*}
    \partial \Psi(\mathbf{x}) = (\nabla_1 f(\mathbf{x}) + \partial(\phi_1 \circ \psi_1)(x_1)) \times \ldots \times (\nabla_L f(\mathbf{x}) +\partial(\phi_L \circ \psi_L)(x_L)).
    \end{equation*}
    \end{proposition}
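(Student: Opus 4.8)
\emph{Proof proposal.} The plan is to peel off the smooth part of $\Psi$ with the chain rule and then prove a product rule for the limiting subdifferential of the separable term $g$. Applying the Chain rule stated above (legitimate since $f$ is continuously differentiable) gives $\partial\Psi(\mathbf x)=\nabla f(\mathbf x)+\partial g(\mathbf x)$, and since $\nabla f(\mathbf x)=(\nabla_1 f(\mathbf x),\dots,\nabla_L f(\mathbf x))$ it remains only to show
\begin{equation*}
\partial g(\mathbf x)=\partial(\phi_1\circ\psi_1)(x_1)\times\cdots\times\partial(\phi_L\circ\psi_L)(x_L).
\end{equation*}
Write $g_\ell:=\phi_\ell\circ\psi_\ell$; each $g_\ell$ is lower semicontinuous, as the composition of the lsc map $\psi_\ell$ with the continuous nondecreasing map $\phi_\ell$, and bounded below by $\phi_\ell(0)$. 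It suffices to treat $L=2$ and then conclude by induction on $L$.

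\emph{Step 1 (Fr\'echet product rule).} I would first establish $\hat\partial g(x_1,x_2)=\hat\partial g_1(x_1)\times\hat\partial g_2(x_2)$ directly from the definition. In the quotient defining $\hat\partial g$ at $\mathbf x=(x_1,x_2)$ for a candidate $\hat{\mathbf s}=(s_1,s_2)$, the numerator separates as $\bigl(g_1(y_1)-g_1(x_1)-\langle y_1-x_1,s_1\rangle\bigr)+\bigl(g_2(y_2)-g_2(x_2)-\langle y_2-x_2,s_2\rangle\bigr)$, while $\max(\Vert y_1-x_1\Vert,\Vert y_2-x_2\Vert)\le\vertiii{\mathbf y-\mathbf x}\le\Vert y_1-x_1\Vert+\Vert y_2-x_2\Vert$. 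For the inclusion ``$\supseteq$'', bound each bracket below by $-\varepsilon$ times the corresponding block displacement and use the upper bound on $\vertiii{\mathbf y-\mathbf x}$ to get a liminf at least $-\sqrt2\,\varepsilon$, hence $\ge 0$. For ``$\subseteq$'', restrict $\mathbf y$ to vary in a single block, say $\mathbf y=(y_1,x_2)$: the quotient then reduces exactly to the one defining $\hat\partial g_1(x_1)$, and the liminf over this sub-family dominates the full liminf, so $s_1\in\hat\partial g_1(x_1)$; symmetrically $s_2\in\hat\partial g_2(x_2)$.

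\emph{Step 2 (passage to the limiting subdifferential).} For ``$\subseteq$'', let $\hat{\mathbf s}\in\partial g(\mathbf x)$ with $\mathbf x\in\dom g$ (if $\mathbf x\notin\dom g$ both sides are empty), and take $(\mathbf x^k,\hat{\mathbf s}^k)\to(\mathbf x,\hat{\mathbf s})$ with $\hat{\mathbf s}^k\in\hat\partial g(\mathbf x^k)$ and $g(\mathbf x^k)\to g(\mathbf x)$. By Step 1, $\hat s_\ell^k\in\hat\partial g_\ell(x_\ell^k)$. The crux is to upgrade $g(\mathbf x^k)\to g(\mathbf x)$ into $g_\ell(x_\ell^k)\to g_\ell(x_\ell)$ for each $\ell$: lower semicontinuity gives $\liminf_k g_\ell(x_\ell^k)\ge g_\ell(x_\ell)$ for every $\ell$, and since $\sum_\ell g_\ell(x_\ell^k)\to\sum_\ell g_\ell(x_\ell)$ with every $g_\ell(x_\ell)$ finite, a subsequence-and-contradiction argument forces equality in each liminf, i.e.\ $g_\ell(x_\ell^k)\to g_\ell(x_\ell)$. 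Hence $\hat s_\ell\in\partial g_\ell(x_\ell)$. For ``$\supseteq$'', given $s_\ell\in\partial g_\ell(x_\ell)$ for all $\ell$, pick for each $\ell$ sequences $x_\ell^k\to x_\ell$, $s_\ell^k\to s_\ell$ with $s_\ell^k\in\hat\partial g_\ell(x_\ell^k)$ and $g_\ell(x_\ell^k)\to g_\ell(x_\ell)$; assembling them, Step 1 yields $(s_1^k,\dots,s_L^k)\in\hat\partial g(x_1^k,\dots,x_L^k)$, and the finite sum $\sum_\ell g_\ell(x_\ell^k)$ converges to $g(\mathbf x)$, so $(s_1,\dots,s_L)\in\partial g(\mathbf x)$. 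Combining with the chain rule finishes the proof.

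The main obstacle is the upgrade step inside Step 2 — transferring the ``function values converge'' requirement in the definition of the limiting subdifferential from the aggregate $g$ down to each summand $g_\ell$ — which is precisely where lower semicontinuity, hence the standing assumptions on $\psi_\ell$ and $\phi_\ell$, is used; I would isolate it as a one-line lemma or simply invoke the separability result for limiting subgradients in \citep{VarAnalRockafellar}. The chain-rule reduction and the Fr\'echet product rule are routine bookkeeping with the difference quotient.
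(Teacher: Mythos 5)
Your proof is correct. The paper does not prove this proposition at all — it is quoted from Rockafellar--Wets (it is essentially their separability result for subgradients, Prop.~10.5, combined with the smooth-plus-nonsmooth sum rule already stated as the Chain rule proposition) — so there is no in-paper argument to compare against; your derivation is a sound self-contained reconstruction of the cited fact. The two places that genuinely need care are both handled correctly: the restriction of the difference quotient to one block at a time for the ``$\subseteq$'' inclusion of the Fr\'echet product rule, and the transfer of $g(\mathbf{x}^k)\to g(\mathbf{x})$ to $g_\ell(x_\ell^k)\to g_\ell(x_\ell)$ in the limiting step, which works precisely because each $g_\ell=\phi_\ell\circ\psi_\ell$ is lsc and bounded below by $\phi_\ell(0)$, so the subsequence-and-contradiction argument closes.
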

The last property we need was proven in \citep{repetti2021variable} and allows us to express the subdifferential of $(\phi \circ \psi)$ in terms of the subdifferential of $\psi$.
\begin{proposition}{\citep[Proposition 2.10]{repetti2021variable}} \label{prop:chain_rule}
Let $\psi:\Hi \rightarrow [0,+\infty]$ be a proper function, continuous on its domain, and let $\phi:[0,+\infty] \rightarrow (-\infty,+\infty]$ be a concave, strictly increasing and differentiable function. We further assume that $(\phi' \circ \psi)$ is continuous on its domain. Then, for every $\mathbf{y} \in \Hi$, we have $\partial(\phi \circ \psi)(\mathbf{y}) = (\phi' \circ \psi(\mathbf{y}) )\partial \psi(\mathbf{y})$.
\end{proposition}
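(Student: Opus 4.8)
The plan is to establish the identity first for the Fréchet subdifferential $\hat\partial$ and then lift it to the limiting subdifferential $\partial$ by a standard limiting argument. I would fix $\mathbf{y}\in\dom\psi$ and set $s_0=\psi(\mathbf{y})$, $\alpha=(\phi'\circ\psi)(\mathbf{y})=\phi'(s_0)$; a preliminary observation is that $\alpha>0$, since $\phi$ differentiable and strictly increasing gives $\phi'\geq 0$, and if $\phi'(s_0)$ vanished then, $\phi'$ being moreover nonincreasing by concavity, $\phi'$ would vanish on all of $[s_0,+\infty)$, forcing $\phi$ to be constant there and contradicting strict monotonicity. For $\mathbf{y}\notin\dom\psi$ both sides are empty (with the convention $(\phi\circ\psi)(\mathbf{y})=+\infty$ there, so $\dom(\phi\circ\psi)=\dom\psi$), so I may assume $\mathbf{y}\in\dom\psi$; moreover in every $\liminf$ below I may restrict the variable $\mathbf{z}$ to $\dom\psi$, the remaining points contributing $+\infty$.

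For the Fréchet step I would compare difference quotients at $\mathbf{y}$ through the two tangent-line inequalities of concavity, namely $\phi(\psi(\mathbf{z}))-\phi(\psi(\mathbf{y}))\le\alpha\bigl(\psi(\mathbf{z})-\psi(\mathbf{y})\bigr)$ (tangent at $s_0$) and $\phi(\psi(\mathbf{z}))-\phi(\psi(\mathbf{y}))\ge\phi'(\psi(\mathbf{z}))\bigl(\psi(\mathbf{z})-\psi(\mathbf{y})\bigr)$ (tangent at $\psi(\mathbf{z})$). The first inequality yields $\hat\partial(\phi\circ\psi)(\mathbf{y})\subseteq\alpha\,\hat\partial\psi(\mathbf{y})$: for $\hat s$ in the left-hand side, dividing by $\alpha\vertiii{\mathbf{z}-\mathbf{y}}$ shows the $\psi$-difference-quotient along $\hat s/\alpha$ dominates $\alpha^{-1}$ times the $(\phi\circ\psi)$-difference-quotient along $\hat s$, whose $\liminf$ is nonnegative. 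For the reverse inclusion I would start from $\hat r\in\hat\partial\psi(\mathbf{y})$ and use the second inequality to bound the $(\phi\circ\psi)$-difference-quotient along $\alpha\hat r$ below by $\phi'(\psi(\mathbf{z}))$ times the $\psi$-difference-quotient along $\hat r$ plus the correction $\bigl(\phi'(\psi(\mathbf{z}))-\alpha\bigr)\langle\mathbf{z}-\mathbf{y},\hat r\rangle/\vertiii{\mathbf{z}-\mathbf{y}}$; as $\mathbf{z}\to\mathbf{y}$ the correction vanishes (Cauchy--Schwarz for the quotient, continuity of $\phi'\circ\psi$ together with $\psi(\mathbf{z})\to\psi(\mathbf{y})$ for the factor), while $\phi'(\psi(\mathbf{z}))\to\alpha>0$ keeps the $\liminf$ of the main term nonnegative. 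Altogether $\hat\partial(\phi\circ\psi)(\mathbf{y})=(\phi'\circ\psi)(\mathbf{y})\,\hat\partial\psi(\mathbf{y})$.

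To pass to the limiting subdifferential I would take $\hat s\in\partial(\phi\circ\psi)(\mathbf{y})$ with approximating data $\mathbf{y}^k\to\mathbf{y}$, $(\phi\circ\psi)(\mathbf{y}^k)\to(\phi\circ\psi)(\mathbf{y})$, $\hat s^k\in\hat\partial(\phi\circ\psi)(\mathbf{y}^k)$, $\hat s^k\to\hat s$. Each $\mathbf{y}^k\in\dom\psi$, and since $\phi$ is continuous and strictly increasing its inverse is continuous on the range of $\phi$, so $\psi(\mathbf{y}^k)=\phi^{-1}\bigl((\phi\circ\psi)(\mathbf{y}^k)\bigr)\to\psi(\mathbf{y})$, whence $\phi'(\psi(\mathbf{y}^k))\to\alpha>0$ by continuity of $\phi'\circ\psi$. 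Applying the Fréchet identity at each $\mathbf{y}^k$, the vectors $\hat r^k:=\hat s^k/\phi'(\psi(\mathbf{y}^k))$ lie in $\hat\partial\psi(\mathbf{y}^k)$ and converge to $\hat s/\alpha$, which together with $\psi(\mathbf{y}^k)\to\psi(\mathbf{y})$ shows $\hat s/\alpha\in\partial\psi(\mathbf{y})$, i.e. $\hat s\in(\phi'\circ\psi)(\mathbf{y})\,\partial\psi(\mathbf{y})$. The reverse inclusion is symmetric, the Fréchet identity turning approximating data for $\hat r\in\partial\psi(\mathbf{y})$ into approximating data for $\alpha\hat r$ (using $\phi'(\psi(\mathbf{y}^k))\to\alpha$ and continuity of $\phi$ for the function-value convergence). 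I expect this last transfer to be the delicate point: one has to be sure that the functional-value convergence built into the definition of $\partial\psi$ is precisely the one recoverable from that of $\partial(\phi\circ\psi)$, which is exactly where strict monotonicity of $\phi$ (hence continuity of $\phi^{-1}$) is indispensable; by comparison, the $\liminf$ bookkeeping of the Fréchet step — notably the fact that $\liminf_k a_k b_k\ge 0$ whenever $a_k\to a>0$ and $\liminf_k b_k\ge 0$, even though the quotients $b_k$ need not be bounded below — is routine.
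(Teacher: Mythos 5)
The paper does not prove this proposition: it is imported verbatim from \citep[Proposition 2.10]{repetti2021variable}, so there is no in-paper argument to compare against. Your proof is correct and is, up to presentation, the standard route to this result (and the one the cited reference takes): the two tangent-line inequalities of concavity sandwich the difference quotients to give the Fréchet identity $\hat\partial(\phi\circ\psi)(\mathbf{y})=(\phi'\circ\psi)(\mathbf{y})\,\hat\partial\psi(\mathbf{y})$, with strict positivity of $\phi'\circ\psi$ (which you correctly derive from concavity plus strict monotonicity), and the limiting subdifferential then follows by transferring approximating sequences in both directions, using continuity of $\phi$, of $\phi^{-1}$, and of $\phi'\circ\psi$ to match the required function-value convergences. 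The only genuinely delicate points --- the convention for $(\phi\circ\psi)$ outside $\operatorname{dom}\psi$ and the fact that $\liminf a_k b_k\ge 0$ when $a_k\to a>0$ and $\liminf b_k\ge 0$ --- are both identified and handled in your write-up.
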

\paragraph{The Kurdyka-Łojasiewicz (KŁ) property.}
A specific class of concave and continuous functions, called desingularizing functions, are of at the heart of the KŁ framework to handle non-convexity. 
\begin{definition}{\textbf{Concave and continuous functions \citep{bolte2014proximal}.}} \label{def:desingularizing} Let $\eta \in (0,+\infty]$. We denote by $\Phi_\eta$ the class of all concave and continuous functions $\varphi : [0,\eta)\rightarrow\RR_+$ that satisfy the following conditions:
\begin{enumerate}
    \item $\varphi(0)=0$,
    \item $\varphi$ is $C^1$ on $(0,\eta)$ and continuous at $0$,
    \item for all $s \in (0,\eta)$, $\varphi'(s)>0$.
\end{enumerate}
\end{definition}
\noindent This desingularizing function is then used to characterize how sharp can the function $\Psi$ be if it satisfies the Kurdyka-Łojasiewicz property.
\begin{definition}{\textbf{Kurdyka-Łojasiewicz property \citep{bolte2014proximal}.}} \label{def:kl}
    Let $\Psi : \Hi \rightarrow (-\infty, + \infty]$ be proper and lower semicontinuous.
    \begin{enumerate}
        \item The function $\Psi$ is said to have the \textit{KŁ property} at $\mathbf{\tilde{u}}\in \dom \partial\Psi$ if there exist $\eta \in (0,+\infty]$, a neighborhood $U$ of $\mathbf{\tilde{u}}$ and a function $\varphi \in \Phi_\eta$  such that for all 
        \begin{equation*}
            \mathbf u \in U \cap [\Psi(\mathbf{\tilde{u}}) < \Psi(\mathbf{u}) < \Psi(\mathbf{\tilde{u}}) + \eta],
        \end{equation*}
        the following inequality holds
        \begin{equation*}
            \varphi'(\Psi(\mathbf{u}) - \Psi(\mathbf{\tilde{u}})) \mathrm{dist} (0,\partial \Psi(\mathbf{u})) \geq 1.
        \end{equation*}
        Recall that $\mathrm{dist}(\mathbf{x},\partial \psi (\mathbf{u})) = \inf_{\mathbf{s} \in \partial \psi(\mathbf{u})} \vertiii{\mathbf s- \mathbf{x}}$.
        \item If $\Psi$ satisfies the KŁ property at each point of $\dom \partial\Psi$, then $\Psi$ is called a KŁ function.
    \end{enumerate}
\end{definition}

The following lemma presents the KŁ property in a practical form by unifying the notion of neighborhood across its level curves. 
\begin{lemma}{\textbf{Uniformized KŁ property \citep{bolte2014proximal}.}} \label{lm:unif_KL}Let $\Omega$ be a compact subset of $\Hi$. Let $\Psi:\Hi \to (-\infty,+\infty]$ be a proper and lower semicontinuous function, constant on $\Omega$ and satisfying the KŁ inequality on $\Omega$. Then there exists $\epsilon>0, \eta>0$, and $\varphi\in\Phi_\eta$ such that for all $\tilde{u}\in \Omega$ and all $u$ satisfying
\begin{equation*}
    \mathrm{dist} (\mathbf u,\Omega) < \epsilon \text{ and } [\Psi(\mathbf{\tilde{u}}) < \Psi(\mathbf{u}) < \Psi(\mathbf{\tilde{u}}) + \eta]
\end{equation*}
one has
\begin{equation}
    \varphi'(\Psi(\mathbf{u}) - \Psi(\mathbf{\tilde{u}})) \mathrm{dist} (0, \partial \Psi(\mathbf{u})) \geq 1. \label{eq5:inequality_KL}
\end{equation}
\end{lemma}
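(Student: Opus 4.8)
The plan is to carry out the classical compactness argument behind this kind of uniformization (as in \citep{bolte2014proximal}). Since $\Psi$ is constant on $\Omega$, denote its common value there by $\zeta$. For each $\mathbf{\tilde u}\in\Omega$, the KŁ property of $\Psi$ at $\mathbf{\tilde u}$ (Definition \ref{def:kl}) supplies $\eta_{\mathbf{\tilde u}}\in(0,+\infty)$, a radius $\epsilon_{\mathbf{\tilde u}}>0$ (obtained by shrinking the neighborhood $U$ to a ball), and $\varphi_{\mathbf{\tilde u}}\in\Phi_{\eta_{\mathbf{\tilde u}}}$ such that $\varphi_{\mathbf{\tilde u}}'(\Psi(\mathbf u)-\zeta)\,\mathrm{dist}(0,\partial\Psi(\mathbf u))\ge 1$ whenever $\vertiii{\mathbf u-\mathbf{\tilde u}}<\epsilon_{\mathbf{\tilde u}}$ and $\zeta<\Psi(\mathbf u)<\zeta+\eta_{\mathbf{\tilde u}}$. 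The balls $\{B(\mathbf{\tilde u},\epsilon_{\mathbf{\tilde u}}/2)\}_{\mathbf{\tilde u}\in\Omega}$ form an open cover of the compact set $\Omega$, so I would extract a finite subcover, with centres $\mathbf{\tilde u}_1,\dots,\mathbf{\tilde u}_p$ and radii $\epsilon_1/2,\dots,\epsilon_p/2$.

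Next I would assemble the uniform constants. Set $\epsilon:=\tfrac12\min_{1\le i\le p}\epsilon_i$ and $\eta:=\min_{1\le i\le p}\eta_{\mathbf{\tilde u}_i}$, both positive. For the single desingularizing function I would take $\varphi:=\sum_{i=1}^p\varphi_{\mathbf{\tilde u}_i}$, viewed on $[0,\eta)$ (each summand is defined there since $\eta\le\eta_{\mathbf{\tilde u}_i}$). As a finite sum of restrictions of elements of the classes $\Phi_{\eta_{\mathbf{\tilde u}_i}}$, the function $\varphi$ is concave, continuous, $C^1$ on $(0,\eta)$, satisfies $\varphi(0)=0$, and has $\varphi'>0$ on $(0,\eta)$; hence $\varphi\in\Phi_\eta$. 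Crucially, since every summand has a positive derivative, $\varphi'(s)\ge\varphi_{\mathbf{\tilde u}_i}'(s)$ for each $i$ and all $s\in(0,\eta)$. This is the only step requiring a bit of thought: a pointwise maximum would also dominate each $\varphi_{\mathbf{\tilde u}_i}'$, but need not be concave, which is why summation is the natural device.

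Finally I would verify \eqref{eq5:inequality_KL} with these $\epsilon,\eta,\varphi$. Let $\mathbf u$ satisfy $\mathrm{dist}(\mathbf u,\Omega)<\epsilon$ and $\zeta<\Psi(\mathbf u)<\zeta+\eta$. Compactness of $\Omega$ ensures the distance is attained at some $\mathbf{\tilde u}\in\Omega$, and $\mathbf{\tilde u}$ lies in one of the covering balls, say $\mathbf{\tilde u}\in B(\mathbf{\tilde u}_i,\epsilon_i/2)$. Then the triangle inequality gives $\vertiii{\mathbf u-\mathbf{\tilde u}_i}\le\vertiii{\mathbf u-\mathbf{\tilde u}}+\vertiii{\mathbf{\tilde u}-\mathbf{\tilde u}_i}<\epsilon+\epsilon_i/2\le\epsilon_i$, while $\Psi(\mathbf{\tilde u}_i)=\zeta$ and $\zeta<\Psi(\mathbf u)<\zeta+\eta\le\zeta+\eta_{\mathbf{\tilde u}_i}$; hence the local inequality at $\mathbf{\tilde u}_i$ applies and yields $\varphi_{\mathbf{\tilde u}_i}'(\Psi(\mathbf u)-\zeta)\,\mathrm{dist}(0,\partial\Psi(\mathbf u))\ge 1$. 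Combining with $\varphi'(\Psi(\mathbf u)-\zeta)\ge\varphi_{\mathbf{\tilde u}_i}'(\Psi(\mathbf u)-\zeta)$ gives \eqref{eq5:inequality_KL}, completing the proof. The main obstacle, such as it is, is purely the bookkeeping of merging finitely many local triples $(\epsilon_{\mathbf{\tilde u}},\eta_{\mathbf{\tilde u}},\varphi_{\mathbf{\tilde u}})$ into one — the construction of $\varphi$ above being the only non-mechanical ingredient.
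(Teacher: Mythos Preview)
Your argument is correct and is precisely the standard compactness proof from \citep{bolte2014proximal} (Lemma~6 there): extract a finite subcover from the local KŁ balls, take the minimum of the $\epsilon_i$ and $\eta_i$, and sum the desingularizing functions so that the derivative of the sum dominates each $\varphi_{\mathbf{\tilde u}_i}'$. The present paper does not supply its own proof of this lemma but simply cites it as a known result, so there is nothing further to compare.
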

\begin{remark}
The KŁ property is satisfied by a large class of functions, e.g., all those that can be expressed with $\log-\exp$, or semi-algebraic functions. We refer the reader to \citep{attouch2009convergence,attouch2010proximal,attouch2013convergence,bolte2010characterizations,bolte2014proximal} for a more detailed presentation of such functions.
\end{remark}
\subsection{Variable metric optimization}
Changing the metric used to compute the proximity operator is a common technique to accelerate the convergence of proximal algorithms in practice. The metric is obtained choosing appropriate symmetric, positive definite matrices \citep{chouzenoux2014variable,chouzenoux2016block}. We define the weighted norms $\vertiii{ \cdot }_A$ associated with a symmetric positive definite matrix $A\in\RR^{N \times N}$ as follows:
\begin{equation*}
    (\forall \mathbf{x} \in \Hi) \quad \vertiii{\mathbf{x}}_A = \langle\langle \mathbf x,A\mathbf{x}\rangle\rangle^{1/2},
\end{equation*}
and for each block $\ell\in \{1,\ldots,L\}$,
\begin{equation*}
    (\forall {x}_\ell \in \Hi_\ell) \quad \Vert x_\ell \Vert_{A_\ell} = \langle x_\ell,A_\ell x_\ell \rangle^{1/2}, 
\end{equation*}
where $A_\ell$ is the restriction of $A$ to the block $\ell$. The Loewner partial order is defined as follows on $\Hi$, and hence on $\Hi_\ell$ for all $\ell \in \{1,\ldots,L\}$:
\begin{equation*}
   A_1 \preceq A_2 \Leftrightarrow (\forall x_\ell \in \Hi_\ell), \quad \langle x_\ell, A_1 x_\ell \rangle \leq \langle x_\ell, A_2 x_\ell  \rangle.
\end{equation*}

\noindent The proximity operator relative to a metric $A$ is defined as follows:
\begin{definition}
\textbf{Proximity operator \citep{rockafellar1970convex,bauschke2017}.}
Let $g:\Hi \rightarrow (-\infty,+\infty]$ be a proper, lower semi-continuous function. Let $A: \Hi \rightarrow \Hi$ a positive definite matrix. The proximity operator of $g$ at $\mathbf{x} \in \Hi$, $\prox_g^A: \Hi \rightrightarrows \Hi$ is defined as:
\begin{equation}
    \prox_{g}^A(\mathbf{x}) := \Argmin_{\mathbf{y} \in \Hi} \left\{ g(\mathbf{y}) + \frac{1}{2} \vertiii{ \mathbf{x}-\mathbf{y}}_A^2 \right\}.
\end{equation}
\end{definition}
$\prox_g^A$ at $\mathbf{x}$ is a non-empty set whenever $g$ is proper and lower semicontinuous and bounded from below.

\section{Our block-coordinate descent framework for composite non-convex optimization.} \label{sec:framework}
In this section, we present our general variable metric block-coordinate descent framework to tackle problems of the form \eqref{eq:optim_general}. 
\subsection{Assumptions}
The assumptions on Problem \eqref{eq:optim_general} are similar as in \citep{repetti2021variable}, but we state them here in a block version, hence they differ from the presentation done in \cite{repetti2021variable}.
\begin{assumption}\label{ass:1} 
\begin{enumerate}[(ii)]
    \item The function $f$ is continuously differentiable, and there exist $(\beta_{\ell,j})_{\ell,j \in \{1,\ldots,L\}} \in \RR_{++}$ such that
        \begin{align*}
            (\forall \ell,j\in\{1,\ldots,L\}), &(\forall \mathbf{x} \in \Hi),(\forall v_j \in \Hi_j) \nonumber\\
            & \Vert \nabla_\ell f(\mathbf{x}+(0,\ldots,0,v_j,0,\ldots,0)) - \nabla_\ell f(\mathbf{x}) \Vert \leq \beta_{\ell,j} \Vert v_j \Vert
        \end{align*}
    \item For every $\ell \in \{1,\ldots,L\}$, $\psi_\ell:\Hi_\ell \rightarrow [0,+\infty]$ is a convex, proper and lower-semicontinuous function. It is also Lipschitz-continuous on its domain.
    \item For every $\ell \in \{1,\ldots,L\}$, $\phi_\ell:[0,+\infty] \rightarrow (-\infty,+\infty]$ is a concave, strictly increasing and differentiable function on $[0,+\infty[$.
    \item For every $\ell \in \{1,\ldots,L\}$, $\phi'_\ell$ is locally Lipschitz continuous on its domain.
\end{enumerate}
\end{assumption}

The block smoothness of $f$ yields a multiple block smoothness \citep{lauga2025BCD}, a common property in the stochastic BCD literature (see for instance \citep{salzo_parallel_2022}).
\begin{proposition}{Multiple block smoothness \citep[Proposition 1]{lauga2025BCD}.} \label{prop:multi_block_smoothness}
    Suppose that Assumption \ref{ass:1} holds. For all $\boldsymbol{\varepsilon} = (\varepsilon_\ell)_{1 \leq \ell \leq L} \in \{0,1\}^L$, there exists $\beta=\sqrt{\sum_{\ell,j=1}^L \varepsilon_j \beta_{\ell,j}^2}>0$ such that for all $1\leq \ell \leq L$, $v_\ell \in \Hi_\ell$ we have
        \begin{equation*}
            (\forall \mathbf{x} \in \Hi) \quad \vertiii{\nabla f(\mathbf{x}+(\varepsilon_\ell v_\ell)_{\ell\in\{1,\ldots,L\}})-\nabla f ( \mathbf{x})  } \leq \beta \vertiii{(\varepsilon_\ell v_\ell)_{\ell\in\{1,\ldots,L\}}}.
        \end{equation*}
\end{proposition}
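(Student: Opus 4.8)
The plan is to derive the multiple-block Lipschitz estimate from the single-block estimates in Assumption \ref{ass:1}(i) by a telescoping argument along the active coordinates, followed by a Cauchy--Schwarz bound. Fix $\boldsymbol{\varepsilon}=(\varepsilon_\ell)_{1\le\ell\le L}\in\{0,1\}^L$, let $J=\{j : \varepsilon_j=1\}$, and write $\mathbf{v}=(\varepsilon_\ell v_\ell)_{1\le\ell\le L}$. I would enumerate $J=\{j_1<\dots<j_m\}$ and define the partial perturbations $\mathbf{x}^{(0)}=\mathbf{x}$ and $\mathbf{x}^{(k)}=\mathbf{x}^{(k-1)}+(0,\dots,0,v_{j_k},0,\dots,0)$ for $k=1,\dots,m$, so that $\mathbf{x}^{(m)}=\mathbf{x}+\mathbf{v}$. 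The point of this decomposition is that consecutive iterates differ in exactly one block, so Assumption \ref{ass:1}(i) applies directly to each increment.

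The first key step is to bound a single block component of the full gradient difference. For a fixed output block $\ell$, writing $\nabla_\ell f(\mathbf{x}+\mathbf{v})-\nabla_\ell f(\mathbf{x})=\sum_{k=1}^m\big(\nabla_\ell f(\mathbf{x}^{(k)})-\nabla_\ell f(\mathbf{x}^{(k-1)})\big)$ and applying the triangle inequality together with Assumption \ref{ass:1}(i), one gets $\Vert\nabla_\ell f(\mathbf{x}+\mathbf{v})-\nabla_\ell f(\mathbf{x})\Vert\le\sum_{k=1}^m\beta_{\ell,j_k}\Vert v_{j_k}\Vert=\sum_{j\in J}\beta_{\ell,j}\Vert v_j\Vert$. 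Then I would square, apply Cauchy--Schwarz in the form $\big(\sum_{j\in J}\beta_{\ell,j}\Vert v_j\Vert\big)^2\le\big(\sum_{j\in J}\beta_{\ell,j}^2\big)\big(\sum_{j\in J}\Vert v_j\Vert^2\big)$, and note $\sum_{j\in J}\Vert v_j\Vert^2=\sum_{j=1}^L\varepsilon_j\Vert v_j\Vert^2=\vertiii{\mathbf{v}}^2$.

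The second key step is to sum over the output block $\ell$. Using $\vertiii{\nabla f(\mathbf{x}+\mathbf{v})-\nabla f(\mathbf{x})}^2=\sum_{\ell=1}^L\Vert\nabla_\ell f(\mathbf{x}+\mathbf{v})-\nabla_\ell f(\mathbf{x})\Vert^2$, the per-block bound gives $\vertiii{\nabla f(\mathbf{x}+\mathbf{v})-\nabla f(\mathbf{x})}^2\le\big(\sum_{\ell=1}^L\sum_{j\in J}\beta_{\ell,j}^2\big)\vertiii{\mathbf{v}}^2=\big(\sum_{\ell,j=1}^L\varepsilon_j\beta_{\ell,j}^2\big)\vertiii{\mathbf{v}}^2$. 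Taking square roots and setting $\beta=\sqrt{\sum_{\ell,j=1}^L\varepsilon_j\beta_{\ell,j}^2}$ yields the claim; note $\beta>0$ since all $\beta_{\ell,j}$ are strictly positive and $\boldsymbol{\varepsilon}\ne\mathbf{0}$ in the nontrivial case (if $\boldsymbol{\varepsilon}=\mathbf{0}$ the statement is vacuous as both sides vanish).

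I do not anticipate a serious obstacle here; this is essentially a bookkeeping argument. The only point requiring a little care is the indexing in the telescoping sum — making sure the intermediate points $\mathbf{x}^{(k)}$ are defined so that each increment touches a single block and that the relabelling $j_k\leftrightarrow j\in J$ is handled cleanly — and the correct placement of the weights $\varepsilon_j$ (on the input index $j$, not the output index $\ell$) when passing from the double sum back to $\vertiii{\mathbf{v}}^2$. Everything else is the triangle inequality, Assumption \ref{ass:1}(i), and one application of Cauchy--Schwarz.
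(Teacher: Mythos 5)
Your proof is correct. The paper does not actually supply a proof of this proposition — it imports it verbatim from \citep[Proposition 1]{lauga2025BCD} — and your telescoping-along-active-blocks argument followed by Cauchy--Schwarz is exactly the standard derivation one expects behind that citation; the constant $\beta=\sqrt{\sum_{\ell,j}\varepsilon_j\beta_{\ell,j}^2}$ comes out matching the statement, and your handling of the degenerate case $\boldsymbol{\varepsilon}=\mathbf{0}$ is the right reading of the (slightly abusive) claim that $\beta>0$.
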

We also need an assumption on the updates of the variables, which is a generalization of \citep[Assumption 3]{lauga2025BCD}.
\begin{assumption} \label{ass:3} 
In each outer iteration $k$, each block has been updated at least once. Formally, for all $k \in \mathbb{N}$, $i\in\{0,\ldots,I_k-1\}$ denote by $J_i^k$ the set of the blocks updated at iteration $(k,i)$: $J_i^k =\{\ell \;|\; \varepsilon_{i,\ell}^k=1\} \subseteq \{1,\ldots,L\}$. For all $j$
\begin{equation}
    \bigcup_{i=0}^{I_k-1} J_i^k = \{1,\ldots,L\}.
\end{equation}
\end{assumption}

\begin{remark}
    In typical deterministic BCD algorithms, each block needs to be updated an infinite number of times to obtain convergence in the limit, i.e., there exists a finite $K\in\mathbb{N}\backslash\{0\}$ such that for every $K$ iterations, each block has been updated at least once \citep{lauga2025BCD,chouzenoux2016block}. Therefore, the value of $I_k$ should be superior or equal to $K$. For instance in a cyclic algorithm, each block is updated only once, therefore $I_k \propto L$.
\end{remark}
\paragraph{Majorization of $f$ through variable metric.}
To accomodate the change of the metric, a quadratic majorant of $f$ is constructed at each iteration with positive definite matrices, with respect to the selected blocks to update.
\begin{assumption} \label{ass:2}
For any point $\mathbf{y}\in\Hi$, and any $J \subseteq \{1,\ldots,L\}$ we construct the following quadratic upper bound of $f$ at $\mathbf{y}$   
\begin{enumerate}[(ii)]
        \item We have
        \begin{align}
            (\forall x \in \mathbin{\scalebox{1.6}{$\times$}}_{\ell\in J}\Hi_\ell), & \\ & f(x \mid y_{\overline{J}}) \leq f(\mathbf{y}) + \sum_{\ell \in J}\left\langle x_\ell - y_{\ell}, \nabla_{\ell} f(\mathbf y) \right\rangle 
    + \frac{1}{2} \left\langle x_\ell - y_{\ell}, A_{y,\ell}(x_\ell - y_{\ell}) \nonumber\right\rangle
        \end{align}
        \item There exists $(\underline{\nu}, \overline{\nu}) \in \mathbb{R}_{++}^2$ such that for all $\mathbf y \in \Hi$, $J \subseteq \{1,\ldots,L\}$, the associated metric $(A_{y,\ell})_{\ell \in J}$ is such that $\underline{\nu} \Id \preccurlyeq A_{y,\ell} \preccurlyeq \overline{\nu} \Id$.
    \end{enumerate}
\end{assumption}
\noindent The block-coordinate variable metric can be constructed from the full metric \citep{chouzenoux2016block}. Suppose that, for every $\mathbf{x}' \in \mathrm{dom}\, g$. A quadratic majorant function of 
  $f$ at $\mathbf{x}'$ is given by
  \begin{equation}
  (\forall {x} \in \mathbb{R}^N) \quad 
  Q(\mathbf{x} \mid \mathbf{x}') := f(\mathbf{x}') + \left\langle \mathbf{x} - \mathbf{x}', \nabla f(\mathbf{x}') \right\rangle 
  + \frac{1}{2} \left\langle \mathbf{x} - \mathbf{x}', B(\mathbf{x}')( \mathbf{x} - \mathbf{x}') \right\rangle, \label{eq:construct_variable_metric}
  \end{equation}
  where $B(\mathbf{x}') \in \mathbb{R}^{N \times N}$ is a symmetric positive definite matrix. Then, Assumption \ref{ass:2} is satisfied for $A_{\ell} = 
  \left(B(\mathbf{x})_{(n, n')}\right)_{(n, n') \in J_{\ell}^2}$, where, for every 
  $(n, n') \in \{1, \ldots, N\}^2$, $B(\mathbf{x})_{(n, n')}$ denotes the $(n, n')$ element of matrix 
  $B(\mathbf{x})$. Moreover, if there exists $(\nu,  \overline{\nu}) \in (0, +\infty)^2$ such that, for every 
  ${x}' \in \mathrm{dom}\, g$, $\nu I_N \preceq B(\mathbf{x}') \preceq  \overline{\nu} I_N$, then 2. is satisfied.

If $\mathrm{dom}\, g$ is convex, the existence of the majorant function is ensured when $f$ has a Lipschitz continuous gradient 
  (see \citep[Lem.\ 3.1]{chouzenoux2014variable}).
\begin{remark}
    This assumption is satisfied immediately as a consequence of the descent Lemma \citep{bertsekas1999nonlinear,ortega2000iterative} if for every $k \in \mathbb{N}$, $i \in \{0,\ldots,I_k\}$, $\ell \in \{1,\ldots,L\}$, $A_{i,\ell}^k$ is chosen as $\beta_\ell \Id$.
\end{remark}
We also need to ensure that $I_k$ will remain finite for all $k$.
\begin{assumption} \label{ass:4}
   There exists $\bar{I} \in \mathbb{N}^*$, such that for every $k \in \mathbb{N}$, $0<I_k \leq \bar{I} <+\infty$.
\end{assumption}
And that the iterates produced by the algorithm are bounded. 
\begin{assumption} \label{ass:bounded}
    $\Psi$ is bounded from below, and the iterates generated by the algorithm are bounded.
\end{assumption}
This assumption is satisfied whenever $\Psi$ is coercive, i.e., $\lim_{\Vert \mathbf{x} \Vert \rightarrow +\infty} \Psi(\mathbf{x}) = +\infty$. For the application we have in mind, $\Psi$ is unfortunately not coercive, hence this assumption which is reasonable in practice.
\paragraph{Majorization of the function $g$} 
We conclude the assumptions by defining the majorant of the function $g$. 
With Assumption \ref{ass:1}, the function $g$ can be upper bounded by a majorant $q^g(\cdot, \mathbf{y})$ defined at point $\mathbf{y}\in\Hi$:
\begin{equation}
    (\forall \mathbf{x} \in \Hi) \quad \left\{\begin{array}{ll}
        g(\mathbf{x}) & \leq q^g(\mathbf{x}, \mathbf{y})  = \sum_{\ell=1}^L q^g_\ell(x_\ell,y_\ell),\\
       g(\mathbf{y}) & = q^g(\mathbf{y},\mathbf{y}).
    \end{array}\right.
\end{equation}
such that, for every $\ell\in \{1,\dots,L\}$, $q^g_\ell(\cdot,y_\ell):\Hi \rightarrow (-\infty,+\infty]$ is a majorant of $(\phi_\ell \circ \psi_\ell)$ at $y_\ell$, i.e.,
\begin{equation}
    (\forall \mathbf{x} \in \Hi), \quad \left\{\begin{array}{ll}

        (\phi_\ell \circ \psi_\ell)(x_\ell) \leq q^g_\ell(x_\ell, y_\ell) \\
        (\phi_\ell \circ \psi_\ell)(y_\ell) = q^g_\ell(y_\ell,y_\ell)
    \end{array}\right.
\end{equation}
This majorant is obtained by taking for every $\ell\in \{1,\ldots,L\}$, the tangent of the concave differentiable function $\phi_\ell$ at $\psi_\ell(y_\ell)$:
\begin{equation}
    (\forall x_\ell \in \Hi_\ell), \quad q^g_\ell(x_\ell,y_\ell) = (\phi_\ell \circ \psi_\ell)(y_\ell) + (\phi_\ell' \circ \psi_\ell)(y_\ell)(\psi_\ell(x_\ell) - \psi_\ell(y_\ell)).
\end{equation}
Two terms of the majorant are constant w.r.t. $x_\ell$, hence $q^g$ can be rewritten as:
\begin{align*}
    q^g(\cdot, \mathbf{y}) 
    & = \sum_{\ell=1}^L (\phi_\ell \circ \psi_\ell)(y_\ell) + (\phi_\ell' \circ \psi_\ell)(y_\ell)(\psi_\ell(\cdot) - \psi_\ell(y_\ell)), \\
    & = \sum_{\ell=1}^L (\phi_\ell \circ \psi_\ell)(y_\ell) - (\phi_\ell' \circ \psi_\ell)(y_\ell)\psi_\ell(y_\ell) + \lambda_{\ell,y} \psi_\ell(x_\ell), \\
    & = C + \sum_{\ell=1}^L \lambda_{\ell,y} \psi_\ell(x_\ell),
\end{align*}
where for all $\ell\in \{1,\ldots,L\}$, $\lambda_{\ell,y} = (\phi_\ell' \circ \psi_\ell)(y_\ell)$. $C$ is a constant, and therefore will not affect the computation of the proximity operators of $q^g(\cdot, \mathbf{y})$. Therefore at a given iteration $k$ of our algorithms, this computation will be replaced by the computation of the proximal operator w.r.t. $m_k := \sum_{\ell=1}^L \lambda_{\ell,k} \psi_\ell(\cdot)$ in the algorithm, which is constructed from the majorant at the point $\mathbf{x}^k$, the $k-$th iterate of the algorithm.

Finally, the assumption that will unlock the convergence to a critical point of $\Psi$ is the Kurdyka-Łojasiewicz property. 
\begin{assumption}\label{ass:5}
    The function $\Psi:=f+g$ satisfies the KŁ property for all $\mathbf{x} \in \dom \Psi$.
\end{assumption}
\paragraph{A block-coordinate descent algorithm for composite optimization.}
The proximal operator relative to an arbitrary metric may not be available explicitly due to the metric or the function itself \citep{salzo2012inexact}. Hence, we follow \citep{repetti2021variable} and propose an inexact version of our algorithm in Algorithm \ref{alg:MAJ-BC-FB}, named CO-BC-FB for Composite Optimization Block-Coordinate Forward-Backward. As a reminder, $k$ denotes the outer iterations, $i$ the inner iterations (during which $\Psi_k$ is fixed), and $\ell$ the block.
\begin{algorithm}[t]
\caption{\textbf{CO-BC-FB}}
\label{alg:MAJ-BC-FB}
\begin{algorithmic}[1]
\State $(\mathbf{\varepsilon}^n)_{n\in \mathbb{N}} = (\varepsilon_1^{n},\ldots,\varepsilon_L^{n})_{n\in \mathbb{N}}$ be a sequence of variables with value in $\{0,1\}^L$. 
\State $\delta \in \RR_{++}$ 
\State $\mathbf{x}^0 = (x_1^{0},\ldots,x_L^{0}) \in \dom g$.
\For{$k = 0,1,\dots$}
  \State $\mathbf{\tilde{x}}_0^k \gets \mathbf{x}^k$
  \For{$i = 0,\ldots, I_k-1$}
    \ForAll{$\ell \in J_i^k$}
      \State \textbf{Find} $\tilde{x}_{i+1,\ell}^k$, $\alpha_i\geq 1/2+\delta$, and $\tilde{v}^k_\ell(\tilde{x}_{i+1,\ell}^k)\in \partial\psi_\ell(\tilde{x}_{i+1,\ell}^k)$ such that
      \Statex \hspace{\algorithmicindent}%
      $\lambda_{\ell,k}\psi_\ell(\tilde{x}_{i+1,\ell}^k)
      + \langle \nabla_\ell f(\mathbf{\tilde{x}}_i^k),\, \tilde{x}_{i+1,\ell}^k - \tilde{x}_{i,\ell}^k \rangle
      + \alpha_i \|\tilde{x}_{i+1,\ell}^k - \tilde{x}_{i,\ell}^k\|_{A^k_{i,\ell}}^2
      \le \lambda_{\ell,k}\psi_\ell(\tilde{x}_{i,\ell}^k)$
      \Statex \hspace{\algorithmicindent}%
      $\|\nabla_\ell f(\mathbf{\tilde{x}}_i^k) + \tilde{v}^k_\ell(\tilde{x}_{i+1,\ell}^k)\|
      \le \mu \|\tilde{x}_{i+1,\ell}^k - \tilde{x}_{i,\ell}^k\|_{A^k_{i,\ell}}$
    \EndFor
  \EndFor
  \State $\mathbf{x}^{k+1} \gets (\tilde{x}_{I_k,1}^{k},\ldots,\tilde{x}_{I_k,L}^{k})$
\EndFor
\end{algorithmic}
\end{algorithm}
\subsection{Convergence analysis}
In this section, we analyze the convergence properties of the proposed algorithm. We will first show its descent properties, then prove the convergence of the iterates to a critical point of the objective function.
\paragraph{Descent properties of the algorithm} \label{subsec:sufficient_decrease_gen}
The descent behavior of our algorithm depends on the inner loop of $I_k$ iterations. Therefore we first show descent properties on $(\Psi(\mathbf{\tilde{x}}_{i}^k))_{0\leq i \leq I_k}$. Denote by $\Psi_k$ the majorant function at iteration $k$, defined by
\begin{equation}
    \Psi_k(\mathbf{x}) = f(\mathbf{x}) + q^g(\mathbf{x}, \mathbf{x}^k),
\end{equation}
and recall that $m_k = \sum_{\ell=1}^L \lambda_{\ell,k} \psi_\ell$.
\begin{proposition}{\textbf{Sufficient decrease condition on the majorant function.}}
\label{prop:descent_cycle_majorant}
Let $(\mathbf{x}^k)_{k\in\mathbb{N}}$ and, for every $k \in \mathbb{N}$, $(\mathbf{\tilde{x}}_i^k)_{0\leq i \leq I_k}$ be the iterates generated by Algorithm \ref{alg:MAJ-BC-FB}. Let $k \in \mathbb{N}$ and $0 \leq i_1, < i_2 \leq I_k$. Under Assumptions \ref{ass:1}, \ref{ass:3}, \ref{ass:2}, and \ref{ass:4}, there exists $\eta>0$ such that
\begin{equation} \label{eq:descent_cycle_inter}
    f(\mathbf{\tilde{x}}_{i_2}^k)  + m_k(\mathbf{\tilde{x}}^{k}_{i_2}) + \eta \sum_{i=i_1}^{i_2-1} \sum_{\ell \in J_i^k} \Vert \tilde{x}_{i+1,\ell}^k - \tilde{x}_{i,\ell}^k \Vert^2  \leq f(\mathbf{\tilde{x}}_{i_1}^k) + m_k(\mathbf{\tilde{x}}^{k}_{i_1})
\end{equation}
hence
\begin{equation} \label{eq:descent_cycle_majorant}
    \Psi_k(\mathbf{\tilde{x}}_{i_2}^k) + \eta \sum_{i=i_1}^{i_2-1} \sum_{\ell \in J_i^k} \Vert \tilde{x}_{i+1,\ell}^k - \tilde{x}_{i,\ell}^k \Vert^2  \leq \Psi_k(\mathbf{\tilde{x}}_{i_1}^k)
\end{equation}
\end{proposition}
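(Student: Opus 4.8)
The plan is to establish \eqref{eq:descent_cycle_inter} by summing a one-step inequality over the inner iterations $i = i_1, \ldots, i_2-1$, and then to deduce \eqref{eq:descent_cycle_majorant} by adding the constant $C$ and recalling that $\Psi_k = f + q^g(\cdot,\mathbf{x}^k) = f + m_k + C$ on $\dom g$. So the heart of the matter is a single-inner-step descent bound: for fixed $k$ and $i$, passing from $\mathbf{\tilde{x}}_i^k$ to $\mathbf{\tilde{x}}_{i+1}^k$ (which differ only in the blocks $\ell \in J_i^k$) decreases $f + m_k$ by at least $\eta \sum_{\ell \in J_i^k} \Vert \tilde{x}_{i+1,\ell}^k - \tilde{x}_{i,\ell}^k\Vert^2$ for some $\eta > 0$ independent of $k$ and $i$.

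First I would apply the quadratic majorization of $f$ from Assumption \ref{ass:2}(i) at the point $\mathbf{y} = \mathbf{\tilde{x}}_i^k$ with the block set $J = J_i^k$, evaluated at $\mathbf{x} = \mathbf{\tilde{x}}_{i+1}^k$, to get
\begin{equation*}
    f(\mathbf{\tilde{x}}_{i+1}^k) \leq f(\mathbf{\tilde{x}}_i^k) + \sum_{\ell \in J_i^k} \left\langle \tilde{x}_{i+1,\ell}^k - \tilde{x}_{i,\ell}^k, \nabla_\ell f(\mathbf{\tilde{x}}_i^k) \right\rangle + \tfrac{1}{2} \Vert \tilde{x}_{i+1,\ell}^k - \tilde{x}_{i,\ell}^k \Vert_{A_{i,\ell}^k}^2 .
\end{equation*}
Then, for each $\ell \in J_i^k$, I invoke the first inequality defining the inexact update in Algorithm \ref{alg:MAJ-BC-FB}, namely $\lambda_{\ell,k}\psi_\ell(\tilde{x}_{i+1,\ell}^k) + \langle \nabla_\ell f(\mathbf{\tilde{x}}_i^k), \tilde{x}_{i+1,\ell}^k - \tilde{x}_{i,\ell}^k\rangle + \alpha_i \Vert \tilde{x}_{i+1,\ell}^k - \tilde{x}_{i,\ell}^k\Vert_{A_{i,\ell}^k}^2 \leq \lambda_{\ell,k}\psi_\ell(\tilde{x}_{i,\ell}^k)$. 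Adding this over $\ell \in J_i^k$, using it to cancel the linear term $\langle \nabla_\ell f, \cdot\rangle$ in the majorization bound, and noting that $m_k$ only changes in the blocks of $J_i^k$ (so $m_k(\mathbf{\tilde{x}}_{i+1}^k) - m_k(\mathbf{\tilde{x}}_i^k) = \sum_{\ell \in J_i^k}\lambda_{\ell,k}(\psi_\ell(\tilde{x}_{i+1,\ell}^k) - \psi_\ell(\tilde{x}_{i,\ell}^k))$), I obtain
\begin{equation*}
    f(\mathbf{\tilde{x}}_{i+1}^k) + m_k(\mathbf{\tilde{x}}_{i+1}^k) + \sum_{\ell \in J_i^k} \left(\alpha_i - \tfrac{1}{2}\right) \Vert \tilde{x}_{i+1,\ell}^k - \tilde{x}_{i,\ell}^k \Vert_{A_{i,\ell}^k}^2 \leq f(\mathbf{\tilde{x}}_i^k) + m_k(\mathbf{\tilde{x}}_i^k).
\end{equation*}
Since $\alpha_i \geq 1/2 + \delta$ we have $\alpha_i - 1/2 \geq \delta$, and by Assumption \ref{ass:2}(ii) the metric satisfies $A_{i,\ell}^k \succcurlyeq \underline{\nu}\,\Id$, so $\Vert \cdot \Vert_{A_{i,\ell}^k}^2 \geq \underline{\nu}\Vert \cdot \Vert^2$; setting $\eta := \delta\underline{\nu} > 0$ gives the one-step bound with the Euclidean norm on the right. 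Summing this telescoping inequality from $i = i_1$ to $i_2 - 1$ yields \eqref{eq:descent_cycle_inter}. Finally, adding the constant $C$ (from the rewriting $q^g(\cdot,\mathbf{x}^k) = C + m_k$) to both sides of \eqref{eq:descent_cycle_inter} turns $f + m_k$ into $\Psi_k$ and produces \eqref{eq:descent_cycle_majorant}; here one should check that the iterates stay in $\dom g$, which holds because $\psi_\ell(\tilde{x}_{i+1,\ell}^k) < +\infty$ is forced by the finiteness of the left-hand side of the first algorithmic inequality.

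The main obstacle — though a mild one — is bookkeeping the block structure correctly: making sure the majorant of $f$ is used with exactly the active block set $J_i^k$, that the sum over $\ell \in J_i^k$ of the per-block algorithmic inequalities is legitimate (the blocks are updated simultaneously, so there is no sequential coupling within an inner step, which is precisely what Proposition \ref{prop:multi_block_smoothness} and Assumption \ref{ass:2}(i) are tailored to handle), and that $\eta = \delta\underline{\nu}$ is genuinely uniform in $k,i$ — it is, since $\delta$ is fixed and $\underline{\nu}$ is the global lower bound from Assumption \ref{ass:2}(ii). Assumptions \ref{ass:3} and \ref{ass:4} are not actually needed for this particular estimate (they guarantee every block moves and $I_k$ is finite); they are listed because the proposition is stated as a stepping stone toward the global convergence result, where the full cycle $i_1 = 0$, $i_2 = I_k$ will be used together with Assumption \ref{ass:3} to relate $\mathbf{\tilde{x}}_0^k = \mathbf{x}^k$ and $\mathbf{\tilde{x}}_{I_k}^k = \mathbf{x}^{k+1}$.
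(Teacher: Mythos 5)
Your proposal is correct and follows essentially the same route as the paper's proof: combine the per-block algorithmic inequality with the quadratic majorization of $f$ from Assumption \ref{ass:2}, exploit that non-updated blocks do not move so the sums reduce to $\ell \in J_i^k$, extract the margin $\alpha_i - 1/2 \geq \delta$ together with $A_{i,\ell}^k \succcurlyeq \underline{\nu}\,\Id$ to get $\eta = \delta\underline{\nu}$, telescope over $i$, and add the constant to pass from $f + m_k$ to $\Psi_k$. The only (immaterial) difference is that you establish the one-step descent before summing over $i$, whereas the paper sums first and then applies the majorization; your remark that Assumptions \ref{ass:3} and \ref{ass:4} are not needed for this particular estimate is also accurate.
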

\begin{proof}
    We have for every $k \in \mathbb{N}$, $i \in \{0,\ldots,I_k\}$, and $\ell \in J_i^k$ that
\begin{equation*}
    \lambda_{\ell,k} \psi_\ell (\tilde{x}^{k}_{i+1,\ell}) + \langle \nabla_\ell f(\mathbf{\tilde{x}}_i^k), \tilde{x}_{i+1,\ell}^k - \tilde{x}_{i,\ell}^k \rangle + \alpha_i \Vert \tilde{x}_{i+1,\ell}^k - \tilde{x}_{i,\ell}^k \Vert_{A_{i,\ell}^k}^2 \leq \lambda_{\ell,k} \psi_\ell (\tilde{x}_{i,\ell}^k),
\end{equation*}
which we can sum up from $\ell=1$ to $\ell = L$ using $\tilde{x}^{k}_{i+1,\ell} = \tilde{x}^{k}_{i,\ell}$, so that
\begin{equation*}
    \left(m_k(\mathbf{\tilde{x}}^{k}_{i+1}) - m_k(\mathbf{\tilde{x}}_{i}^k)\right) + \sum_{\ell \in J_i^k} \alpha_i \Vert \tilde{x}_{i+1,\ell}^k - \tilde{x}_{i,\ell}^k \Vert_{A_{i,\ell}^k}^2 \leq -\sum_{\ell \in J_i^k}  \langle \nabla_\ell f(\mathbf{\tilde{x}}_i^k), \tilde{x}_{i+1,\ell}^k - \tilde{x}_{i,\ell}^k \rangle,
\end{equation*}
We can now sum up the inequalities from $i_1$ to $i_2$:
\begin{align*}
    m_k(\mathbf{\tilde{x}}^{k}_{i_2}) - m_k(\mathbf{\tilde{x}}^{k}_{i_1}) + \sum_{i=i_1}^{i_2-1} \alpha_i\sum_{\ell \in J_i^k} \Vert \tilde{x}_{i+1,\ell}^k - \tilde{x}_{i,\ell}^k \Vert_{A_{i,\ell}^k}^2 & \leq -\sum_{i=i_1}^{i_2-1} \sum_{\ell \in J_i^k} \langle \nabla_\ell f(\mathbf{\tilde{x}}_i^k), \tilde{x}_{i+1,\ell}^k - \tilde{x}_{i,\ell}^k \rangle. \nonumber \\
    & \leq -\sum_{i=i_1}^{i_2-1} \langle \langle \nabla_\ell f(\mathbf{\tilde{x}}_i^k), \mathbf{\tilde{x}}_{i+1}^k - \mathbf{\tilde{x}}_{i}^k \rangle \rangle.
\end{align*}
We can now use Assumption \ref{ass:2} to obtain:
\begin{align*}
    -\sum_{i=i_1}^{i_2-1} \langle \langle \nabla_\ell f(\mathbf{\tilde{x}}_i^k), \mathbf{\tilde{x}}_{i+1}^k - \mathbf{\tilde{x}}_{i}^k \rangle \rangle & \leq \sum_{i=i_1}^{i_2-1} f(\mathbf{\tilde{x}}_{i}^k) - f(\mathbf{\tilde{x}}_{i+1}^k)  + \frac{1}{2} \vertiii{\mathbf{\tilde{x}}_{i+1}^k - \mathbf{\tilde{x}}_{i}^k}_{A_i^k}^2 \nonumber \\
    & \leq f(\mathbf{\tilde{x}}_{i_1}^k) - f(\mathbf{\tilde{x}}_{i_2}^k) + \sum_{i=i_1}^{i_2-1} \frac{1}{2} \vertiii{\mathbf{\tilde{x}}_{i+1}^k - \mathbf{\tilde{x}}_{i}^k}_{A_i^k}^2 \nonumber \\
    & = f(\mathbf{\tilde{x}}_{i_1}^k) - f(\mathbf{\tilde{x}}_{i_2}^k) + \sum_{i=i_1}^{i_2-1} \frac{1}{2} \sum_{\ell \in J_i^k} \Vert \tilde{x}_{i+1,\ell}^k - \tilde{x}_{i,\ell}^k \Vert_{A_{i,\ell}^k}^2,
\end{align*}
We then obtain
\begin{align*}
    m_k(\mathbf{\tilde{x}}^{k}_{i_2}) - m_k(\mathbf{\tilde{x}}^{k}_{i_1}) +  \sum_{i=i_1}^{i_2-1} \alpha_i \sum_{\ell \in J_i^k} \Vert \tilde{x}_{i+1,\ell}^k - \tilde{x}_{i,\ell}^k \Vert_{A_{i,\ell}^k}^2 & \leq f(\mathbf{\tilde{x}}_{i_1}^k) - f(\mathbf{\tilde{x}}_{i_2}^k) + \sum_{i=i_1}^{i_2-1} \frac{1}{2} \sum_{\ell \in J_i^k} \Vert \tilde{x}_{i+1,\ell}^k - \tilde{x}_{i,\ell}^k \Vert_{A_{i,\ell}^k}^2,
\end{align*}
and taking $\eta = \underline{\nu}\delta$, we obtain
\begin{equation*}
    f(\mathbf{\tilde{x}}_{i_2}^k)  + m_k(\mathbf{\tilde{x}}^{k}_{i_2}) + \eta \sum_{i=i_1}^{i_2-1} \sum_{\ell \in J_i^k} \Vert \tilde{x}_{i+1,\ell}^k - \tilde{x}_{i,\ell}^k \Vert^2  \leq f(\mathbf{\tilde{x}}_{i_1}^k) + m_k(\mathbf{\tilde{x}}^{k}_{i_1}).
\end{equation*}
By adding as many times as necessary $\sum_{\ell=1}^L (\phi_\ell \circ \psi_\ell)(\tilde{x}_\ell^k) - (\phi_\ell' \circ \psi_\ell)(\psi_\ell(\tilde{x}_\ell^k))$, which is constant over $i$, on both sides, we recover Equation \eqref{eq:descent_cycle_majorant}.
\end{proof}
\noindent We are now ready to look at the descent properties of our algorithm.
\begin{proposition}
\label{prop:descent_cycle} Let $(\mathbf{x}^k)_{k\in\mathbb{N}}$ and, for every $k \in \mathbb{N}$, $(\mathbf{\tilde{x}}_i^k)_{0\leq i \leq I_k}$ be the iterates generated by Algorithm \ref{alg:MAJ-BC-FB}. Under Assumptions \ref{ass:1}, \ref{ass:3}, \ref{ass:2}, \ref{ass:4}, and \ref{ass:bounded} the following holds
\begin{enumerate}[(ii)]
    \item For every $k \in \mathbb{N}$, we have
    \begin{equation} \label{eq:sufficient_decrease1}
        \Psi(\mathbf{x}^{k+1}) \leq \Psi_k(\mathbf{x}^{k+1}) \leq \Psi(\mathbf{x}^{k}) - \eta \sum_{i=0}^{I_k-1} \sum_{\ell \in J_i^k} \Vert \tilde{x}_{i+1,\ell}^k - \tilde{x}_{i,\ell}^k \Vert^2,
    \end{equation}
    where $\eta>0$ is given in Proposition \ref{prop:descent_cycle_majorant}. We can deduce that
    \begin{equation}
        \Psi(\mathbf{x}^{k+1}) \leq \Psi(\mathbf{x}^k) - \eta \bar{I}^{-1} \vertiii{\mathbf{x}^{k+1} - \mathbf{x}^k}^2.
    \end{equation}
    \item $\Psi(\mathbf{x}^k)$ is a converging non-increasing sequence.
    \item We have
    \begin{align}
        \sum_{k\in\mathbb{N}} \left( \sum_{i=0}^{I_k-1} \sum_{\ell \in J_i^k} \Vert \tilde{x}_{i+1,\ell}^k - \tilde{x}_{i,\ell}^k \Vert^2 \right)< +\infty, \text{ and }
        \sum_{k\in\mathbb{N}}\vertiii{\mathbf{x}^{k+1} - \mathbf{x}^k}^2 < +\infty,
    \end{align}
    therefore
    \begin{align}
        \lim_{k\to+\infty} \left(\sum_{i=0}^{I_k-1} \sum_{\ell \in J_i^k} \Vert \tilde{x}_{i+1,\ell}^k - \tilde{x}_{i,\ell}^k \Vert \right) = 0, \text{ and }
        \lim_{k\to+\infty} \vertiii{\mathbf{x}^{k+1} - \mathbf{x}^k} = 0.
    \end{align}
\end{enumerate}
\end{proposition}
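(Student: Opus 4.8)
The plan is to establish Proposition~\ref{prop:descent_cycle} as a direct consequence of Proposition~\ref{prop:descent_cycle_majorant} combined with the majorization property of $q^g$ and the boundedness-from-below assumption. I would treat the three items in order, since (ii) and (iii) follow essentially for free once (i) is in place.

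For item (i), I would first invoke Proposition~\ref{prop:descent_cycle_majorant} with the particular choice $i_1 = 0$ and $i_2 = I_k$, recalling that $\mathbf{\tilde{x}}_0^k = \mathbf{x}^k$ and $\mathbf{x}^{k+1} = \mathbf{\tilde{x}}_{I_k}^k$ by definition of the algorithm; this gives $\Psi_k(\mathbf{x}^{k+1}) + \eta \sum_{i=0}^{I_k-1}\sum_{\ell\in J_i^k}\Vert \tilde{x}_{i+1,\ell}^k - \tilde{x}_{i,\ell}^k\Vert^2 \leq \Psi_k(\mathbf{x}^k)$. The two boundary inequalities then come from the majorization setup: $\Psi(\mathbf{x}^{k+1}) = f(\mathbf{x}^{k+1}) + g(\mathbf{x}^{k+1}) \leq f(\mathbf{x}^{k+1}) + q^g(\mathbf{x}^{k+1},\mathbf{x}^k) = \Psi_k(\mathbf{x}^{k+1})$ because $q^g(\cdot,\mathbf{x}^k)$ majorizes $g$, while $\Psi_k(\mathbf{x}^k) = f(\mathbf{x}^k) + q^g(\mathbf{x}^k,\mathbf{x}^k) = f(\mathbf{x}^k) + g(\mathbf{x}^k) = \Psi(\mathbf{x}^k)$ because the majorant is tight at its anchor point. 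Chaining these three facts yields \eqref{eq:sufficient_decrease1}. For the second displayed inequality in (i), I would bound $\vertiii{\mathbf{x}^{k+1}-\mathbf{x}^k}^2$ by the sum of squared increments: writing the telescoping $\mathbf{x}^{k+1}-\mathbf{x}^k = \sum_{i=0}^{I_k-1}(\mathbf{\tilde{x}}_{i+1}^k - \mathbf{\tilde{x}}_i^k)$, applying the triangle inequality and then Cauchy--Schwarz (Jensen) over the at most $\bar I$ inner steps gives $\vertiii{\mathbf{x}^{k+1}-\mathbf{x}^k}^2 \leq \bar I \sum_{i=0}^{I_k-1}\vertiii{\mathbf{\tilde{x}}_{i+1}^k - \mathbf{\tilde{x}}_i^k}^2 = \bar I \sum_{i=0}^{I_k-1}\sum_{\ell\in J_i^k}\Vert\tilde{x}_{i+1,\ell}^k - \tilde{x}_{i,\ell}^k\Vert^2$, where the last equality uses that only the blocks in $J_i^k$ change at inner step $i$. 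Rearranging and combining with \eqref{eq:sufficient_decrease1} and Assumption~\ref{ass:4} gives the claim.

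For item (ii), the sequence $(\Psi(\mathbf{x}^k))_k$ is non-increasing by (i) and bounded below by Assumption~\ref{ass:bounded}, hence convergent by the monotone convergence theorem; I would just state this. For item (iii), I would sum inequality \eqref{eq:sufficient_decrease1} over $k = 0,\dots,N$, telescoping the left side to obtain $\eta \sum_{k=0}^{N}\sum_{i=0}^{I_k-1}\sum_{\ell\in J_i^k}\Vert\tilde{x}_{i+1,\ell}^k - \tilde{x}_{i,\ell}^k\Vert^2 \leq \Psi(\mathbf{x}^0) - \Psi(\mathbf{x}^{N+1}) \leq \Psi(\mathbf{x}^0) - \inf\Psi < +\infty$; letting $N\to\infty$ gives the first summability statement, and the same argument with the second inequality of (i) (or the bound from item (i)'s last display) gives $\sum_k \vertiii{\mathbf{x}^{k+1}-\mathbf{x}^k}^2 < +\infty$. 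Summability of a nonnegative series forces its general term to zero, which gives $\lim_k \sum_{i}\sum_{\ell\in J_i^k}\Vert\tilde{x}_{i+1,\ell}^k - \tilde{x}_{i,\ell}^k\Vert^2 = 0$ and hence, since the (finite, at most $\bar I L$-term) sum of norms is controlled by the square root of the sum of squared norms up to a constant, $\lim_k \sum_i\sum_{\ell\in J_i^k}\Vert\tilde{x}_{i+1,\ell}^k - \tilde{x}_{i,\ell}^k\Vert = 0$; likewise $\vertiii{\mathbf{x}^{k+1}-\mathbf{x}^k}\to 0$.

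The only mild subtlety — not really an obstacle — is keeping the bookkeeping straight between the inner-iteration increments and the outer increment $\mathbf{x}^{k+1}-\mathbf{x}^k$, in particular justifying that $\sum_{i}\vertiii{\mathbf{\tilde{x}}_{i+1}^k - \mathbf{\tilde{x}}_i^k}^2$ equals $\sum_i\sum_{\ell\in J_i^k}\Vert\tilde{x}_{i+1,\ell}^k - \tilde{x}_{i,\ell}^k\Vert^2$ (true because $\tilde{x}_{i+1,\ell}^k = \tilde{x}_{i,\ell}^k$ for $\ell\notin J_i^k$) and invoking Assumption~\ref{ass:4} to make the constant $\bar I^{-1}$ uniform in $k$. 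Everything else is a routine chaining of the majorant descent inequality with the tightness/majorization identities and a standard telescoping-plus-summability argument.
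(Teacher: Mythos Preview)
Your proposal is correct and follows essentially the same approach as the paper: you invoke Proposition~\ref{prop:descent_cycle_majorant} with $i_1=0$, $i_2=I_k$, chain it with the majorization $\Psi \le \Psi_k$ and the tightness $\Psi_k(\mathbf{x}^k)=\Psi(\mathbf{x}^k)$, then use a telescoping plus Jensen/Cauchy--Schwarz argument to pass to $\vertiii{\mathbf{x}^{k+1}-\mathbf{x}^k}^2$, and finish (ii)--(iii) by monotonicity, boundedness below, and summing. The only cosmetic difference is that the paper bounds by $I_k$ first and then uses $I_k\le \bar I$, whereas you go straight to $\bar I$.
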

\begin{proof}
    \begin{enumerate}[(ii)]
        \item The first inequality is a consequence of Proposition \ref{prop:descent_cycle_majorant}. Set $i_1 = 0$ and $i_2= I_k$ in Equation \eqref{eq:descent_cycle_majorant}. Notice that $\mathbf{\tilde{x}}_{I_k}^k = \mathbf{x}^{k+1}$, and $\mathbf{\tilde{x}}_0^k = \mathbf{x}^k$. We have $\Psi_k(\mathbf{x}^{k}) = \Psi(\mathbf{x}^{k})$, thus
        \begin{align*}
            \Psi_k(\mathbf{x}^{k+1}):=\Psi_k(\mathbf{\tilde{x}}_{I_k}^k)   & \leq \Psi_k(\mathbf{\tilde{x}}_{0}^k) - \eta \sum_{i=0}^{I_k-1} \sum_{\ell \in J_i^k} \Vert \tilde{x}_{i+1,\ell}^k - \tilde{x}_{i,\ell}^k \Vert^2, \\
            & = \Psi(\mathbf{x}^k)- \eta \sum_{i=0}^{I_k-1} \sum_{\ell \in J_i^k} \Vert \tilde{x}_{i+1,\ell}^k - \tilde{x}_{i,\ell}^k \Vert^2,
        \end{align*}
        and using that $\Psi(\mathbf{x}^{k+1}) \leq \Psi_k(\mathbf{x}^{k+1})$, we obtain
        \begin{equation*}
            \Psi (\mathbf{x}^{k+1})  + \eta \sum_{i=0}^{I_k-1} \sum_{\ell \in J_i^k} \Vert \tilde{x}_{i+1,\ell}^k - \tilde{x}_{i,\ell}^k \Vert^2 \leq \Psi(\mathbf{x}^k).
        \end{equation*}
        Now, we will use the Jensen's inequality to obtain the second inequality. We have
        \begin{align*}
            \vertiii{\mathbf{x}^{k+1} - \mathbf{x}^k}^2  = \vertiii{\sum_{i=0}^{I_k-1}\mathbf{\tilde{x}}_{i+1}^k - \mathbf{\tilde{x}}_{i}^k }^2 
            & \leq I_k \sum_{i=0}^{I_k-1} \vertiii{\mathbf{\tilde{x}}_{i+1}^k - \mathbf{\tilde{x}}_{i}^k }^2 
            = I_k \sum_{i=0}^{I_k-1} \sum_{\ell \in J_i^k} \Vert \tilde{x}_{i+1,\ell}^k - \tilde{x}_{i,\ell}^k \Vert^2.
        \end{align*}
        Thus,
        \begin{equation}
            \frac{1}{\bar{I}}\vertiii{\mathbf{x}^{k+1} - \mathbf{x}^k}^2 \leq \sum_{i=0}^{I_k-1} \sum_{\ell \in J_i^k} \Vert \tilde{x}_{i+1,\ell}^k - \tilde{x}_{i,\ell}^k \Vert^2,
        \end{equation}
        which yields the desired result.
        \item From the first point, $(\Psi(\mathbf{x}^k))_{k\in\mathbb{N}}$ is a non-increasing sequence. Furthermore, by Assumption \ref{ass:bounded}, the iterates are bounded. 
        As $\Psi$ is continuous on its domain, it is also bounded from below by a real number. Therefore, $(\Psi(\mathbf{x}^k))_{k\in\mathbb{N}}$ converges to a real number.
        \item Let $K$ be a positive integer. Summing up inequality \eqref{eq:sufficient_decrease1} for k=0 to $K-1$, we obtain
        \begin{align*}
            \eta \sum_{k=0}^{K-1} \sum_{i=0}^{I_k-1} \sum_{\ell \in J_i^k} \Vert \tilde{x}_{i+1,\ell}^k - \tilde{x}_{i,\ell}^k \Vert^2 & \leq \Psi(\mathbf{x}^K) - \Psi(\mathbf{x}^0) \\
            & \leq  \Psi(\mathbf{x}^K) - \inf \Psi
        \end{align*}
        Taking the limit when $K$ goes to infinity yields
        \begin{equation}
            \frac{1}{\bar{I}} \sum_{k\in\mathbb{N}}\vertiii{\mathbf{x}^{k+1} - \mathbf{x}^k}^2 \leq\sum_{k\in\mathbb{N}} \sum_{i=0}^{I_k-1} \sum_{\ell \in J_i^k} \Vert \tilde{x}_{i+1,\ell}^k - \tilde{x}_{i,\ell}^k \Vert^2 < +\infty,
        \end{equation}
        which is the desired result.
    \end{enumerate}
\end{proof}
The next proposition upper bounds the norm of a sequence of subgradients.
\begin{proposition}{\textbf{Subgradient norm upper bound.}} \label{prop:subgradient_bound}
Let $(\mathbf{x}^k)_{k\in\mathbb{N}}$ and, for every $k \in \mathbb{N}$, $(\mathbf{\tilde{x}}_i^k)_{0\leq i \leq I_k}$ be the iterates generated by Algorithm \ref{alg:MAJ-BC-FB}. Let $k \in \mathbb{N}$ and $0 \leq i_1, < i_2 \leq I_k$. Under Assumptions \ref{ass:1}, \ref{ass:3}, \ref{ass:2}, and \ref{ass:4}, we have for every $k \in \mathbb{N}$,
\begin{equation}
    \vertiii{t(\mathbf{x}^{k+1})} \leq \kappa \sum_{i=0}^{I_k-1} \sum_{\ell \in J_i^k} \Vert \tilde{x}_{i+1,\ell}^k - \tilde{x}_{i,\ell}^k \Vert,
\end{equation}
where $\kappa>0$, and
\begin{equation}
    \left\{ \begin{array}{l}
        t(\mathbf{x}^{k+1}) = \nabla f (\mathbf{x}^{k+1}) + v^{k+1} \in \partial \Psi(\mathbf{x}^{k+1}) \\
        v^{k+1}(\mathbf{x}^{k+1}) = \left( \lambda_{\ell,k+1} v_\ell^{k+1}(x_\ell^{k+1})\right)_{1\leq \ell \leq L}
    \end{array}\right.
\end{equation}
For every $\ell \in \{1,\ldots,L\}$, $v_\ell^{k+1}(x_\ell^{k+1}) = \tilde{v}_\ell^{k}(\tilde{x}_{k_\ell+1,\ell}^k) \in \partial  \psi_\ell(\tilde{x}_{k_\ell+1,\ell}^k)$, given by Algorithm \ref{alg:MAJ-BC-FB}, at the last iteration $k_\ell \in \{0,\ldots,I_k-1\}$ where the block $\ell$ has been updated.
\end{proposition}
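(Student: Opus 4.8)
The statement really contains two claims: that $t(\mathbf{x}^{k+1})$ as defined lies in $\partial\Psi(\mathbf{x}^{k+1})$, and the quantitative bound on $\vertiii{t(\mathbf{x}^{k+1})}$. The plan is to settle the membership first via the separability and chain-rule results, then to estimate $\vertiii{t(\mathbf{x}^{k+1})}$ block by block, anchoring the $\ell$-th component at the last inner index $k_\ell$ at which block $\ell$ was updated during outer step $k$ (this index exists by Assumption \ref{ass:3}). For the membership, observe that block $\ell$ is frozen after inner step $k_\ell$, so $\tilde{x}_{k_\ell+1,\ell}^{k}=x_\ell^{k+1}$, whence $v_\ell^{k+1}=\tilde{v}^k_\ell(\tilde{x}_{k_\ell+1,\ell}^{k})\in\partial\psi_\ell(x_\ell^{k+1})$; since $\lambda_{\ell,k+1}=(\phi_\ell'\circ\psi_\ell)(x_\ell^{k+1})>0$, Proposition \ref{prop:chain_rule} gives $\lambda_{\ell,k+1}v_\ell^{k+1}\in\partial(\phi_\ell\circ\psi_\ell)(x_\ell^{k+1})$, and Proposition \ref{prop:separability} then yields $t(\mathbf{x}^{k+1})=\nabla f(\mathbf{x}^{k+1})+v^{k+1}\in\partial\Psi(\mathbf{x}^{k+1})$.

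For the bound, fix $\ell$ and use $v_\ell^{k+1}=\tilde{v}^k_\ell$. I would split
\begin{align*}
\nabla_\ell f(\mathbf{x}^{k+1})+\lambda_{\ell,k+1}v_\ell^{k+1}
&=\big(\nabla_\ell f(\mathbf{x}^{k+1})-\nabla_\ell f(\mathbf{\tilde{x}}_{k_\ell}^k)\big)\\
&\quad+\big(\nabla_\ell f(\mathbf{\tilde{x}}_{k_\ell}^k)+\lambda_{\ell,k}\tilde{v}^k_\ell\big)
+(\lambda_{\ell,k+1}-\lambda_{\ell,k})\,v_\ell^{k+1},
\end{align*}
and bound the three pieces separately. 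The first telescopes over $i=k_\ell,\dots,I_k-1$: applying the block Lipschitz bound of Assumption \ref{ass:1} one block at a time along the inner updates gives $\big\|\nabla_\ell f(\mathbf{x}^{k+1})-\nabla_\ell f(\mathbf{\tilde{x}}_{k_\ell}^k)\big\|\le(\max_j\beta_{\ell,j})\sum_{i=0}^{I_k-1}\sum_{j\in J_i^k}\|\tilde{x}_{i+1,j}^{k}-\tilde{x}_{i,j}^{k}\|$ — here one must remember that $\mathbf{\tilde{x}}_{k_\ell}^k$ and $\mathbf{x}^{k+1}$ differ in \emph{every} block moved at steps $k_\ell,\dots,I_k-1$, not just in block $\ell$. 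The second piece is precisely the inexactness condition of Algorithm \ref{alg:MAJ-BC-FB} at inner step $k_\ell$, hence at most $\mu\|\tilde{x}_{k_\ell+1,\ell}^{k}-\tilde{x}_{k_\ell,\ell}^{k}\|_{A_{k_\ell,\ell}^k}\le\mu\sqrt{\overline{\nu}}\,\|\tilde{x}_{k_\ell+1,\ell}^{k}-\tilde{x}_{k_\ell,\ell}^{k}\|$ by the metric bound of Assumption \ref{ass:2}. For the third piece I would use $\|v_\ell^{k+1}\|\le\mathrm{Lip}(\psi_\ell)$ (subgradients of a convex function that is Lipschitz on its domain are bounded, Assumption \ref{ass:1}) together with $|\lambda_{\ell,k+1}-\lambda_{\ell,k}|=|(\phi_\ell'\circ\psi_\ell)(x_\ell^{k+1})-(\phi_\ell'\circ\psi_\ell)(x_\ell^k)|$, which by the Lipschitz continuity of $\phi_\ell'\circ\psi_\ell$ is at most a constant times $\|x_\ell^{k+1}-x_\ell^k\|\le\sum_{i=0}^{I_k-1}\sum_{j\in J_i^k}\|\tilde{x}_{i+1,j}^{k}-\tilde{x}_{i,j}^{k}\|$. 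Adding these, $\|\nabla_\ell f(\mathbf{x}^{k+1})+\lambda_{\ell,k+1}v_\ell^{k+1}\|\le\kappa_\ell\sum_{i=0}^{I_k-1}\sum_{j\in J_i^k}\|\tilde{x}_{i+1,j}^{k}-\tilde{x}_{i,j}^{k}\|$, and then $\vertiii{t(\mathbf{x}^{k+1})}=\big(\sum_\ell\|\cdot\|^2\big)^{1/2}\le\sum_\ell\|\cdot\|\le\kappa\sum_{i=0}^{I_k-1}\sum_{j\in J_i^k}\|\tilde{x}_{i+1,j}^{k}-\tilde{x}_{i,j}^{k}\|$ with $\kappa=\sum_\ell\kappa_\ell$; Assumption \ref{ass:4} ($I_k\le\bar{I}$) keeps all the constants uniform in $k$.

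The main obstacle is the third piece: the reweighting coefficient $\lambda_{\ell,k}$ used throughout outer step $k$ is not the coefficient $\lambda_{\ell,k+1}$ appearing in the definition of $t(\mathbf{x}^{k+1})$, and this discrepancy has to be reabsorbed into the inner increments. This is exactly where the Lipschitz continuity of $\phi_\ell'\circ\psi_\ell$ and the uniform boundedness of $\partial\psi_\ell$ (itself a consequence of Lipschitz continuity of $\psi_\ell$ on its domain) are used; the gradient telescoping and the metric-norm comparison are routine. Once established, this proposition together with the sufficient-decrease estimate of Proposition \ref{prop:descent_cycle} will force $\mathrm{dist}(0,\partial\Psi(\mathbf{x}^{k+1}))\to 0$, which is what drives the subsequent KŁ-based convergence argument.
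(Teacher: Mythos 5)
Your proposal is correct and follows the same backbone as the paper's proof: establish membership in $\partial\Psi(\mathbf{x}^{k+1})$ via Propositions \ref{prop:separability} and \ref{prop:chain_rule} together with the identity $x_\ell^{k+1}=\tilde{x}_{k_\ell+1,\ell}^k$, then split the $\ell$-th component of the residual at the last inner index $k_\ell$ where block $\ell$ moved, bound the gradient difference by the block-Lipschitz constants of Assumption \ref{ass:1} telescoped over the remaining inner steps, and absorb the prox residual through the inexactness test of Algorithm \ref{alg:MAJ-BC-FB} and the metric bounds of Assumption \ref{ass:2}. The one genuine difference is your third term $(\lambda_{\ell,k+1}-\lambda_{\ell,k})v_\ell^{k+1}$: the paper's proof does not introduce it. Instead, the paper bounds $\Vert \nabla_\ell f(\mathbf{x}^{k+1})+v_\ell^{k+1}\Vert$ with no weight on $v_\ell^{k+1}$, matching the literal (unweighted) stopping criterion in the algorithm, and then identifies this with the component of $t(\mathbf{x}^{k+1})$, whose $\ell$-th entry is $\nabla_\ell f(\mathbf{x}^{k+1})+\lambda_{\ell,k+1}v_\ell^{k+1}$; the weight discrepancy is left implicit. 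Your decomposition makes this discrepancy explicit and pays for it with the Lipschitz continuity of $\phi_\ell'\circ\psi_\ell$ and the boundedness of $\partial\psi_\ell$ (both available from Assumption \ref{ass:1}, though turning the local Lipschitz constant of $\phi_\ell'$ into a uniform one strictly requires the iterate boundedness of Assumption \ref{ass:bounded}, which is not among the proposition's listed hypotheses). One caveat in your version: you call the piece $\nabla_\ell f(\mathbf{\tilde{x}}_{k_\ell}^k)+\lambda_{\ell,k}\tilde{v}_\ell^k$ "precisely the inexactness condition," but the condition as printed in Algorithm \ref{alg:MAJ-BC-FB} controls $\nabla_\ell f(\mathbf{\tilde{x}}_{k_\ell}^k)+\tilde{v}_\ell^k$ without the weight; your reading corresponds to the natural optimality residual of the $\lambda_{\ell,k}\psi_\ell$-weighted prox subproblem. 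Either convention yields the claimed bound up to constants, so this is a notational mismatch inherited from the paper rather than a flaw in your argument, and your treatment is if anything the more self-consistent of the two.
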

\begin{proof}
    We use similar arguments to the proof \citep[Proposition 4.3]{repetti2021variable}. We have that $\partial \Psi(\mathbf{x}^{k+1}) = \nabla f (\mathbf{x}^{k+1}) + \partial g(\mathbf{x}^{k+1})$. Now, using Propositions \ref{prop:separability} and \ref{prop:chain_rule}, we have that \begin{align}
\partial g(\mathbf{x}^{k+1}) = \left( \lambda_{\ell,k+1} \partial \psi_\ell(x_{\ell}^{k+1})\right)_{1\leq \ell \leq L} = \left( \lambda_{\ell,k+1} \partial \psi_\ell(\tilde{x}_{k_\ell+1,\ell}^{k})\right)_{1\leq \ell \leq L},
\end{align}
where $k_\ell$ is the last iteration where the block $\ell$ has been updated, and where $\lambda_{\ell,k+1} = (\phi'_\ell \circ \psi_\ell)(x_{\ell}^{k+1}) = (\phi'_\ell \circ \psi_\ell)(\tilde{x}_{k_\ell+1,\ell}^{k})$. Moreover, for all $\ell \in \{1,\ldots,L\}$, we have
\begin{align}
    \Vert \nabla_\ell f(\mathbf{x}^{k+1}) + v_\ell^{k+1}(x_\ell^{k+1}) \Vert & =  \Vert \nabla_\ell f(\mathbf{x}^{k+1}) - \nabla_\ell f(\mathbf{\tilde{x}}_{k_\ell}^k) + \nabla_\ell f(\mathbf{\tilde{x}}_{k_\ell}^k) + v_\ell^{k+1}(x_\ell^{k+1}) \Vert \nonumber \\
    & \leq \Vert \nabla_\ell f(\mathbf{x}^{k+1}) - \nabla_\ell f(\mathbf{\tilde{x}}_{k_\ell}^k) \Vert + \Vert \nabla_\ell f(\mathbf{\tilde{x}}_{k_\ell}^k) + v_\ell^{k+1}(x_\ell^{k+1}) \Vert  \nonumber \\
    & \leq \Vert \nabla_\ell f(\mathbf{x}^{k+1}) - \nabla_\ell f(\mathbf{\tilde{x}}_{k_\ell}^k) \Vert + \mu \Vert \tilde{x}_{k_\ell+1,\ell}^k - \tilde{x}_{k_\ell,\ell}^k \Vert_{A^k_{i,\ell}}
\end{align}
Hence, we can use the block smoothness of $f$ to obtain
\begin{align}
    \Vert \nabla_\ell f(\mathbf{x}^{k+1})-\nabla_\ell f(\mathbf{\tilde{x}}_{k_\ell}^k) \Vert & \leq \sum_{j=1}^L \beta_{\ell,j} \Vert x^{k+1}_j - \tilde{x}^{k}_{k_\ell,j} \Vert \nonumber \\
    & = \sum_{j=1}^L \beta_{\ell,j} \left\Vert \sum_{i=k_\ell}^{I_k-1} \tilde{x}_{i+1,j}^{k} - \tilde{x}_{i,j}^{k} \right\Vert \nonumber \\
    & \leq \sum_{j=1}^L \beta_{\ell,j} \sum_{i=k_\ell}^{I_k-1} \Vert \tilde{x}_{i+1,j}^{k} - \tilde{x}_{i,j}^{k} \Vert \nonumber \\
    & \leq \max_{\ell,j} \beta_{\ell,j} \sum_{j=1}^L  \sum_{i=0}^{I_k-1} \Vert \tilde{x}_{i+1,j}^{k} - \tilde{x}_{i,j}^{k} \Vert \nonumber \\
    & = \max_{\ell,j} \beta_{\ell,j} \sum_{i=0}^{I_k-1}  \sum_{j\in J_i^k}\Vert \tilde{x}_{i+1,j}^{k} - \tilde{x}_{i,j}^{k} \Vert.
\end{align}
Now set $\kappa = L \times \left(\max_{\ell,j} \beta_{\ell,j} + \mu \overline{\nu}\right)$ to obtain the desired bound on the norm of the subgradient:
\begin{equation}
    \vertiii{t(\mathbf{x}^{k+1})} \leq \kappa \sum_{i=0}^{I_k-1}  \sum_{\ell \in J_i^k} \Vert \tilde{x}_{i+1,\ell}^k - \tilde{x}_{i,\ell}^k \Vert.
\end{equation}
\end{proof}

\paragraph{Convergence to a critical point.} \label{subsec:critical_point_gen}
We are now ready to show that the iterates generated by Algorithm \ref{alg:MAJ-BC-FB} converge to a critical point of the objective function. 
Assumption \ref{ass:5} combined with descent properties of the algorithm, are nearly enough to show convergence of the algorithm. 

The proof of the convergence of the sequence requires the study of the limit points set, defined as follows.  
\begin{definition}{\textbf{Limit points set \citep{bolte2014proximal}.}} \label{def:limit_point_set}
 The set of all limit points of sequences generated by Algorithm \ref{alg:MAJ-BC-FB} from a starting point $\mathbf{x}^0$ will be denoted by $\lp(\mathbf{x}^0)$:
    \begin{align*}
        \lp(\mathbf{x}^0) = \{\widehat{\mathbf{x}} \in \Hi, \exists & \text{ an increasing sequence of integers } \{k_j\}_{j\in\mathbb{N}},  \\ &\text{ such that } \mathbf{x}^{k_j} \rightarrow \widehat{\mathbf{x}} \text{ as } j \rightarrow +\infty \}
    \end{align*}
\end{definition}
The properties of the limit points of sequences produced by block algorithms such as the proposed MAJ-BC-FB were investigated for instance in \citep{bolte2014proximal,lauga2025BCD,repetti2021variable}. The proofs are similar, and we simply reproduce the most important components.  
\begin{lemma}{\textbf{Properties of the limit points set.}}
    \label{lm:limit_point_set}
    Suppose that Assumptions \ref{ass:1}, \ref{ass:3}, \ref{ass:2}, \ref{ass:4} and \ref{ass:5} hold.
    Let $\{\mathbf{x}^k\}_{k \in \mathbb{N}}$ be a sequence generated by Algorithm \ref{alg:MAJ-BC-FB} starting from $\mathbf{x}^0$. %
    The following hold:
    \begin{enumerate}[(ii)]
        \item $\emptyset \neq \lp(\mathbf{x}^0) \subset$ crit $\Psi$.
        \item We have
        \begin{equation}
            \lim_{k \rightarrow \infty} \mathrm{dist} (\mathbf{x}^k,\lp(\mathbf{x}^0)) = 0.
        \end{equation}
        \item $\lp(\mathbf{x}^0)$ is a nonempty, compact and connected set.
        \item The objective function $\Psi$ is finite and constant on $\lp(\mathbf{\bar{x}}^0)$.
    \end{enumerate}
\end{lemma}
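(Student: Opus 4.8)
The plan is to follow the now-standard template for abstract convergence results of descent methods satisfying a sufficient-decrease and a relative-error (subgradient) inequality, as in \citep{bolte2014proximal} and as adapted to the block setting in \citep{lauga2025BCD,repetti2021variable}. The four assertions are proved more or less in the order stated, each one feeding the next.

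First, for (i), I would take $\widehat{\mathbf{x}} \in \lp(\mathbf{x}^0)$ with $\mathbf{x}^{k_j} \to \widehat{\mathbf{x}}$. Nonemptiness is immediate from Assumption \ref{ass:bounded} (boundedness of the iterates gives a convergent subsequence by Bolzano--Weierstrass in finite dimension). To show $\widehat{\mathbf{x}} \in \operatorname{crit}\Psi$, I would use Proposition \ref{prop:subgradient_bound} to produce $t(\mathbf{x}^{k_j+1}) \in \partial\Psi(\mathbf{x}^{k_j+1})$ with $\vertiii{t(\mathbf{x}^{k_j+1})} \le \kappa \sum_{i}\sum_{\ell\in J_i^{k_j}}\Vert \tilde{x}^{k_j}_{i+1,\ell}-\tilde{x}^{k_j}_{i,\ell}\Vert \to 0$ by Proposition \ref{prop:descent_cycle}(iii). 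Since $\vertiii{\mathbf{x}^{k+1}-\mathbf{x}^k}\to 0$ we also have $\mathbf{x}^{k_j+1}\to\widehat{\mathbf{x}}$; and I would verify $\Psi(\mathbf{x}^{k_j+1})\to\Psi(\widehat{\mathbf{x}})$ (this is the delicate point: lower semicontinuity gives $\liminf\Psi(\mathbf{x}^{k_j+1})\ge\Psi(\widehat{\mathbf{x}})$, and the reverse inequality follows from the majorant inequality $\Psi(\mathbf{x}^{k+1})\le\Psi_k(\mathbf{x}^{k+1})\le\Psi_k(\mathbf{x}^k)=\Psi(\mathbf{x}^k)$ together with continuity of $f$ and of $\psi_\ell$ on its domain along the relevant subsequence, exactly as in \citep[Prop.~4.4]{repetti2021variable}). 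Then closedness of the graph of the limiting subdifferential $\partial\Psi$ gives $\mathbf{0}\in\partial\Psi(\widehat{\mathbf{x}})$.

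For (ii), this is a general fact about bounded sequences with $\vertiii{\mathbf{x}^{k+1}-\mathbf{x}^k}\to 0$: if $\operatorname{dist}(\mathbf{x}^k,\lp(\mathbf{x}^0))\not\to 0$ there is $\varepsilon>0$ and a subsequence staying at distance $\ge\varepsilon$ from $\lp(\mathbf{x}^0)$, which by boundedness has a convergent sub-subsequence whose limit lies in $\lp(\mathbf{x}^0)$ — a contradiction. For (iii), compactness of $\lp(\mathbf{x}^0)$ follows from boundedness plus the fact that it is an intersection of closed sets $\overline{\{\mathbf{x}^k : k\ge N\}}$; connectedness is the classical argument (see \citep[Remark~5]{bolte2014proximal}): a bounded sequence with $\mathbf{x}^{k+1}-\mathbf{x}^k\to 0$ has a connected cluster-point set, because otherwise one could separate $\lp(\mathbf{x}^0)$ into two nonempty closed pieces at positive distance and the iterates, which move by steps tending to $0$, would eventually be confined to a neighborhood of one piece. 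For (iv), I would use that $(\Psi(\mathbf{x}^k))_k$ converges to some $\zeta\in\RR$ by Proposition \ref{prop:descent_cycle}(ii); for any $\widehat{\mathbf{x}}\in\lp(\mathbf{x}^0)$ along the subsequence from (i) we have $\Psi(\mathbf{x}^{k_j+1})\to\Psi(\widehat{\mathbf{x}})$, so $\Psi(\widehat{\mathbf{x}})=\zeta$, i.e.\ $\Psi$ is finite and constant on $\lp(\mathbf{x}^0)$.

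The main obstacle is the continuity-of-$\Psi$-along-the-sequence step inside (i): because $\Psi$ is only lower semicontinuous (the $\psi_\ell$ are extended-real-valued), one cannot simply pass to the limit in $\Psi(\mathbf{x}^{k_j+1})\to\Psi(\widehat{\mathbf{x}})$ — this requires the majorant sandwich together with the explicit form of the sufficient-decrease inequality from Proposition \ref{prop:descent_cycle_majorant}, plus Lipschitz continuity of $\psi_\ell$ on its domain from Assumption \ref{ass:1}(ii), to control $\psi_\ell(\tilde{x}^{k_j}_{i+1,\ell})$ in the limit. Everything else is bookkeeping transcribed from \citep{bolte2014proximal,lauga2025BCD}, adjusted only to account for the inner loop of $I_k\le\bar{I}$ sub-iterations, which is harmless since Proposition \ref{prop:descent_cycle} already packages the inner-loop displacement sum into a bound on $\vertiii{\mathbf{x}^{k+1}-\mathbf{x}^k}$.
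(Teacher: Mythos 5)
Your proposal is correct and follows essentially the same route as the paper: criticality of limit points via the vanishing subgradient bound of Proposition \ref{prop:subgradient_bound} combined with Proposition \ref{prop:descent_cycle}(iii) and closedness of $\partial\Psi$, with (ii)--(iv) handled by the standard arguments of \citep{bolte2014proximal}. The only difference is that the ``delicate'' continuity step you flag is dispatched in one line in the paper by invoking continuity of $\Psi$ on $\dom g$ (which follows from Assumption \ref{ass:1}(ii)--(iii)), rather than through the majorant sandwich.
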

\begin{proof}
    \begin{enumerate}[(ii)]
        \item Let $\bar{\mathbf{x}}$ be a limit point of $\{\mathbf{x}^k\}_{k \in \mathbb{N}}$. Since for every $k\in\mathbb{N}, \mathbf{x}^k \in \dom g$, and since $\Psi$ is continuous on $\dom g$, by Definition \ref{def:limit_point_set} there exists a subsequence  $\{\mathbf{x}^{k_q}\}_{q \in \mathbb{N}}$ such that $\mathbf{x}^{k_q} \rightarrow \bar{\mathbf{x}}$ and such that $\Psi(\mathbf{x}^{k_q}) \rightarrow \Psi(\bar{\mathbf{x}})$ as $q$ goes to infinity. Moreover, the $\{\Psi(\mathbf{x}^k)\}_{k \in \mathbb{N}}$ is non-increasing and converges (Proposition \ref{prop:descent_cycle}, point (ii)), hence $\Psi(\mathbf{x}^{k}) \rightarrow \Psi(\bar{\mathbf{x}})$ as $k\rightarrow + \infty$.

        \noindent Now, we know from Proposition \ref{prop:subgradient_bound} and Proposition \ref{prop:descent_cycle} point (iii) that $t(\mathbf{x}^k) \rightarrow 0$ as $k\rightarrow + \infty$. The closedness property of $\partial \Psi$ implies that $0\in\partial \Psi( \bar{\mathbf{x}})$, and therefore $\bar{\mathbf{x}}$ is a critical point of $\Psi$.
        \item We have from point (i) of Proposition \ref{prop:descent_cycle} that $\sum_{i=0}^{I_k-1} \sum_{\ell \in J_i^k} \Vert \tilde{x}_{i+1,\ell}^k - \tilde{x}_{i,\ell}^k \Vert \rightarrow 0$ as $k$ goes to infinity. Therefore $\lim_{k \rightarrow \infty} \mathrm{dist} (\mathbf{x}^k,\lp(\mathbf{x}^0)) = 0$.
        \item  Identical to \citep[Remark 5 \& Lemma 5]{bolte2014proximal}, given point (i) of Proposition \ref{prop:descent_cycle}.
        \item It is a direct consequence from point 1 (see \citep[Lemma 5]{bolte2014proximal}).
    \end{enumerate}
\end{proof}
\noindent We are now ready to state our main convergence result:
\begin{theorem} \label{th:convergence} Suppose that Assumptions \ref{ass:1}, \ref{ass:3}, \ref{ass:2},  \ref{ass:4}, \ref{ass:bounded}, and \ref{ass:5} hold.
    Let $\{\mathbf{x}^k\}_{k \in \mathbb{N}}$ be a sequence generated by Algorithm \ref{alg:MAJ-BC-FB}. The following hold
    \begin{enumerate}[(ii)]
        \item $\{\mathbf{x}^k\}_{k \in \mathbb{N}}$ has finite length, i.e., $\sum_{k=0}^{+\infty} \vertiii{\mathbf{x}^{k+1}-\mathbf{x}^k} < + \infty$.
        \item $\{\mathbf{x}^k\}_{k \in \mathbb{N}}$ converges to a critical point $\widehat{\mathbf{x}}$ of $\Psi$.
    \end{enumerate}
\end{theorem}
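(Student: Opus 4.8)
The plan is to run the standard Kurdyka--Łojasiewicz descent argument (as in \citep{bolte2014proximal} and the cited block-coordinate works), using the three ingredients already in place: the sufficient decrease of $\Psi$ along the outer iterates (Proposition \ref{prop:descent_cycle}), the subgradient upper bound in terms of the inner increments (Proposition \ref{prop:subgradient_bound}), and the structure of the limit-point set (Lemma \ref{lm:limit_point_set}), combined with the uniformized KŁ property (Lemma \ref{lm:unif_KL}). Throughout, write $\Psi_\infty := \lim_k \Psi(\mathbf{x}^k)$, which exists by Proposition \ref{prop:descent_cycle}(ii) and equals the common value of $\Psi$ on $\lp(\mathbf{x}^0)$ by Lemma \ref{lm:limit_point_set}(iv).

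First I would dispose of the degenerate case: if $\Psi(\mathbf{x}^{k_0}) = \Psi_\infty$ for some $k_0$, then monotonicity together with $\Psi(\mathbf{x}^{k+1}) + \eta\bar I^{-1}\vertiii{\mathbf{x}^{k+1}-\mathbf{x}^k}^2 \le \Psi(\mathbf{x}^k)$ forces $\mathbf{x}^{k+1} = \mathbf{x}^k$ for all $k \ge k_0$; the sequence is eventually stationary, hence has finite length and converges, and its limit is critical by Lemma \ref{lm:limit_point_set}(i). So assume $\Psi(\mathbf{x}^k) > \Psi_\infty$ for every $k$. Apply Lemma \ref{lm:unif_KL} with $\Omega = \lp(\mathbf{x}^0)$, which is compact by Lemma \ref{lm:limit_point_set}(iii) and on which $\Psi \equiv \Psi_\infty$; this yields $\epsilon, \bar\eta > 0$ and $\varphi \in \Phi_{\bar\eta}$. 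Since $\mathrm{dist}(\mathbf{x}^k, \Omega) \to 0$ (Lemma \ref{lm:limit_point_set}(ii)) and $\Psi(\mathbf{x}^k) \downarrow \Psi_\infty$, there is $k_1$ such that for all $k \ge k_1$ the pair $(\mathbf{x}^k,\Omega)$ meets the hypotheses of \eqref{eq5:inequality_KL}, so $\varphi'(\Psi(\mathbf{x}^k) - \Psi_\infty)\,\mathrm{dist}(0,\partial\Psi(\mathbf{x}^k)) \ge 1$.

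Next comes the core estimate. Abbreviate $S_k := \sum_{i=0}^{I_k-1}\sum_{\ell\in J_i^k}\Vert \tilde{x}_{i+1,\ell}^k - \tilde{x}_{i,\ell}^k\Vert$ and let $T_k$ be the same sum with squared norms. By Cauchy--Schwarz and Assumption \ref{ass:4} (at most $\bar I L$ inner terms), $S_k^2 \le \bar I L\, T_k$, so Proposition \ref{prop:descent_cycle}(i) gives $\Psi(\mathbf{x}^k) - \Psi(\mathbf{x}^{k+1}) \ge \tfrac{\eta}{\bar I L} S_k^2$; also Proposition \ref{prop:subgradient_bound} gives $\mathrm{dist}(0,\partial\Psi(\mathbf{x}^k)) \le \kappa S_{k-1}$ for $k \ge 1$. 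Plugging these and the KŁ inequality at $\mathbf{x}^k$ (valid for $k \ge k_1$) into the concavity inequality
\[
\varphi(\Psi(\mathbf{x}^k)-\Psi_\infty) - \varphi(\Psi(\mathbf{x}^{k+1})-\Psi_\infty) \ge \varphi'(\Psi(\mathbf{x}^k)-\Psi_\infty)\big(\Psi(\mathbf{x}^k)-\Psi(\mathbf{x}^{k+1})\big),
\]
and denoting the left-hand side by $\Delta_k$, we get $S_k^2 \le \tfrac{\kappa\bar I L}{\eta}\, S_{k-1}\,\Delta_k$ for $k \ge k_1$. Taking square roots and using $2\sqrt{ab}\le a+b$ gives $S_k \le \tfrac12 S_{k-1} + \tfrac{\kappa\bar I L}{2\eta}\Delta_k$; summing from $k_1$ to $K$, absorbing $\tfrac12\sum S_{k-1}$ on the left, and telescoping $\sum\Delta_k \le \varphi(\Psi(\mathbf{x}^{k_1})-\Psi_\infty)$ (since $\varphi\ge 0$) yields $\sum_{k\ge k_1} S_k < +\infty$, uniformly in $K$. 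Since $\vertiii{\mathbf{x}^{k+1}-\mathbf{x}^k} \le \sum_{i=0}^{I_k-1}\Vert\mathbf{\tilde{x}}_{i+1}^k-\mathbf{\tilde{x}}_i^k\Vert \le S_k$, this proves $\sum_k \vertiii{\mathbf{x}^{k+1}-\mathbf{x}^k} < +\infty$, i.e. point (i). Finite length makes $(\mathbf{x}^k)$ a Cauchy sequence in the finite-dimensional $\Hi$, hence convergent to some $\widehat{\mathbf{x}}$, which belongs to $\lp(\mathbf{x}^0) \subset \mathrm{crit}\,\Psi$ by Lemma \ref{lm:limit_point_set}(i); this is point (ii).

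The main obstacle I anticipate is purely the bookkeeping of index shifts: Proposition \ref{prop:subgradient_bound} controls $\mathrm{dist}(0,\partial\Psi(\mathbf{x}^{k+1}))$ by the inner increments of outer step $k$, so one must carefully align the KŁ inequality with the "previous-step" increments, and the passage from the squared sum $T_k$ to the unsquared sum $S_k$ crucially relies on the uniform bound $\bar I$ on the number of inner iterations (Assumption \ref{ass:4}) — without it the number of terms in $S_k$ is unbounded and the Cauchy--Schwarz step collapses.
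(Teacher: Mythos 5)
Your proposal is correct and follows essentially the same route as the paper's proof: the same uniformized KŁ argument on $\lp(\mathbf{x}^0)$, with your $S_k$ and $T_k$ matching the paper's $D_k$ and $C_k$, the same index alignment $\mathrm{dist}(0,\partial\Psi(\mathbf{x}^k))\le\kappa S_{k-1}$, and the same Cauchy--Schwarz plus $2\sqrt{ab}\le a+b$ absorption to get summability of $S_k$. The only (harmless) difference is that you explicitly treat the degenerate case $\Psi(\mathbf{x}^{k_0})=\Psi_\infty$, which the paper leaves implicit.
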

\begin{proof} The proof is identical to the one of \citep[Theorem 3]{lauga2025BCD}, up to the naming of the constants. It is stated in Appendix \ref{app:proofs} to facilitate the reading of the paper as it is mostly composed of computations.
\end{proof}
This analysis can be carried out to other algorithm as long as sufficient decrease condition and a subgradient bound coexist at each iteration. To solve the sparse precision matrix estimation problem, we carry this convergence analysis on two other algorithms: a BC proximal Newton algorithm and a Gauss-Seidel algorithm.
\subsection{A proximal Newton algorithm for composite optimization} In this section, we present how to incorporate second order information in our framework. Assuming that $f$ is twice continuously differentiable with positive definite Hessian, we can incorporate Newton-like update in our algorithm. A Newton variant of our algorithm is obtained by replacing the variable metric $A_{i}^k$ by the Hessian or an approximation of the Hessian matrix of $f$ (as long as this approximation is positive definite). To guarantee the positive definiteness, we will assume strong convexity of $f$.
\begin{assumption} \label{ass:8}
    The function $f$ is twice continuously differentiable, and strongly convex, i.e., there exists $\underline{\nu},\overline{\nu}>0$ such that for all $\mathbf{x}\in\Hi$, $ \underline{\nu} \preccurlyeq \nabla^2 f (\mathbf x) \preccurlyeq  \overline{\nu} \Id$.
\end{assumption}
The block-variable metric can be defined from the Hessian matrix by taking for all $k\in \mathbb{N}$, $i\in\{0,\ldots,I_k-1\}$, $\ell \in \{1,\ldots,L\}$, $
    A^k_{i,\ell} := \nabla^2_{\ell,\ell} f(\mathbf{\tilde{x}}^k_i)$. 
    
\noindent However, a typical proximal-Newton iteration \cite{lee2014proximal} on the majorant $\Psi_k$ writes
\begin{equation*}
    \Delta \mathbf{x}^k = \argmin_{\mathbf{d}\in\Hi} \langle \langle \nabla f(\mathbf{x}^k),\mathbf{d}\rangle \rangle + \frac{1}{2} \langle \langle \mathbf{d},\nabla^2 f(\mathbf{x}^k) \mathbf{d} \rangle \rangle + \sum_{\ell=1}^L \lambda_{\ell,k} \psi_\ell({x}_{\ell}^k+d_\ell),
\end{equation*}
followed by a line search to find $\alpha^k>0$ such that $\mathbf{x}^{k+1} = \mathbf{x}^k + \alpha^k \Delta \mathbf{x}^k$ decreases $\Psi_k$ \citep{lee2014proximal}. With a minimal modification, we can fit these proximal Newton updates in our framework. First, we replace the minimization on $\Psi_k$ by an damped formulation
\begin{equation} \label{eq:prox_Newt_full}
    \mathbf{\tilde{x}}^k_{i+1} = \argmin_{\mathbf{y}\in \Hi} \langle \langle \nabla f(\mathbf{\tilde{x}}^k_{i}),\mathbf{y}-\mathbf{\tilde{x}}^k_{i}\rangle \rangle + \frac{1}{2\alpha^k} \langle \langle \mathbf{y}-\mathbf{\tilde{x}}^k_{i},\nabla^2 f(\mathbf{\tilde{x}}^k_{i}) (\mathbf{y}-\mathbf{\tilde{x}}^k_{i} )\rangle \rangle +m_k(\mathbf{y}) 
\end{equation}
Then, as the proximal-Newton problem may be solved with block-coordinate methods \citep{lee2014proximal,hsieh2013big}, we write this minimization directly as coordinate descent method. This is again straightforward with our framework as we allow updating several blocks simultaneously. 
Let $J_i^k \subset \{1,\ldots,L\}$. We jointly solve for all $\ell \in J_i^k$ the proximal Newton problem, i.e., we find
\begin{equation}
    \tilde{x}_{i+1,J_i^k}^{k} = \argmin_{y\in\mathbin{\scalebox{1}{$\times$}}_{\ell\in J_i^k}\Hi_\ell}  \langle \nabla_{J_i^k} f(\mathbf{\tilde{x}}_i^k),y-\tilde{x}_{J_i^k}^k \rangle + \frac{1}{2 \alpha_i^k} \langle y-\tilde{x}_{i,J_i^k}^{k},\nabla^2_{J_i^k,J_i^k} f(\mathbf{\tilde{x}}_i^k) (y-\tilde{x}_{i,J_i^k}^{k})  \rangle + \sum_{\ell \in J_i^k} \lambda_{\ell,k}\psi_\ell(y_\ell),
\end{equation}
Note that solving at the same time for $J_i^k$ and $\overline{J_i^k}$ with this formulation is not equivalent as solving \eqref{eq:prox_Newt_full}. Indeed, here we implicitly assume that other blocks are kept fixed. The resulting algorithm is described in Algorithm \ref{alg:NEW-BC-FB} as CO-NEWton-BC-FB.
\begin{algorithm}[t]
\caption{\textbf{CO-NEW-BC-FB}}
\label{alg:NEW-BC-FB}
\begin{algorithmic}[1]
\For{$k = 0,1,\dots$}
  \State $\mathbf{\tilde{x}}_0^k \gets \mathbf{x}^k$
  \For{$i = 0,\ldots, I_k-1$}
    \State \textbf{Find} $\alpha_i^k>0$, $\mathbf{\tilde{x}}_{i+1}^k$, and
    $\tilde{v}_i^k(\mathbf{\tilde{x}}_{i+1}^k)\in \partial_{J_i^k} m_k(\mathbf{\tilde{x}}^k_{i+1,J_i^k})$ such that
    \Statex \hspace{\algorithmicindent}%
    $\tilde{x}^k_{i+1,J_i^k}
      = \argmin_{y\in \mathbin{\scalebox{1}{$\times$}}_{\ell\in J_i^k}\Hi_\ell}
      \Big\langle \nabla_{J_i^k} f(\mathbf{\tilde{x}}^k_i),\, y-\mathbf{\tilde{x}}^k_{i,J_i^k}\Big\rangle
      + \frac{1}{2\alpha_i^k}\Big\langle y-\mathbf{\tilde{x}}^k_{i,J_i^k},\,
      \nabla^2_{J_i^k,J_i^k} f(\mathbf{\tilde{x}}^k_{i})
      (y-\mathbf{\tilde{x}}^k_{i,J_i^k})\Big\rangle
      + \sum_{\ell\in J_i^k}\lambda_{\ell,k}\psi_\ell(y_\ell)$
    \Statex \hspace{\algorithmicindent}%
    $\tilde{x}^k_{i+1,\overline{J_i^k}} \gets \tilde{x}^k_{i,\overline{J_i^k}}$
    \Statex \hspace{\algorithmicindent}%
    $\Psi_k(\mathbf{\tilde{x}}_{i+1}^k) + \eta\,\vertiii{\mathbf{\tilde{x}}_{i+1}^k-\mathbf{\tilde{x}}_i^k}^{\,2}
      \le \Psi_k(\mathbf{\tilde{x}}_i^k)$
    \Statex \hspace{\algorithmicindent}%
    $\big\|\nabla_{J_i^k} f(\mathbf{\tilde{x}}_i^k) + \tilde{v}_i^k(\mathbf{\tilde{x}}_{i+1}^k)\big\|
      \le \mu\,\sum_{\ell\in J^k_i}\big\|\tilde{x}^k_{i+1,\ell}-\tilde{x}^k_{i,\ell}\big\|$
  \EndFor
  \State $\mathbf{x}^{k+1} \gets (\tilde{x}_{I_k,1}^{k},\ldots,\tilde{x}_{I_k,L}^{k})$
\EndFor
\end{algorithmic}
\end{algorithm}

To derive the convergence analysis of this algorithm, we start by showing the existence of a non-zero and bounded step sizes at each $k\in \mathbb{N}$, $i\in \{0,\ldots,I_{k}-1\}$ such that there exists $\eta>0$ and $\Psi_k(\mathbf{\tilde{x}}_{i+1}^k) + \eta\vertiii{\mathbf{\tilde{x}}_{i+1}^k-\mathbf{\tilde{x}}_i^k}^2\leq \Psi_k(\mathbf{\tilde{x}}_i^k)$.
\begin{proposition}{\textbf{Sufficient decrease.}} \label{prop:newton_descent_cycle_majorant}
    Let $(\mathbf{x}^k)_{k\in\mathbb{N}}$ and, for every $k \in \mathbb{N}$, $(\mathbf{\tilde{x}}_i^k)_{0\leq i \leq I_k}$ be the iterates generated by Algorithm \ref{alg:NEW-BC-FB}. Let $k \in \mathbb{N}$ and $0 \leq i_1, < i_2 \leq I_k$. Let $\gamma \in (0,1/2)$. Under Assumptions \ref{ass:1}, \ref{ass:2}, \ref{ass:4}, and \ref{ass:8}, there exist 
    \begin{equation}
    0< \alpha_i^k \leq \min\left\{ 1, (1-\gamma)\frac{\underline{\nu}}{\beta}\right\},
\end{equation}
then there exists $\eta>0$ such that
\begin{equation}
\Psi_k(\mathbf{\tilde{x}}_{i+1}^k) + \eta\sum_{\ell \in J_i^k} \Vert \tilde{x}_{i+1,\ell}^k - \tilde{x}_{i,\ell}^k \Vert^2 \leq \Psi_k(\mathbf{\tilde{x}}_i^k),
\end{equation}
hence
\begin{equation} \label{eq:new_descent_cycle_majorant}
    \Psi_k(\mathbf{\tilde{x}}_{i_2}^k) + \eta \sum_{i=i_1}^{i_2-1} \sum_{\ell \in J_i^k} \Vert \tilde{x}_{i+1,\ell}^k - \tilde{x}_{i,\ell}^k \Vert^2  \leq \Psi_k(\mathbf{\tilde{x}}_{i_1}^k).
\end{equation}
\end{proposition}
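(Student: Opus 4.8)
The plan is to analyze a single inner iteration $(k,i)$ and show that the damped proximal-Newton step admits a step size $\alpha_i^k$ bounded away from $0$ that guarantees sufficient decrease of the majorant $\Psi_k$; the telescoped inequality \eqref{eq:new_descent_cycle_majorant} then follows by summing over $i = i_1,\ldots,i_2-1$ exactly as in the summation step of Proposition \ref{prop:descent_cycle_majorant}. First I would fix $k$, $i$, write $J = J_i^k$, and abbreviate $\mathbf{d} = \mathbf{\tilde{x}}_{i+1}^k - \mathbf{\tilde{x}}_i^k$ (supported on $J$). Since $\tilde{x}_{i+1,J}^k$ is the exact minimizer of the strongly convex quadratic-plus-$\psi$ subproblem in Algorithm \ref{alg:NEW-BC-FB}, comparing the optimal value to the value at $y = \tilde{x}_{i,J}^k$ yields
\begin{equation*}
    \sum_{\ell\in J}\lambda_{\ell,k}\psi_\ell(\tilde{x}_{i+1,\ell}^k) + \langle\!\langle \nabla_J f(\mathbf{\tilde{x}}_i^k), \mathbf{d}\rangle\!\rangle + \frac{1}{2\alpha_i^k}\langle\!\langle \mathbf{d}, \nabla^2_{J,J} f(\mathbf{\tilde{x}}_i^k)\,\mathbf{d}\rangle\!\rangle \le \sum_{\ell\in J}\lambda_{\ell,k}\psi_\ell(\tilde{x}_{i,\ell}^k).
\end{equation*}
On the other hand, the quadratic upper bound of $f$ from Assumption \ref{ass:2} (or the descent lemma using the global Lipschitz constant $\beta$ from Proposition \ref{prop:multi_block_smoothness}, which applies thanks to Assumption \ref{ass:8}) gives $f(\mathbf{\tilde{x}}_{i+1}^k \mid \mathbf{\tilde{x}}_{i,\overline J}^k) \le f(\mathbf{\tilde{x}}_i^k) + \langle\!\langle \nabla_J f(\mathbf{\tilde{x}}_i^k), \mathbf{d}\rangle\!\rangle + \frac{\beta}{2}\vertiii{\mathbf{d}}^2$.

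Next I would combine these two inequalities to cancel the cross term $\langle\!\langle \nabla_J f(\mathbf{\tilde{x}}_i^k), \mathbf{d}\rangle\!\rangle$, obtaining
\begin{equation*}
    \Psi_k(\mathbf{\tilde{x}}_{i+1}^k) - \Psi_k(\mathbf{\tilde{x}}_i^k) \le \frac{\beta}{2}\vertiii{\mathbf{d}}^2 - \frac{1}{2\alpha_i^k}\langle\!\langle \mathbf{d}, \nabla^2_{J,J} f(\mathbf{\tilde{x}}_i^k)\,\mathbf{d}\rangle\!\rangle \le \left(\frac{\beta}{2} - \frac{\underline{\nu}}{2\alpha_i^k}\right)\vertiii{\mathbf{d}}^2,
\end{equation*}
where the last step uses $\nabla^2_{J,J} f(\mathbf{\tilde{x}}_i^k) \succcurlyeq \underline{\nu}\,\Id$ (Assumption \ref{ass:8} restricted to the block $J$). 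Imposing $\alpha_i^k \le (1-\gamma)\underline{\nu}/\beta$ forces the coefficient to be at most $-\frac{\gamma}{2\alpha_i^k}\vertiii{\mathbf{d}}^2 \le -\tfrac{\gamma\beta}{2(1-\gamma)\underline{\nu}}\vertiii{\mathbf{d}}^2 =: -\eta\,\vertiii{\mathbf{d}}^2$ with $\eta>0$, and since $\vertiii{\mathbf{d}}^2 = \sum_{\ell\in J}\|\tilde{x}_{i+1,\ell}^k - \tilde{x}_{i,\ell}^k\|^2$ this is precisely the per-iteration sufficient decrease; together with $\alpha_i^k \le 1$ we get the stated range. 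Summing over $i$ from $i_1$ to $i_2-1$ telescopes $\Psi_k$ and yields \eqref{eq:new_descent_cycle_majorant}. (Here I would note that $\eta$ depends only on $\gamma$, $\underline{\nu}$, $\overline{\nu}$, $\beta$, not on $k$ or $i$, so it is uniform.)

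The main obstacle is handling the constant $C$ in the majorant $q^g$ and the fact that $\Psi_k(\mathbf{x}) = f(\mathbf{x}) + m_k(\mathbf{x}) + C$: the subproblem only involves $m_k$, so the manipulation above is really about $f + m_k$, and one must be careful that adding back the constant $C$ (and the terms $f$ contributes on blocks outside $J$, which are unchanged since $\tilde{x}_{i+1,\overline J}^k = \tilde{x}_{i,\overline J}^k$) does not affect the inequality — this is the same bookkeeping as at the end of the proof of Proposition \ref{prop:descent_cycle_majorant}. A secondary subtlety is justifying the use of the single constant $\beta$ for the block-$J$ restriction of $\nabla f$: this follows from Proposition \ref{prop:multi_block_smoothness} applied with $\boldsymbol{\varepsilon}$ the indicator of $J$, which is legitimate under Assumption \ref{ass:1}, itself implied on $\RR^N$ by the twice-differentiability and the bound $\nabla^2 f \preccurlyeq \overline{\nu}\,\Id$ of Assumption \ref{ass:8}. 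Everything else is the routine telescoping already carried out once in this section.
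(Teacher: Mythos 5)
Your proposal is correct and follows essentially the same route as the paper: the paper phrases the step-size restriction as the condition under which an Armijo line search with parameter $\gamma$ succeeds, and then chains the minimizer optimality inequality, the block descent lemma, and the lower Hessian bound $\nabla^2_{J,J} f \succcurlyeq \underline{\nu}\,\Id$ exactly as you do, before telescoping over $i$. The only blemish is a harmless constant slip in your intermediate bound (the coefficient is $-\tfrac{\gamma\underline{\nu}}{2\alpha_i^k}$, not $-\tfrac{\gamma}{2\alpha_i^k}$, giving $\eta=\gamma\underline{\nu}/2$ after using $\alpha_i^k\le 1$), which does not affect the conclusion since only $\eta>0$ is needed.
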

\begin{proof}
    We have for every $k\in \mathbb{N}$, $i\in \{0,\ldots,I_k\}$, that $\tilde{x}_{i+1}^k$, as a minimizer, ensures %
\begin{equation}\label{eq:lambda_bound}
    \langle  \nabla_{J_i^k} f(\mathbf{\tilde{x}}^k_i), \tilde{x}^k_{i+1,J_i^k} - \tilde{x}^k_{i,J_i^k} \rangle + m_k(\tilde{x}^k_{i+1,J_i^k}) -m_k(\tilde{x}^k_{i,J_i^k}) \leq \frac{-1}{2 \alpha^k_i}  \langle \tilde{x}^k_{i+1,J_i^k} - \tilde{x}^k_{i,J_i^k},\nabla^2_{J_i^k,J_i^k} f(\mathbf{\tilde{x}}^k_i)(\tilde{x}^k_{i+1,J_i^k} - \tilde{x}^k_{i,J_i^k}) \rangle,
\end{equation}
where for the simplicity of reading we replace $\sum_{\ell \in J_i^k} \lambda_{\ell,k}\psi_\ell(y_\ell)$ by $m_k(y_{J_i^k})$, and whenever it is needed, $\sum_{\ell=1}^L (\phi_\ell \circ \psi_\ell)(\tilde{x}_\ell^k) - (\phi_\ell' \circ \psi_\ell)(\psi_\ell(\tilde{x}_\ell^k))$ is added implicitly as many times as necessary on both sides of the inequalities (as it is constant over $i$).
Set $\gamma\in(0,1/2)$. An Armijo line-search \citep{bertsekas1999nonlinear,hsieh2014quic,lee2014proximal} consists in finding $\alpha^k_i$ such that 
\begin{equation*}
    \Psi_k(\mathbf{\tilde{x}}^k_{i+1}) - \Psi_k(\mathbf{\tilde{x}}^k_i)\leq \gamma \left( \langle  \nabla_{J_i^k} f(\mathbf{\tilde{x}}^k_i), \tilde{x}^k_{i+1,J_i^k} - \tilde{x}^k_{i,J_i^k}  \rangle  + m_k(\mathbf{\tilde{x}}^k_{i+1}) -m_k(\mathbf{\tilde{x}}^k_{i})\right),
\end{equation*}
We have by Assumption \ref{ass:1} that there exists $\beta_{J_i^k}>0$ such that 
\begin{equation*}
    \Psi_k(\mathbf{\tilde{x}}^k_{i+1}) - \Psi_k(\mathbf{\tilde{x}}^k_i)\leq \langle  \nabla_{J_i^k} f(\mathbf{\tilde{x}}^k_i), \tilde{x}^k_{i+1,J_i^k} - \tilde{x}^k_{i,J_i^k}  \rangle + m_k(\mathbf{\tilde{x}}^k_{i+1}) -m_k(\mathbf{\tilde{x}}^k_{i}) + \frac{\beta_{J_i^k}}{2}\Vert \tilde{x}^k_{i+1,J_i^k} - \tilde{x}^k_{i,J_i^k}\Vert^2.
\end{equation*}
Thus, we want
\begin{equation*}
    (1-\gamma)\left(\langle  \nabla_{J_i^k} f(\mathbf{\tilde{x}}^k_i), \tilde{x}^k_{i+1,J_i^k} - \tilde{x}^k_{i,J_i^k}  \rangle + m_k(\mathbf{\tilde{x}}^k_{i+1}) -m_k(\mathbf{\tilde{x}}^k_{i})\right) + \frac{\beta_{J_i^k}}{2}\Vert \tilde{x}^k_{i+1,J_i^k} - \tilde{x}^k_{i,J_i^k}\Vert^2 \leq 0.
\end{equation*}
An by \eqref{eq:lambda_bound} we find that this holds if
\begin{equation*}
    \left(\frac{\beta_{J_i^k}}{2}-\frac{(1-\gamma)}{2\alpha^k_i}\underline{\nu}\right) \Vert\tilde{x}^k_{i+1,J_i^k} - \tilde{x}^k_{i,J_i^k}\Vert^2 \leq 0,
\end{equation*}
which holds if
\begin{equation*}
    0< \alpha^k_i \leq \min\left\{ 1, (1-\gamma)\frac{\underline{\nu}}{\beta_{J_i^k}}\right\}.
\end{equation*}
Therefore, we have the existence of $\eta>0$ such that
\begin{equation*}
    \Psi_k(\mathbf{\tilde{x}}_{i+1}^k) + \eta\sum_{\ell \in J_i^k} \Vert \tilde{x}_{i+1,\ell}^k - \tilde{x}_{i,\ell}^k \Vert^2 \leq \Psi_k(\mathbf{\tilde{x}}_i^k).
\end{equation*}
\end{proof}

We can expect that $\alpha^k=1$ is sufficient when a solution is close \citep{hsieh2014quic} (i.e., when the difference between iterates is small). Now, an identical proposition to Proposition \ref{prop:descent_cycle} holds (i.e., a sufficient decrease condition on $\Psi$) and for conciseness of the argument we won't restate it here.

\begin{proposition}{\textbf{Subgradient norm upper bound.}}\label{prop:newton_subgradient_bound}
Let $(\mathbf{x}^k)_{k\in\mathbb{N}}$ and, for every $k \in \mathbb{N}$, $(\mathbf{\tilde{x}}_i^k)_{0\leq i \leq I_k}$ be the iterates generated by Algorithm \ref{alg:NEW-BC-FB}. Let $k \in \mathbb{N}$ and $0 \leq i_1, < i_2 \leq I_k$. Under Assumptions \ref{ass:1}, \ref{ass:2}, \ref{ass:4}, \ref{ass:5} and \ref{ass:8}, we have for every $k \in \mathbb{N}$,
\begin{equation}
    \vertiii{t(\mathbf{x}^{k+1})} \leq \kappa \sum_{i=0}^{I_k-1} \sum_{\ell \in J_i^k} \Vert \tilde{x}_{i+1,\ell}^k - \tilde{x}_{i,\ell}^k \Vert,
\end{equation}
where $\kappa>0$, and
\begin{equation}
    \left\{ \begin{array}{l}
        t(\mathbf{x}^{k+1}) = \nabla f (\mathbf{x}^{k+1}) + v^{k+1} \in \partial \Psi(\mathbf{x}^{k+1}) \\
        v^{k+1}(\mathbf{x}^{k+1}) = \left( \lambda_{\ell,k+1} v_\ell^{k+1}(x_\ell^{k+1})\right)_{1\leq \ell \leq L}
    \end{array}\right.
\end{equation}
For every $\ell \in \{1,\ldots,L\}$, $v_\ell^{k+1}(x_\ell^{k+1}) = \tilde{v}_\ell^{k}(\tilde{x}_{k_\ell+1,\ell}^k) \in \partial  \psi_\ell(\tilde{x}_{k_\ell+1,\ell}^k)$, given by Algorithm \ref{alg:NEW-BC-FB}, at the last iteration $k_\ell \in \{0,\ldots,I_k-1\}$ where the block $\ell$ has been updated.
\end{proposition}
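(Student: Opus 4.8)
The plan is to mirror the proof of Proposition~\ref{prop:subgradient_bound} almost verbatim, adapting only the parts that differ because the Newton algorithm updates an entire block-group $J_i^k$ jointly rather than block-by-block. First I would invoke Proposition~\ref{prop:separability} and Proposition~\ref{prop:chain_rule} to write $\partial\Psi(\mathbf{x}^{k+1}) = \nabla f(\mathbf{x}^{k+1}) + \partial g(\mathbf{x}^{k+1})$ and $\partial g(\mathbf{x}^{k+1}) = (\lambda_{\ell,k+1}\partial\psi_\ell(x_\ell^{k+1}))_{1\le\ell\le L}$, using that for each $\ell$ the last update of block $\ell$ occurred at some inner iteration $k_\ell$, so $x_\ell^{k+1} = \tilde{x}_{k_\ell+1,\ell}^k$ and $\lambda_{\ell,k+1} = (\phi'_\ell\circ\psi_\ell)(\tilde{x}_{k_\ell+1,\ell}^k)$. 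The first-order optimality condition for the joint minimization over $J_i^k$ in Algorithm~\ref{alg:NEW-BC-FB}, together with the stopping criterion $\|\nabla_{J_i^k}f(\mathbf{\tilde{x}}_i^k) + \tilde{v}_i^k(\mathbf{\tilde{x}}_{i+1}^k)\| \le \mu\sum_{\ell\in J^k_i}\|\tilde{x}^k_{i+1,\ell}-\tilde{x}^k_{i,\ell}\|$, gives an approximate subgradient at the stale point $\mathbf{\tilde{x}}_{k_\ell}^k$.

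Next I would carry out the telescoping/triangle-inequality estimate: for each $\ell$,
\begin{align*}
\Vert \nabla_\ell f(\mathbf{x}^{k+1}) + v_\ell^{k+1}(x_\ell^{k+1}) \Vert
&\le \Vert \nabla_\ell f(\mathbf{x}^{k+1}) - \nabla_\ell f(\mathbf{\tilde{x}}_{k_\ell}^k) \Vert + \Vert \nabla_\ell f(\mathbf{\tilde{x}}_{k_\ell}^k) + v_\ell^{k+1}(x_\ell^{k+1}) \Vert,
\end{align*}
where the second term is bounded by $\mu\sum_{\ell'\in J^k_{k_\ell}}\|\tilde{x}^k_{k_\ell+1,\ell'}-\tilde{x}^k_{k_\ell,\ell'}\|$ via the inexactness criterion, and the first term is bounded by the block smoothness from Assumption~\ref{ass:1}:
\begin{align*}
\Vert \nabla_\ell f(\mathbf{x}^{k+1}) - \nabla_\ell f(\mathbf{\tilde{x}}_{k_\ell}^k) \Vert
\le \sum_{j=1}^L \beta_{\ell,j} \Vert x_j^{k+1} - \tilde{x}_{k_\ell,j}^k \Vert
\le \max_{\ell,j}\beta_{\ell,j} \sum_{i=0}^{I_k-1}\sum_{j\in J_i^k}\Vert \tilde{x}_{i+1,j}^k - \tilde{x}_{i,j}^k \Vert,
\end{align*}
the last step using that $x_j^{k+1} - \tilde{x}_{k_\ell,j}^k$ telescopes over the inner iterations and that nonnegative terms are added. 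Summing over $\ell\in\{1,\ldots,L\}$ and collecting constants then yields $\vertiii{t(\mathbf{x}^{k+1})} \le \kappa \sum_{i=0}^{I_k-1}\sum_{\ell\in J_i^k}\|\tilde{x}_{i+1,\ell}^k - \tilde{x}_{i,\ell}^k\|$ with $\kappa = L(\max_{\ell,j}\beta_{\ell,j} + \mu)$, using Assumption~\ref{ass:4} to keep all sums finite.

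The one genuinely new point — and the main obstacle — is justifying that the joint minimization over $J_i^k$ in Algorithm~\ref{alg:NEW-BC-FB} actually produces, for each $\ell \in J_i^k$, an element $\tilde{v}_\ell^k(\tilde{x}_{i+1,\ell}^k) \in \partial\psi_\ell(\tilde{x}_{i+1,\ell}^k)$ whose aggregate with $\nabla_{J_i^k}f$ is controlled. Here one uses that the quadratic term $\tfrac{1}{2\alpha_i^k}\langle\,\cdot\,,\nabla^2_{J_i^k,J_i^k}f(\mathbf{\tilde{x}}_i^k)\,\cdot\,\rangle$ is differentiable and that $\nabla^2 f$ is bounded above by $\overline{\nu}\Id$ under Assumption~\ref{ass:8}, so the gradient of the quadratic at the minimizer is bounded by $\tfrac{\overline{\nu}}{\alpha_i^k}\|\tilde{x}_{i+1,J_i^k}^k - \tilde{x}_{i,J_i^k}^k\|$; combined with the first-order optimality condition $0 \in \nabla_{J_i^k}f(\mathbf{\tilde{x}}_i^k) + \tfrac{1}{\alpha_i^k}\nabla^2_{J_i^k,J_i^k}f(\mathbf{\tilde{x}}_i^k)(\tilde{x}_{i+1,J_i^k}^k - \tilde{x}_{i,J_i^k}^k) + \partial_{J_i^k} m_k(\tilde{x}_{i+1,J_i^k}^k)$ this recovers exactly the stopping criterion with $\mu$ absorbing $\overline{\nu}/\alpha_i^k$ (bounded below by Proposition~\ref{prop:newton_descent_cycle_majorant}), so no genuine inexactness tolerance is even needed. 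Because the structure is otherwise identical to Proposition~\ref{prop:subgradient_bound}, I would state the proof concisely, pointing to that proposition and to \citep[Proposition~4.3]{repetti2021variable} for the details that transfer unchanged.
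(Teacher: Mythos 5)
Your proposal is correct and follows essentially the same route as the paper: reduce to the argument of Proposition~\ref{prop:subgradient_bound}, and handle the one new ingredient by extracting the subgradient $\tilde{v}^k_i(\mathbf{\tilde{x}}^k_{i+1}) = \frac{1}{\alpha^k_i}\nabla^2_{J_i^k,J_i^k} f(\mathbf{\tilde{x}}^k_i)(\tilde{x}^k_{i,J_i^k}-\tilde{x}^k_{i+1,J_i^k})-\nabla_{J_i^k} f(\mathbf{\tilde{x}}^k_i)$ from the first-order optimality condition of the joint Newton subproblem, bounding it via $\nabla^2 f \preccurlyeq \overline{\nu}\Id$ and the lower bound on $\alpha_i^k$ from Proposition~\ref{prop:newton_descent_cycle_majorant}, exactly as the paper does. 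The only discrepancy is the precise value of $\kappa$, which is immaterial since the statement only requires some $\kappa>0$.
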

\begin{proof}The proof is essentially the same as the one of Proposition \ref{prop:subgradient_bound}. We only need to show that the selected subgradient in Algorithm \ref{alg:NEW-BC-FB} can indeed satisfy the bound.
    We have for every $k\in \mathbb{N}$, $i\in \{0,\ldots,I_k\}$, that $\tilde{x}_{i+1}^k$, as a minimizer, ensures  yields the following subdifferential inclusion
\begin{equation*}
    \frac{1}{\alpha^k_i} \nabla^2_{J_i^k,J_i^k} f(\mathbf{\tilde{x}}^k_i)(\tilde{x}^k_{i,J_i^k}-\tilde{x}^k_{i+1,J_i^k}) \in \nabla_{J_i^k} f (\mathbf{\tilde{x}}^k_i) + \partial_{J_i^k} m_k(\tilde{x}^k_{i+1,J_i^k}). 
\end{equation*}
Hence,
\begin{align*}
\frac{1}{\alpha^k_i} \Vert \nabla^2_{J_i^k,J_i^k} f(\mathbf{\tilde{x}}^k_i)(\tilde{x}^k_{i,J_i^k}-\tilde{x}^k_{i+1,J_i^k})\Vert &\leq \frac{\overline{\nu}}{\alpha^k_i} \Vert \tilde{x}^k_{i,J_i^k}-\tilde{x}^k_{i+1,J_i^k}\Vert = \frac{\overline{\nu}}{\alpha^k_i} \sqrt{\sum_{\ell \in J^k_i} \Vert \tilde{x}^k_{i+1,\ell}-\tilde{x}^k_{i,\ell}\Vert^2} \\ & \leq \frac{\overline{\nu}}{\alpha^k_i} \sum_{\ell \in J^k_i} \Vert \tilde{x}^k_{i+1,\ell}-\tilde{x}^k_{i,\ell}\Vert 
\end{align*}
Now there exists some $\mu>0$ such that $\alpha_i^k\geq \mu$ for all $k\in\mathbb{N}$, $i\in \{0,\ldots,I_k-1\}$. Take for all $k\in\mathbb{N}$, $i\in \{0,\ldots,I_k-1\}$
\begin{equation*}
    (\tilde{v}_\ell^{k}(\mathbf{\tilde{x}}^k_{i+1}))_{\ell \in J^k_i} = \left(\frac{1}{\alpha^k_i}\nabla^2_{J_i^k,J_i^k} f(\mathbf{\tilde{x}}^k_i)(\tilde{x}^k_{i,J_i^k}-\tilde{x}^k_{i+1,J_i^k})-\nabla_{J_i^k} f (\mathbf{\tilde{x}}^k_i)\right)_{\ell \in J^k_i}.
\end{equation*}
From this point the proof is the same as in Proposition \ref{prop:subgradient_bound} and set $\kappa = L \times (\overline{\nu}+\mu)$.
\end{proof}
\noindent With these two propositions (\ref{prop:newton_descent_cycle_majorant} and \ref{prop:newton_subgradient_bound}), Theorem \ref{th:convergence} holds for Algorithm \ref{alg:NEW-BC-FB}. We therefore have convergence of the iterates to a critical point of $\Psi$.

\subsection{A Gauss-Seidel algorithm for composite optimization}
\label{subsec:Gauss-Seidel}
In this section, we present the last variant of our proposed block-coordinate framework for composite optimization, that involves direct minimization of the majorant $\Psi_k$ at iteration $k\in\mathbb{N}$ with respect to selected blocks.  %
The proof of convergence remains the same as for the two previous algorithm. For this algorithm only, we will assume that the majorants are strongly convex with respect to the selected set of variables to update at each iteration.
\noindent The strong convexity assumption on the majorant is formally stated below.
\begin{assumption} \label{ass:6}
    For every $k \in \mathbb{N}$, the majorant $\Psi_k$ is strongly convex with respect to the variables $(x_\ell)_{\ell \in J_i^k}$ for all $i\in\{0,\ldots,I_k-1\}$ with modulus $\mu_{k,i}>0$. Moreover, there exists $\underline{\mu}>0$ such that for all $k\in\mathbb{N}$, for all $i \in \{0,\ldots,I_k-1\}$, $\mu_{k,i} \geq \underline{\mu}$.
\end{assumption}

\paragraph{Our proposed Gauss-Seidel method.}
Start from a feasible $\mathbf{x}^0 \in \dom \Psi$, and alternate the minimization with respect to pre-defined group of blocks. The algorithm, named CO-Gauss-Seidel is detailed in Algorithm \ref{alg:gauss_seidel}.
\begin{algorithm}[t]
\caption{\textbf{CO-GS}}
\label{alg:gauss_seidel}
\begin{algorithmic}[1]
\For{$k = 0,1,\dots$}
  \State $\mathbf{\tilde{x}}_0^k \gets \mathbf{x}^k$
  \For{$i = 0,\ldots, I_k-1$}
    \State \textbf{Find} $(\tilde{x}_{i+1,\ell}^k)_{\ell\in J_i^k}$ such that
    \Statex \hspace{\algorithmicindent}%
    $(\tilde{x}_{i+1,\ell}^k)_{\ell\in J_i^k} \in \arg\min_{y_{J_i^k}} \ \Psi_k\!\big[\mathbf{\tilde{x}}_i^k\big]$
     and $\mathbf{\tilde{x}}_{i+1}^k \in \operatorname{dom}\Psi$
  \EndFor
  \State $\mathbf{x}^{k+1} \gets (\tilde{x}_{I_k,1}^{k},\ldots,\tilde{x}_{I_k,L}^{k})$
\EndFor
\end{algorithmic}
\end{algorithm}

\noindent $\argmin_{\ell \in J_i^k} \Psi_k[\mathbf{\tilde{x}}_i^k]$ denotes the minimization of the function $\Psi_k$ with respect to the variables $(x_\ell)_{\ell \in J_i^k}$, provided that the current iterate is $\mathbf{\tilde{x}}_i^k$, and the solution of each minimization lives in the domain of the function $\Psi$.
\begin{proposition}{\textbf{Sufficient decrease condition on the majorant function.}}
    \label{prop:gauss_cycle_majorant}
    Let $(\mathbf{x}^k)_{k\in\mathbb{N}}$ and, for every $k \in \mathbb{N}$, $(\mathbf{\tilde{x}}_i^k)_{0\leq i \leq I_k}$ be the iterates generated by Algorithm \ref{alg:gauss_seidel}. Let $k \in \mathbb{N}$ and $0 \leq i_1, < i_2 \leq I_k$. Under Assumptions \ref{ass:1}, \ref{ass:2}, and \ref{ass:6}, there exists $\eta>0$ such that
    \begin{equation} \label{eq:gauss_cycle_majorant}
        \Psi_k(\mathbf{\tilde{x}}_{i_2}^k) + \eta \sum_{i=i_1}^{i_2-1} \sum_{\ell \in J_i^k} \Vert \tilde{x}_{i+1,\ell}^k - \tilde{x}_{i,\ell}^k \Vert^2  \leq \Psi_k(\mathbf{\tilde{x}}_{i_1}^k)
    \end{equation}
    \end{proposition}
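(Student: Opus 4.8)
The plan is to exploit strong convexity of the majorant $\Psi_k$ on the selected blocks to turn the fact that $\tilde{x}_{i+1,J_i^k}^k$ is an \emph{exact} minimizer of $\Psi_k$ restricted to these blocks into a quantitative decrease. The key observation is that when $h$ is $\mu$-strongly convex and $y^\star$ minimizes $h$, then for any $y$ one has $h(y) \ge h(y^\star) + \frac{\mu}{2}\|y - y^\star\|^2$ (this follows from strong convexity together with $0 \in \partial h(y^\star)$). I would first isolate one inner iteration $i$, apply this with $h = \Psi_k(\cdot_{J_i^k} \mid \tilde{x}_{i,\overline{J_i^k}}^k)$, $y = \tilde{x}_{i,J_i^k}^k$, $y^\star = \tilde{x}_{i+1,J_i^k}^k$, and $\mu = \mu_{k,i} \ge \underline{\mu}$ by Assumption~\ref{ass:6}. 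Then telescope over $i$ from $i_1$ to $i_2 - 1$.

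\begin{proof}
Fix $k\in\mathbb{N}$ and $i\in\{0,\dots,I_k-1\}$. By construction, $\mathbf{\tilde{x}}_{i+1}^k$ agrees with $\mathbf{\tilde{x}}_i^k$ on the blocks outside $J_i^k$, and $(\tilde{x}_{i+1,\ell}^k)_{\ell\in J_i^k}$ minimizes $y_{J_i^k}\mapsto \Psi_k(y_{J_i^k}\mid \tilde{x}_{i,\overline{J_i^k}}^k)$, which by Assumption~\ref{ass:6} is strongly convex with modulus $\mu_{k,i}\ge\underline{\mu}>0$. Since $0$ belongs to the subdifferential of this restricted function at its minimizer, strong convexity (see \eqref{eq:subdif_convex}) gives, evaluating at the point $\tilde{x}_{i,J_i^k}^k$,
\begin{equation*}
    \Psi_k(\mathbf{\tilde{x}}_{i}^k) \ge \Psi_k(\mathbf{\tilde{x}}_{i+1}^k) + \frac{\mu_{k,i}}{2}\,\Vert \tilde{x}_{i+1,J_i^k}^k - \tilde{x}_{i,J_i^k}^k\Vert^2.
\end{equation*}
Because $\tilde{x}_{i+1,\ell}^k = \tilde{x}_{i,\ell}^k$ for $\ell\notin J_i^k$, we have $\Vert \tilde{x}_{i+1,J_i^k}^k - \tilde{x}_{i,J_i^k}^k\Vert^2 = \sum_{\ell\in J_i^k}\Vert \tilde{x}_{i+1,\ell}^k - \tilde{x}_{i,\ell}^k\Vert^2$, so with $\eta := \underline{\mu}/2 > 0$,
\begin{equation*}
    \Psi_k(\mathbf{\tilde{x}}_{i+1}^k) + \eta \sum_{\ell\in J_i^k}\Vert \tilde{x}_{i+1,\ell}^k - \tilde{x}_{i,\ell}^k\Vert^2 \le \Psi_k(\mathbf{\tilde{x}}_i^k).
\end{equation*}
Summing this inequality over $i$ from $i_1$ to $i_2-1$ telescopes the left- and right-hand $\Psi_k$ terms and yields \eqref{eq:gauss_cycle_majorant}.
\end{proof}

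\textbf{Main obstacle.} The only genuine subtlety is the passage from ``exact minimizer of a strongly convex function'' to the quadratic growth inequality: one must be careful that the strong convexity in Assumption~\ref{ass:6} is stated with respect to the \emph{selected block variables only} (with the complement frozen at $\tilde{x}_{i,\overline{J_i^k}}^k$), so the descent estimate is in the $J_i^k$-norm, and one needs the padding identity $\Vert\tilde{x}_{i+1,J_i^k}^k - \tilde{x}_{i,J_i^k}^k\Vert^2 = \sum_{\ell\in J_i^k}\Vert\tilde{x}_{i+1,\ell}^k-\tilde{x}_{i,\ell}^k\Vert^2$ to match the statement. Everything else is a routine telescoping argument identical in spirit to Proposition~\ref{prop:descent_cycle_majorant}; note that, unlike the forward–backward case, no use of Assumption~\ref{ass:3} is needed here since the decrease already holds block-wise without invoking the gradient Lipschitz majorization of $f$ beyond what is packaged into $\Psi_k$.
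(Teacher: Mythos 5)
Your proposal is correct and follows essentially the same route as the paper: quadratic growth at the exact minimizer of the strongly convex restricted majorant (Assumption~\ref{ass:6}), the choice $\eta=\underline{\mu}/2$, and telescoping from $i_1$ to $i_2-1$. The only difference is that you spell out the padding identity and the $0\in\partial h(y^\star)$ step, which the paper leaves implicit.
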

\begin{proof}
    We have for every $k \in \mathbb{N}$, $i \in \{0,\ldots,I_k\}$, and $\ell \in J_i^k$ that
    \begin{equation*}
        (\tilde{x}_{i+1,\ell}^k)_{\ell \in J_i^k} = \argmin_{\ell \in J_i^k} \Psi_k(\mathbf{\tilde{x}}_i^k).
    \end{equation*}
    Hence, using Assumption \ref{ass:6}, there exists $\underline{\mu}>0$ such that we have
    \begin{equation}
        \Psi_k(\mathbf{\tilde{x}}^k_{i+1}) + \frac{\underline{\mu}}{2}\sum_{\ell \in J_i^k} \Vert \tilde{x}_{i+1,\ell}^k - \tilde{x}_{i,\ell}^k \Vert^2 \leq \Psi_k(\mathbf{\tilde{x}}_{i}^k).
    \end{equation}
    Summing up from $i_1$ to $i_2-1$ and choosing $\eta = \underline{\mu}/2$, we obtain the desired result.
\end{proof}
A similar proposition to Proposition \ref{prop:descent_cycle} holds trivially for the iterates of Algorithm \ref{alg:gauss_seidel}, hence like for Algorithm \ref{alg:NEW-BC-FB} we do not restate it here.
\begin{proposition}{\textbf{Subgradient upper bound.}} \label{prop:subgradient_bound_gauss}
    Let $(\mathbf{x}^k)_{k\in\mathbb{N}}$ and, for every $k \in \mathbb{N}$, $(\mathbf{\tilde{x}}_i^k)_{0\leq i \leq I_k}$ be the iterates generated by Algorithm \ref{alg:gauss_seidel}. Let $k \in \mathbb{N}$ and $0 \leq i_1, < i_2 \leq I_k$. Under Assumptions \ref{ass:1}, \ref{ass:2}, and \ref{ass:6} we have for every $k \in \mathbb{N}$,
    \begin{equation}
        \vertiii{t(\mathbf{x}^{k+1})} \leq \kappa \sum_{i=0}^{I_k-1} \sum_{\ell \in J_i^k} \Vert \tilde{x}_{i+1,\ell}^k - \tilde{x}_{i,\ell}^k \Vert,
    \end{equation}
    where $\kappa>0$, and
    \begin{equation}
        \left\{ \begin{array}{l}
            t(\mathbf{x}^{k+1}) = \nabla f (\mathbf{x}^{k+1}) + v^{k+1} \in \partial \Psi(\mathbf{x}^{k+1}) \\
            v^{k+1}(\mathbf{x}^{k+1}) = \left( \lambda_{\ell,k+1} v_\ell^{k+1}(x_\ell^{k+1})\right)_{1\leq \ell \leq L}
        \end{array}\right.
    \end{equation}
    For every $\ell \in \{1,\ldots,L\}$, $v_\ell^{k+1}(x_\ell^{k+1}) =-\nabla_\ell f(\mathbf{\tilde{x}}_{k_\ell+1}^k) \in \partial  \psi_\ell(\tilde{x}_{k_\ell+1,\ell}^k)$, at the last iteration $k_\ell \in \{0,\ldots,I_k-1\}$ where the block $\ell$ has been updated.
    \end{proposition}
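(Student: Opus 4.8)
Here is a plan for proving Proposition~\ref{prop:subgradient_bound_gauss}.

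The plan is to follow the template of the proofs of Propositions~\ref{prop:subgradient_bound} and~\ref{prop:newton_subgradient_bound}, the only new point being that the inexactness term (resp. the Newton term) is replaced by the \emph{exact} first-order optimality condition of the Gauss--Seidel subproblem. By Propositions~\ref{prop:separability} and~\ref{prop:chain_rule}, a subgradient of $\Psi$ at $\mathbf{x}^{k+1}$ has $\ell$-th component of the form $\nabla_\ell f(\mathbf{x}^{k+1}) + \lambda_{\ell,k+1}\,w_\ell$ with $w_\ell \in \partial\psi_\ell(x_\ell^{k+1})$, so it suffices to exhibit, for each $\ell$, one admissible $w_\ell$ and to bound $\Vert\nabla_\ell f(\mathbf{x}^{k+1}) + \lambda_{\ell,k+1}w_\ell\Vert$ by a sum of inner increments.

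Fix $\ell$ and let $k_\ell\in\{0,\dots,I_k-1\}$ be the last inner iteration of the $k$-th outer loop with $\ell\in J_{k_\ell}^k$, so that $x_\ell^{k+1}=\tilde{x}_{k_\ell+1,\ell}^k$. Since $\Psi_k = f + m_k + C$ with $m_k$ separable and, by Assumption~\ref{ass:6}, strongly convex in the active variables, the Gauss--Seidel step produces the unique minimizer of $\Psi_k$ over $(x_\ell)_{\ell\in J_{k_\ell}^k}$; Fermat's rule together with the separability of the subdifferential (Proposition~\ref{prop:separability}) then yields, for that block,
\begin{equation*}
    0 \in \nabla_\ell f(\mathbf{\tilde{x}}_{k_\ell+1}^k) + \lambda_{\ell,k}\,\partial\psi_\ell(\tilde{x}_{k_\ell+1,\ell}^k),
\end{equation*}
where $\nabla_\ell f$ is evaluated at $\mathbf{\tilde{x}}_{k_\ell+1}^k$ because the inactive blocks are unchanged at step $k_\ell$. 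Hence $-\lambda_{\ell,k}^{-1}\nabla_\ell f(\mathbf{\tilde{x}}_{k_\ell+1}^k)\in\partial\psi_\ell(x_\ell^{k+1})$, and choosing $w_\ell$ accordingly gives
\begin{equation*}
    t_\ell(\mathbf{x}^{k+1}) := \nabla_\ell f(\mathbf{x}^{k+1}) + \lambda_{\ell,k+1}w_\ell = \big(\nabla_\ell f(\mathbf{x}^{k+1}) - \nabla_\ell f(\mathbf{\tilde{x}}_{k_\ell+1}^k)\big) + \frac{\lambda_{\ell,k}-\lambda_{\ell,k+1}}{\lambda_{\ell,k}}\,\nabla_\ell f(\mathbf{\tilde{x}}_{k_\ell+1}^k).
\end{equation*}

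It then remains to bound the two terms. For the first, I would write $x_j^{k+1}-\tilde{x}_{k_\ell+1,j}^k$ as the telescoping sum $\sum_{i=k_\ell+1}^{I_k-1}(\tilde{x}_{i+1,j}^k-\tilde{x}_{i,j}^k)$, apply Assumption~\ref{ass:1} block by block and the triangle inequality, exactly as in the proof of Proposition~\ref{prop:subgradient_bound}, to obtain $\Vert\nabla_\ell f(\mathbf{x}^{k+1}) - \nabla_\ell f(\mathbf{\tilde{x}}_{k_\ell+1}^k)\Vert \le (\max_{\ell,j}\beta_{\ell,j})\sum_{i=0}^{I_k-1}\sum_{j\in J_i^k}\Vert\tilde{x}_{i+1,j}^k-\tilde{x}_{i,j}^k\Vert$. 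The second term is the routine correction coming from the reweighting: $|\lambda_{\ell,k}-\lambda_{\ell,k+1}|$ is bounded by a constant times $\sum_{i=0}^{I_k-1}\Vert\tilde{x}_{i+1,\ell}^k-\tilde{x}_{i,\ell}^k\Vert$ by Lipschitz continuity of $\phi'_\ell\circ\psi_\ell$ on the (bounded) iterate range, while $\lambda_{\ell,k}$ is bounded below by a positive constant and $\Vert\nabla_\ell f(\mathbf{\tilde{x}}_{k_\ell+1}^k)\Vert$ from above, so this term is again controlled by a constant times $\sum_{i=0}^{I_k-1}\sum_{j\in J_i^k}\Vert\tilde{x}_{i+1,j}^k-\tilde{x}_{i,j}^k\Vert$ (and it vanishes altogether if one regards the reweighting coefficient as frozen within a given outer iteration). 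Summing $\Vert t_\ell(\mathbf{x}^{k+1})\Vert$ over $\ell\in\{1,\dots,L\}$ and absorbing all constants into a single $\kappa>0$ then gives $\vertiii{t(\mathbf{x}^{k+1})}\le\kappa\sum_{i=0}^{I_k-1}\sum_{\ell\in J_i^k}\Vert\tilde{x}_{i+1,\ell}^k-\tilde{x}_{i,\ell}^k\Vert$. The only delicate point is bookkeeping: making sure that Fermat's rule for the \emph{joint} block subproblem is read componentwise at the right reference point $\mathbf{\tilde{x}}_{k_\ell+1}^k$, and that the exhibited element genuinely lies in $\partial\psi_\ell(x_\ell^{k+1})$ (hence the reconciliation of $\lambda_{\ell,k}$ with $\lambda_{\ell,k+1}$); once this is set up, the estimate is identical to those already carried out in the proofs of Propositions~\ref{prop:subgradient_bound} and~\ref{prop:newton_subgradient_bound}.
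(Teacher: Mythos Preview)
Your approach is essentially identical to the paper's: read the first-order optimality condition of the Gauss--Seidel subproblem componentwise to exhibit an element of $\partial\psi_\ell(x_\ell^{k+1})$, then bound the resulting gradient difference $\nabla_\ell f(\mathbf{x}^{k+1}) - \nabla_\ell f(\mathbf{\tilde{x}}_{k_\ell+1}^k)$ by telescoping and block Lipschitzness, and sum over $\ell$ to get $\kappa = L\max_{\ell,j}\beta_{\ell,j}$. The one substantive difference is that you keep explicit track of the mismatch between $\lambda_{\ell,k}$ (the weight in the subproblem actually solved) and $\lambda_{\ell,k+1}$ (the weight appearing in $\partial g(\mathbf{x}^{k+1})$), which produces your second correction term; the paper's proof simply drops the $\lambda$ factors at that step and works directly with the bare gradient difference. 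Your handling of this reweighting mismatch is arguably the more careful of the two, but note that bounding that term the way you describe (uniform positive lower bound on $\lambda_{\ell,k}$, upper bound on $\|\nabla_\ell f\|$, Lipschitz constant for $\phi'_\ell\circ\psi_\ell$ on the iterate range) tacitly uses boundedness of the iterates, i.e.\ Assumption~\ref{ass:bounded}, which is not among the hypotheses listed in the proposition.
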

    \begin{proof}
        As in the proof of Proposition \ref{prop:subgradient_bound}, we have that $\partial \Psi(\mathbf{x}^{k+1}) = \nabla f (\mathbf{x}^{k+1}) + \partial g(\mathbf{x}^{k+1})$, with
         \begin{align}
    \partial g(\mathbf{x}^{k+1}) = \left( \lambda_{\ell,k+1} \partial \psi_\ell(\tilde{x}_{k_\ell+1,\ell}^{k})\right)_{1\leq \ell \leq L},
    \end{align}
    where $k_\ell$ is the last iteration where the block $\ell$ has been updated, and where $\lambda_{\ell,k+1} = (\phi'_\ell \circ \psi_\ell)(x_{\ell}^{k+1}) = (\phi'_\ell \circ \psi_\ell)(\tilde{x}_{k_\ell+1,\ell}^{k})$. Now at iteration $k$, we have for all $\ell \in J_{k_\ell,i}$ that
    \begin{equation}
        0 \in \nabla_\ell f(\mathbf{\tilde{x}}_{k_\ell+1}^k) + \partial \psi_\ell(\tilde{x}_{k_\ell+1,\ell}^k).
    \end{equation}
    Hence, there exists $v_\ell^{k+1}(x_{\ell}^{k+1}) \in \partial \psi_\ell(\tilde{x}_{k_\ell+1,\ell}^k)$ such that
    \begin{equation}
        v_\ell^{k+1}(x_{\ell}^{k+1}) = -\nabla_\ell f(\mathbf{\tilde{x}}_{k_\ell+1}^k) .
    \end{equation}
    Therefore, we have that
    \begin{align}
        \Vert \nabla_\ell f(\mathbf{x}^{k+1}) + v_\ell^{k+1}(x_\ell^{k+1}) \Vert & =  \Vert \nabla_\ell f(\mathbf{x}^{k+1}) - \nabla_\ell f(\mathbf{\tilde{x}}_{k_\ell+1}^k) \Vert
    \end{align}
    Now, we can use the block smoothness of $f$ to obtain
    \begin{align}
        \Vert \nabla_\ell f(\mathbf{x}^{k+1})-\nabla_\ell f(\mathbf{\tilde{x}}_{k_\ell+1}^k) \Vert & \leq \sum_{j=1}^L \beta_{\ell,j} \Vert x^{k+1}_j - \tilde{x}^{k}_{k_\ell+1,j} \Vert \nonumber \\
        & \leq \max_{\ell,j} \beta_{\ell,j} \sum_{i=0}^{I_k-1}  \sum_{j\in J_i^k}\Vert \tilde{x}_{i+1,j}^{k} - \tilde{x}_{i,j}^{k} \Vert.
    \end{align}
    Now set $\kappa = L \times \left(\max_{\ell,j} \beta_{\ell,j} \right)$ to obtain the desired bound on the norm of the subgradient:
    \begin{equation}
        \vertiii{t(\mathbf{x}^{k+1})} \leq \kappa \sum_{i=0}^{I_k-1}  \sum_{\ell \in J_i^k} \Vert \tilde{x}_{i+1,\ell}^k - \tilde{x}_{i,\ell}^k \Vert.
    \end{equation}
    \end{proof}
\noindent Again with Propositions \ref{prop:gauss_cycle_majorant} and \ref{prop:subgradient_bound_gauss}, Theorem \ref{th:convergence} holds for Algorithm \ref{alg:gauss_seidel}. Hence, we have convergence of the iterates to a critical point of $\Psi$.

\section{Numerical experiments: non-convex sparse matrix estimation}
\label{sec:graphical_LASSO}
In this section, we illustrate the computational advantages of our proposed framework to the reconstruction of sparse precision matrices. We illustrate for three classes of solvers for the non-convex Graphical Lasso problem that convergence to a solution can be obtained at minimal computational cost. To be consistent with the notations of the sparse precision matrix estimation literature, we denote the variable with $\boldsymbol{\Theta}$ instead of $\mathbf{x}$ in this section.
\paragraph{Graphical Lasso problem.}
Recall that we want to solve the following problem:
\begin{equation} \label{eq:nncvx_graphical_LASSO}
    \widehat{\boldsymbol{\Theta}} = \argmin_{\boldsymbol{\Theta} \in \mathbb{S}^d_{++}} \Psi(\boldsymbol{\Theta}) = - \log \det(\boldsymbol{\Theta}) + \mathrm{Tr}(S^\top\boldsymbol{\Theta}) + \sum_{i,j=1}^d \phi_{i,j}\left(|[\boldsymbol{\Theta}]^{(i,j)}|\right).
\end{equation}
where for all $i,j=1,\ldots,d$, $\phi_{i,j}$ follows Assumption \ref{ass:1}(iii) and (iv). We thus have $L=d^2$ blocks. This objective function does not directly fit the standard assumptions as it is not coercive and the data fidelity term $f:= -\log \det + \mathrm{Tr}(S^T \cdot)$ only has a locally Lipschitz continuous gradient \cite[Appendix A, Lemma 2]{rolfs2012iterative}. Hence, we use Assumption \ref{ass:bounded}, and suppose that the iterates generated by an algorithm solving \eqref{eq:nncvx_graphical_LASSO} will be bounded in eigenvalues above, and from below. In addition, note that the constraint $\boldsymbol{\Theta} \in \mathbb{S}^d_{++}$ is trivially satisfied if $\boldsymbol{\Theta}$ belongs to the domain of $\Psi$ due to the $\log \det$ term. Finally, under the boundedness assumption, strong convexity with respect to any subset of variables of the majorant function ensues.

\subsection{Chosen solvers for the Graphical Lasso problem.} As we said in the introduction, there exists numerous solvers to solve the Graphical Lasso problems. In the convex setting, there does not exist a silver bullet \cite{pouliquen2025quete}. Notably, the sparsity of the solution has a great effect on the performance of the algorithms. The fastest method for high sparsity levels is QUIC \cite{hsieh2013big,hsieh2014quic}. Hence, we want to provide a general acceleration framework, applicable to any solver of the Graphical Lasso problem \eqref{eq:nncvx_graphical_LASSO}. In order to complete this task, we use our proposed framework to accelerate three algorithms:
\begin{enumerate}
    \item Graphical ISTA \cite{rolfs2012iterative}, as Algorithm \ref{alg:MAJ-BC-FB},
    \item QUIC \cite{hsieh2013big,hsieh2014quic}, as Algorithm \ref{alg:NEW-BC-FB},
    \item Primal Graphical Lasso \cite{banerjee2008model,mazumder2012}, as Algorithm \ref{alg:gauss_seidel}.
\end{enumerate}
Under BC-CO, a block-coordinate version of Graphical ISTA is possible. However, several questions remain that are out of the scope of this paper, most of them being implementation issues. Our intent with these experiments is to show that with minimal changes (a few hyperparameters), existing solvers can solve efficiently non-convex formulation of the Graphical Lasso problem. These solvers have been refined to be very efficient in solving the convex Graphical Lasso problem, we simply show that this efficiency can be carried to the non-convex setting. 

\paragraph{Graphical ISTA as Algorithm \ref{alg:MAJ-BC-FB}.}
At each iteration, Graphical ISTA performs a proximal gradient step on $f + \mathbf{q}^g$ without variable metric \cite{rolfs2012iterative}. A line search is computed to ensure positive definiteness of the next iterate, but there exists a minimal safe step size \citep[Section 4]{rolfs2012iterative}. The stopping criterion is obtained by computing the duality gap, but this has no influence on our analysis.
\paragraph{QUIC as Algorithm \ref{alg:NEW-BC-FB}.} At each iteration, QUIC algorithm performs a proximal Newton update on a selected subset of coordinates, identified based on the norm of their respective gradient \cite{hsieh2013big,hsieh2014quic}. This Newton direction is itself computed using a coordinate descent algorithm. After that, a line search estimates the step size. It is a notable difference with our algorithm that needs to compute both the direction and the step size at the same time (otherwise the subgradient bound is lost). However in practice, the step is equal to $1$ after a finite number of iterations (i.e., when close to a solution), thus we expect our algorithm and QUIC to have equivalent behaviors. This was also shown theoretically in \citep{hsieh2013big}.

\paragraph{Primal-GLasso as Algorithm \ref{alg:gauss_seidel}.} For this algorithm, consider the following block decomposition of $\boldsymbol{\Theta}$:
\begin{equation}
    \boldsymbol{\Theta} = \left[\begin{array}{cc}
        \boldsymbol{\Theta}_{11} & \Theta_{12} \\
        \Theta_{12}^\top & \theta_{22}
    \end{array}\right],
\end{equation}
where $\boldsymbol{\Theta}_{11} \in \RR^{(d-1) \times (d-1)}$, $\Theta_{12} \in \RR^{(d-1) \times 1}$ and $\theta_{22} \in \RR$. At each iteration of the Primal-GLasso algorithm, the majorant $f+\mathbf{q}^g$ is minimized with respect to $\Theta_{12}, \Theta_{12}^\top$, and $\theta_{22}$ keeping $\boldsymbol{\Theta}_{11}$ fixed (then a simple permutation is done at the next iteration). This partial minimization is done by coordinate descent, and is guaranteed to yield positive definite updates \cite{mazumder2012,pouliquen2025quete,pouliquen2024schur}. 

\subsection{Experimental setting} The experimental setting is similar to that of \citep{pouliquen2025quete}, whose authors created a benchmark for the Graphical Lasso problem in convex and non-convex settings.
\paragraph{Dataset.} We generate synthetic data  by creating a sparse precision matrix $\boldsymbol{\Theta}_{\mathrm{true}} \in \RR^{75 \times 75}$ using \texttt{sklearn.datasets.make$\_$sparse$\_$spd$\_$matrix} \citep{pedregosa2011scikit}, onto which is added $0.1 \Id$ to slightly improve conditioning, following \citep{pouliquen2025quete}. We set the sparsity level to $90\%$, i.e., only $10\%$ of the entries are non-zero. Then, we generate $P = 1000$ independent samples of a centered Gaussian distribution $X^{(i)} \sim \mathcal{N}\left(0,\boldsymbol{\Theta}_{\mathrm{true}}^{-1}\right)$. These samples are used to compute the empirical covariance matrix $S = \frac{1}{P}\sum_{i=1}^P X^{(i)} (X^{(i)})^\top$. All the proposed solvers are initialized with this information to solve Problem \eqref{eq:nncvx_graphical_LASSO}. 
\paragraph{Non-convex penalties.} We consider the following non-convex penalties: $\phi = \log(\cdot + \epsilon)$ (log sum), $\phi = \sqrt{\cdot + \epsilon}$ ($\ell_{0.5}$), and $\phi = \mathrm{MCP}$ \citep{zhang2010nearly}.
The subsequent majorants of $g$ are then defined with the closed form expression of Section \ref{sec:framework}. The weights are then:
for $\log$-sum, $\lambda_{\ell,k} = \gamma_\ell (|\theta_\ell^k|+\epsilon)^{-1}$. For $\ell_{0.5}$, $\lambda_{\ell,k} = \gamma_l\left(2\sqrt{|\theta_\ell^k|+\epsilon}\right)^{-1}$. For MCP, $\lambda_{\ell,k} = \max\left\{0,\gamma_\ell - |\theta_\ell^k|/\epsilon\right\}$. We detail the computation for the log and the MCP penalties in Appendix \ref{app:reweighting}.
\paragraph{Reconstruction metrics.} We cannot expect our algorithms to converge to the same solution due to the non-convexity of the problem. Therefore we estimate the quality of each solution by computing the F1-score between the binary matrices highlighting the support of $\boldsymbol{\Theta}_{\mathrm{true}}$ and $\boldsymbol{\widehat{\Theta}}$, and by computing the normalized mean squared error $NMSE = \frac{\vertiii{\boldsymbol{\widehat{\Theta}}-\boldsymbol{\Theta}_{\mathrm{true}}}^2}{\vertiii{\boldsymbol{\Theta}_{\mathrm{true}}}^2}$. These two metrics provide complementary information about the quality of estimation: the F1-score tells us how well the support of the true precision matrix is estimated.

For each penalty, we identify with a grid search the value of the optimal regularization parameter $\gamma$ (at initialization) providing the best reconstruction performance, i.e. the maximum F1-score, and the minimum normalized mean squared error. The grid search is computed with $20$ values of $\lambda$.

\paragraph{Experimental results.}  

For each algorithm, we compute $20$ reweighting operations, i.e., updates of the components' penalization. What changes is the number of iterations in between these reweighting operations. Thus we go from $20$ total iterations (i.e., $1$ iteration per reweighting), to $2000$ (i.e., $100$ per reweighting which is the standard configuration \cite{pouliquen2025quete}).

We present the results separately for the three algorithms in Figures \ref{fig:gista0.9} (Graphical ISTA), \ref{fig:quic0.9} (QUIC), and \ref{fig:pglasso0.9} (P-GLasso). For each algorithm we subdivide the figure in four plots:
\begin{itemize}
    \item Top left corner: maximum F$1$-score w.r.t. the number of iterations between reweighting operations. 
    \item Bottom left corner: maximum F$1$-score w.r.t. the total number of iterations.
    \item Top right corner: minimum normalized MSE w.r.t. the number of iterations between reweighting operations.
    \item Bottom right corner: minimum normalized MSE w.r.t. the total number of iterations.
\end{itemize}
In blue, on each plot, we display the given metric for the standard convex $\ell_1$ penalty as a comparison. Every dot represents the same run. 
\begin{figure}[ht]
\centering
\begin{minipage}{0.45\textwidth}
    \centering
    \includegraphics[width=\textwidth]{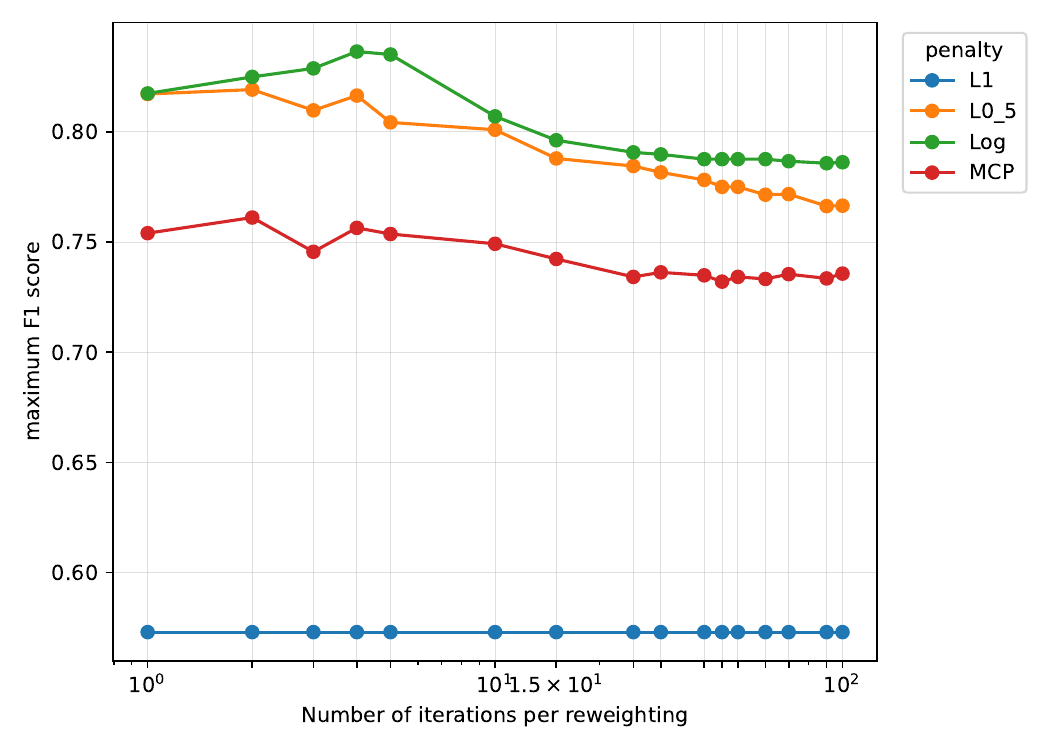}\\
    \includegraphics[width=\textwidth]{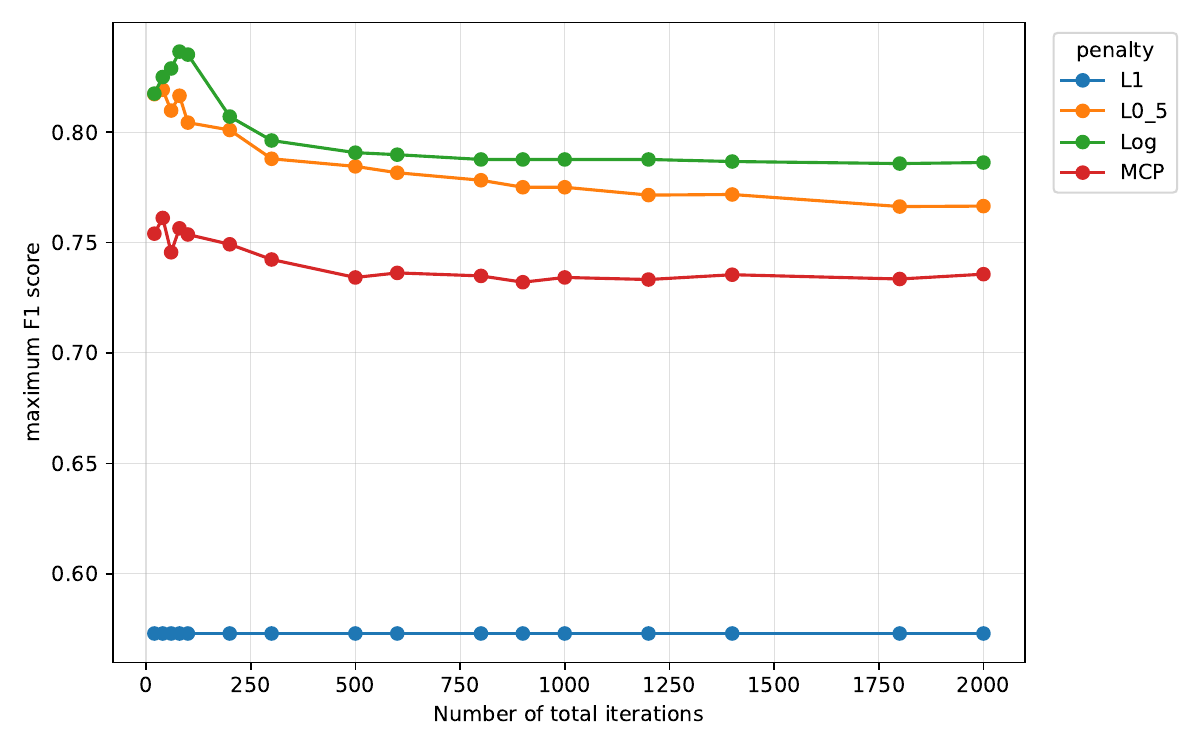}
\end{minipage}\hspace{1em}
\begin{minipage}{0.45\textwidth}
    \centering
    \includegraphics[width=\textwidth]{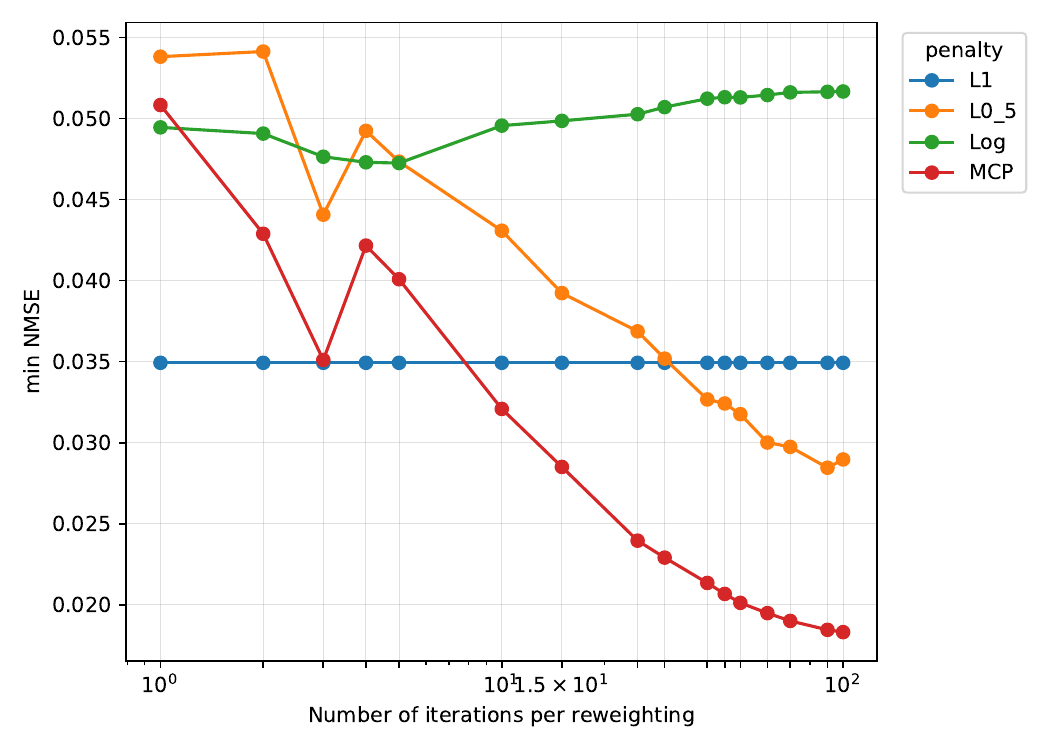}\\
    \includegraphics[width=\textwidth]{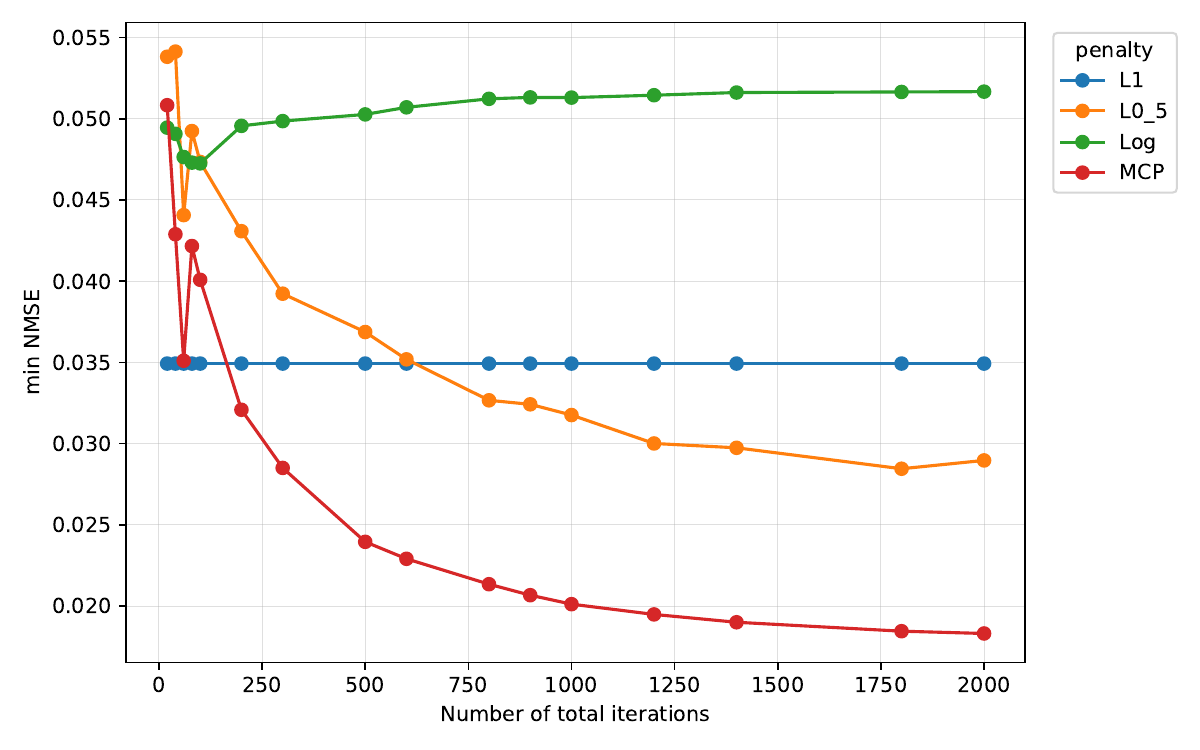}
\end{minipage}
\caption{Graphical-ISTA reconstruction for a sparsity level of $90\%$. Top left: maximum F$1$-score w.r.t. number of iterations between reweightings (log-scale). Bottom left: maximum F$1$-score w.r.t. total number of iterations. Top right: minimum NMSE w.r.t. number of iterations between reweightings (log-scale). Bottom right: minimum NMSE  w.r.t. total number of iterations. } \label{fig:gista0.9} 
\end{figure}
We can immediately observe that support identification does not improve with the number of iterations between two reweightings for all three algorithms above $10$ iterations per reweighting (which is far from a complete minimization). Moreover, for QUIC and P-GLasso there seems to be a spot between $10$ and $15$ iterations per reweighting where the normalized mean squared error reaches a minimum value. This means that the total iterations budget can be divided $8$ to $10$ fold, while maintaining the same estimation quality as the standard reweighting approach. For Graphical ISTA the picture is less clear, the total iterations budget can only be halved. A possible explanation is that this algorithm is not a block-coordinate one, hence in a high sparsity setting like here the other two algorithms can exploit this information and thus each iteration provides more progress.
\begin{figure}[t]
\centering
\begin{minipage}{0.45\textwidth}
    \centering
    \includegraphics[width=\textwidth]{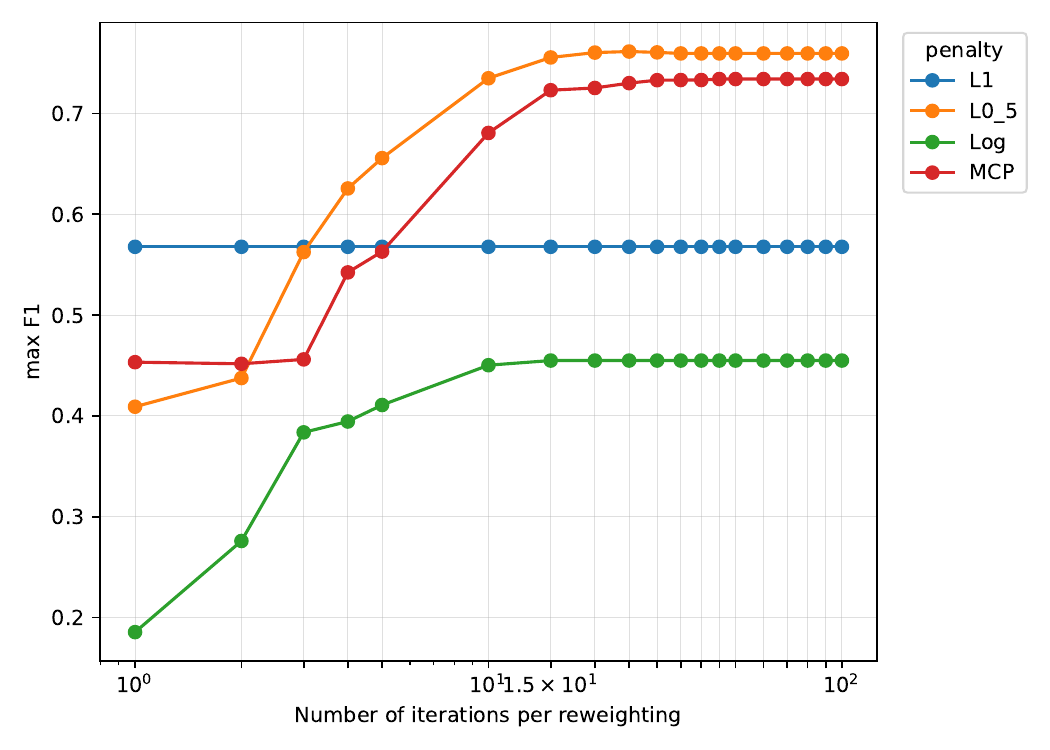}\\
    \includegraphics[width=\textwidth]{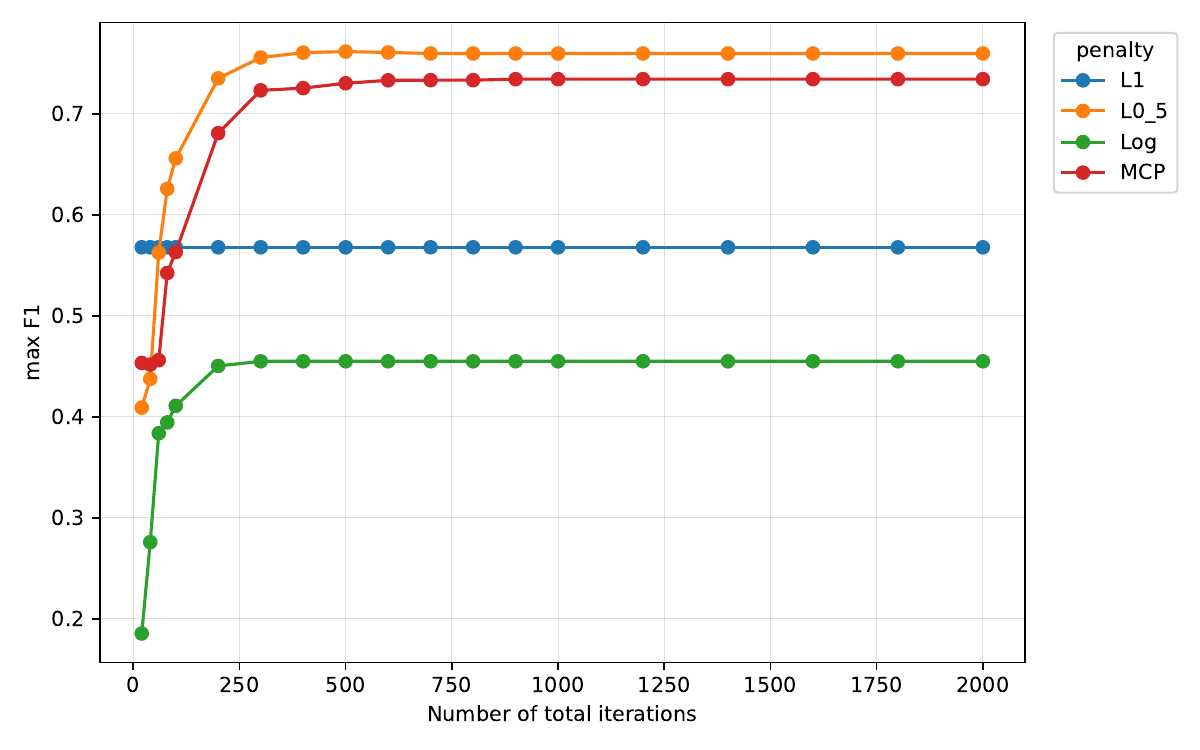}
\end{minipage}\hspace{1em}
\begin{minipage}{0.45\textwidth}
    \centering
    \includegraphics[width=\textwidth]{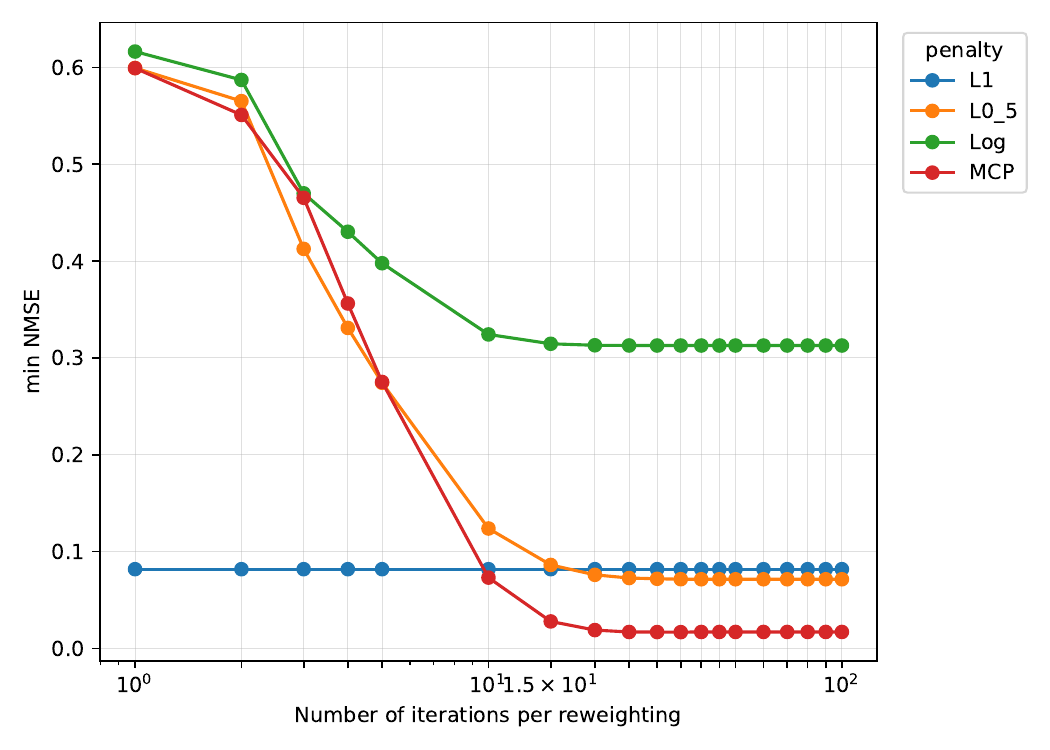}\\
    \includegraphics[width=\textwidth]{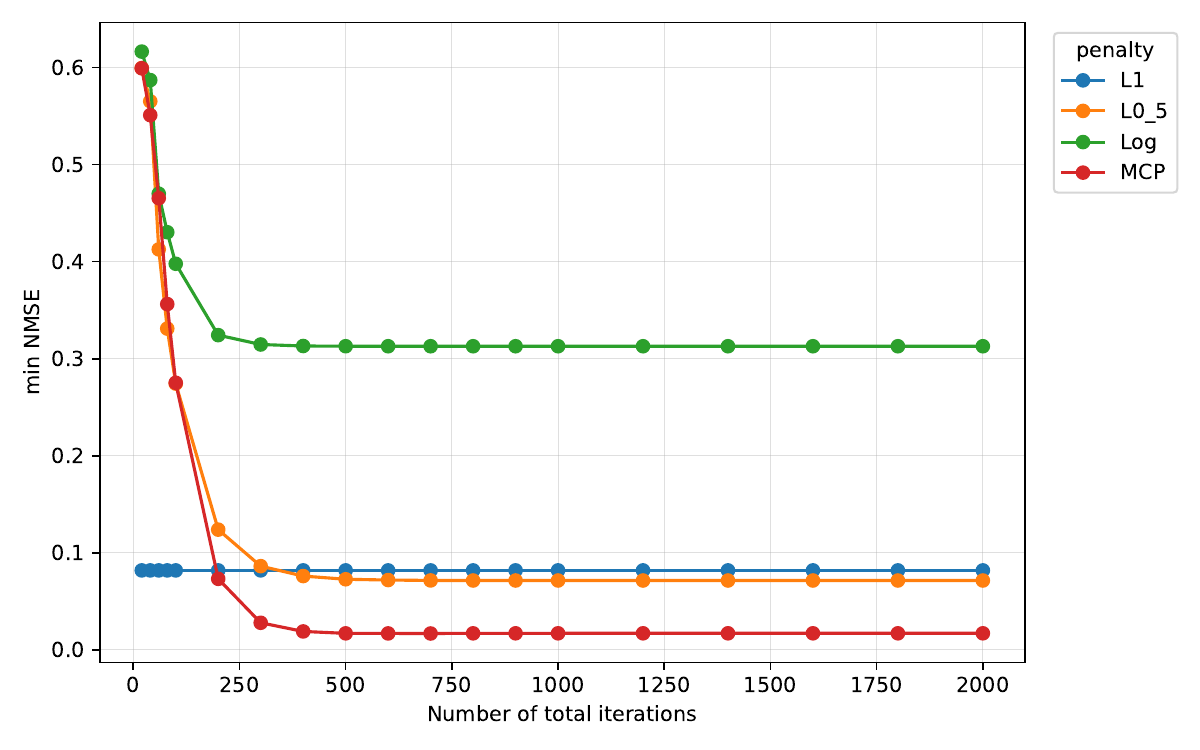}
\end{minipage}
\caption{QUIC reconstruction for a sparsity level of $90\%$. Top left: maximum F$1$-score w.r.t. number of iterations between reweightings (log-scale). Bottom left: maximum F$1$-score w.r.t. total number of iterations. Top right: minimum NMSE w.r.t. number of iterations between reweightings (log-scale). Bottom right: minimum NMSE  w.r.t. total number of iterations.} \label{fig:quic0.9} 
\end{figure}

\begin{figure}[t]
\centering
\begin{minipage}{0.45\textwidth}
    \centering
    \includegraphics[width=\textwidth]{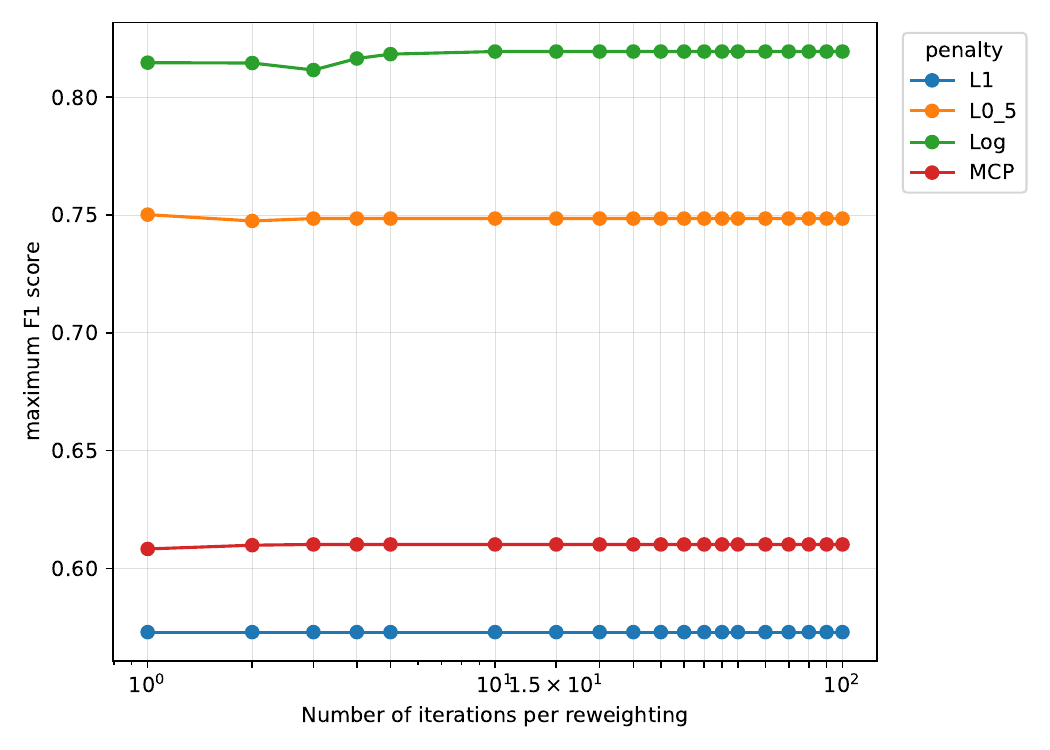}\\
    \includegraphics[width=\textwidth]{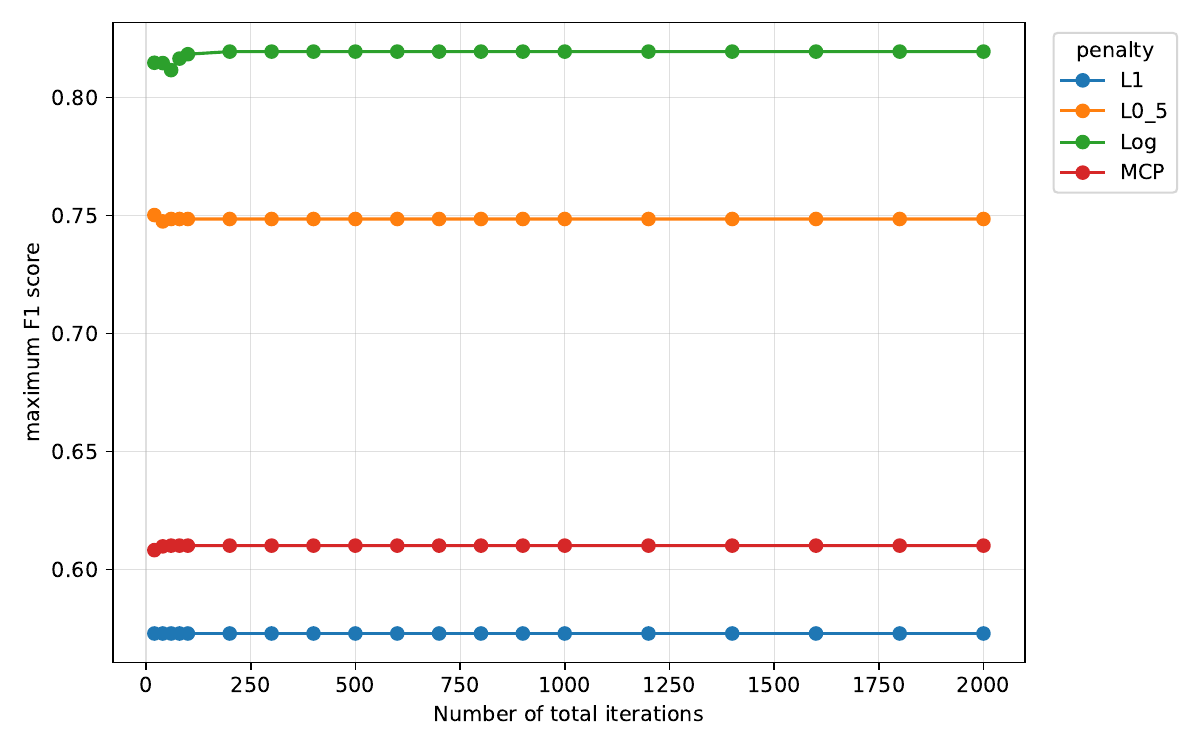}
\end{minipage}\hspace{1em}
\begin{minipage}{0.45\textwidth}
    \centering
    \includegraphics[width=\textwidth]{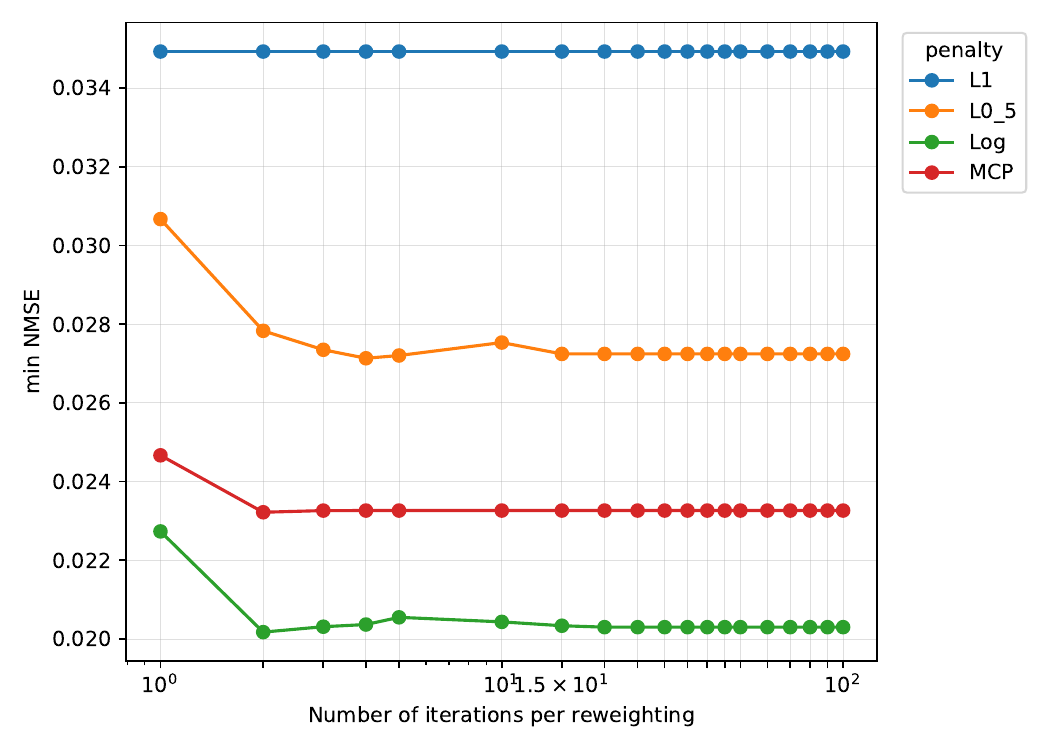}\\
    \includegraphics[width=\textwidth]{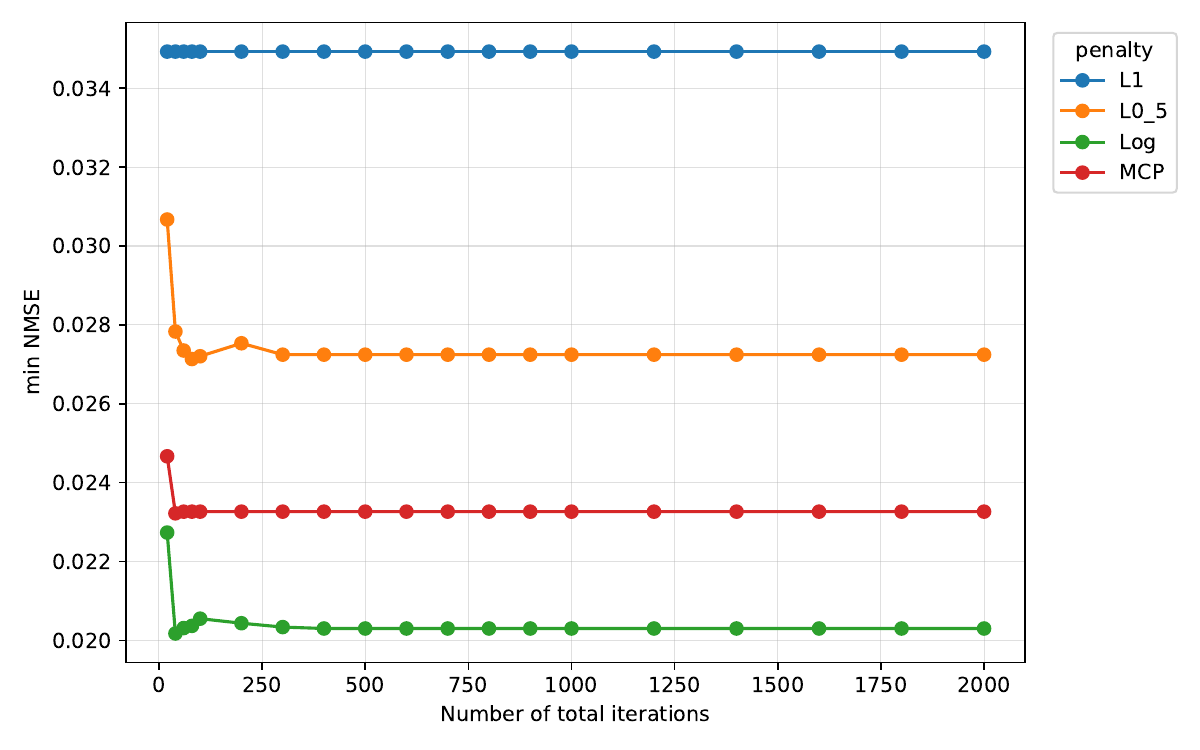}
\end{minipage}
\caption{P-GLasso reconstruction for a sparsity level of $90\%$. Top left: maximum F$1$-score w.r.t. number of iterations between reweightings (log-scale). Bottom left: maximum F$1$-score w.r.t. total number of iterations. Top right: minimum NMSE w.r.t. number of iterations between reweightings (log-scale). Bottom right: minimum NMSE  w.r.t. total number of iterations.} \label{fig:pglasso0.9} 

\end{figure}
The reader can also find in Appendix \ref{app:numerics} additional experimental results with a sparsity level of $75\%$. In this context, QUIC performance degrades which is expected given that it relies heavily on the sparsity of the solution \cite{hsieh2014quic,hsieh2013big}.

\section{Conclusion}
In this paper, we present a new framework for the minimization of non-convex composite functions, with a block-coordinate descent perspective. This framework is sufficiently general for us to derive three algorithms which tackle this optimization problem in different fashion, and thus with different strengths: a variable metric block-coordinate forward-backward algorithm, a block-coordinate proximal Newton algorithm, and a Gauss--Seidel algorithm.

We applied these algorithms on the non-convex Graphical Lasso problem, for which three versions of these algorithms exist: Graphical ISTA, QUIC and P-GLasso. We showed on a simple experiment that replacing the complete minimization at each reweighting step by a descent condition can yield impressive gain in terms of total number of iterations.

These experimental results could be further improved by developing a block-coordinate version of the Graphical ISTA algorithm, which is as of now the slowest algorithm of the three. Furthermore, this framework could be applied to different problems where reweighting is a relevant method to obtain better estimation performance like in radio-interferometric imaging \cite{repetti2020forward}.

Venues for theoretical improvements are numerous, for instance the relaxation of the Lipschitz continuity (or local Lipschitz continuity) and the explicit incorporation of line searches \cite{bonettini2018block}. Or the modelling of inexactness of the computation of proximal operators \cite{bonettini2020convergence}, which is not easily available here even though inexactness is possible.

\section*{Acknowledgments}
We thank Can Pouliquen for his great help on understanding the Graphical Lasso problem, and for the code he produced to benchmark algorithms against this problem.

\section*{Funding}
The author was supported in the beginning of this project by Fondation Simone et Cino Del Duca - Institut de France, and for the rest by MAD (ANR-24-CE23-1529) project of the French National Agency for Research (ANR).

\bibliographystyle{plain}
\bibliography{references}


\begin{appendices}
\section{Additional results} \label{app:proofs}
\subsection{Proof of Theorem \ref{th:convergence}}
The proof of the convergence theorem is identical to the one of \citep[Theorem 3]{lauga2025BCD}, up to some constants.
\begin{theorem} Suppose that Assumptions \ref{ass:1}, \ref{ass:2}, \ref{ass:3}, \ref{ass:4}  and \ref{ass:5} hold.
    Let $\{\mathbf{x}^k\}_{k \in \mathbb{N}}$ be a sequence generated by Algorithm \ref{alg:MAJ-BC-FB}. The following hold
    \begin{enumerate}[(ii)]
        \item $\{\mathbf{x}^k\}_{k \in \mathbb{N}}$ has finite length, i.e., $\sum_{k=0}^{+\infty} \vertiii{\mathbf{x}^{k+1}-\mathbf{x}^k} < + \infty$.
        \item $\{\mathbf{x}^k\}_{k \in \mathbb{N}}$ converges to a critical point $\widehat{\mathbf{x}}$ of $\Psi$.
    \end{enumerate}
\end{theorem}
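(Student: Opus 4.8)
The plan is to run the standard Kurdyka--Łojasiewicz (KŁ) descent argument in the spirit of \citep{bolte2014proximal,attouch2013convergence}, using the three ingredients already established: the sufficient decrease (Proposition \ref{prop:descent_cycle}), the subgradient upper bound (Proposition \ref{prop:subgradient_bound}), and the structure of the limit point set (Lemma \ref{lm:limit_point_set}). Write $\Delta_k := \sum_{i=0}^{I_k-1}\sum_{\ell\in J_i^k}\Vert\tilde{x}_{i+1,\ell}^k-\tilde{x}_{i,\ell}^k\Vert$ for the total inner-loop length at outer iteration $k$. By the triangle inequality $\vertiii{\mathbf{x}^{k+1}-\mathbf{x}^k}\le\Delta_k$, so it suffices to show $\sum_k\Delta_k<+\infty$. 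By Cauchy--Schwarz (at most $L\bar{I}$ terms, finite by Assumption \ref{ass:4}) together with Proposition \ref{prop:descent_cycle}(i) one has $\Psi(\mathbf{x}^{k+1})+\tfrac{\eta}{L\bar{I}}\Delta_k^2\le\Psi(\mathbf{x}^k)$; and by Proposition \ref{prop:subgradient_bound} there is $t(\mathbf{x}^{k+1})\in\partial\Psi(\mathbf{x}^{k+1})$ with $\mathrm{dist}(0,\partial\Psi(\mathbf{x}^{k+1}))\le\vertiii{t(\mathbf{x}^{k+1})}\le\kappa\Delta_k$.

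First I would dispose of the trivial case. Let $\Psi^\ast:=\lim_k\Psi(\mathbf{x}^k)$, which exists by Proposition \ref{prop:descent_cycle}(ii). If $\Psi(\mathbf{x}^{k_0})=\Psi^\ast$ for some $k_0$, the decrease inequality forces $\Delta_k=0$, hence $\mathbf{x}^{k+1}=\mathbf{x}^k$, for all $k\ge k_0$, so the sequence is eventually stationary and both claims are immediate. So assume $\Psi(\mathbf{x}^k)>\Psi^\ast$ for every $k$. By Lemma \ref{lm:limit_point_set}, $\Omega:=\lp(\mathbf{x}^0)$ is nonempty (here Assumption \ref{ass:bounded} is used), compact, and $\Psi\equiv\Psi^\ast$ on $\Omega$; the uniformized KŁ property (Lemma \ref{lm:unif_KL}) then yields $\varepsilon>0$, $\eta_{\mathrm K}>0$ and $\varphi\in\Phi_{\eta_{\mathrm K}}$ such that \eqref{eq5:inequality_KL} holds for every $\mathbf u$ with $\mathrm{dist}(\mathbf u,\Omega)<\varepsilon$ and $\Psi^\ast<\Psi(\mathbf u)<\Psi^\ast+\eta_{\mathrm K}$. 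Since $\mathrm{dist}(\mathbf{x}^k,\Omega)\to0$ (Lemma \ref{lm:limit_point_set}(ii)) and $\Psi(\mathbf{x}^k)\downarrow\Psi^\ast$, there is $k_1$ past which $\mathbf{x}^k$ lies in this region.

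For $k\ge k_1+1$ I would apply the KŁ inequality at $\mathbf{x}^k$ together with the subgradient bound there, giving $\varphi'(\Psi(\mathbf{x}^k)-\Psi^\ast)\ge(\kappa\Delta_{k-1})^{-1}$, and combine it with the concavity estimate $\varphi(\Psi(\mathbf{x}^k)-\Psi^\ast)-\varphi(\Psi(\mathbf{x}^{k+1})-\Psi^\ast)\ge\varphi'(\Psi(\mathbf{x}^k)-\Psi^\ast)\bigl(\Psi(\mathbf{x}^k)-\Psi(\mathbf{x}^{k+1})\bigr)$ and the decrease $\Psi(\mathbf{x}^k)-\Psi(\mathbf{x}^{k+1})\ge\tfrac{\eta}{L\bar{I}}\Delta_k^2$. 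Writing $\varphi_k:=\varphi(\Psi(\mathbf{x}^k)-\Psi^\ast)$ this gives $\Delta_k^2\le\tfrac{\kappa L\bar{I}}{\eta}(\varphi_k-\varphi_{k+1})\,\Delta_{k-1}$, and $\sqrt{uv}\le\tfrac12(u+v)$ turns it into $\Delta_k\le\tfrac12\Delta_{k-1}+\tfrac{\kappa L\bar{I}}{2\eta}(\varphi_k-\varphi_{k+1})$. Summing from $k_1+1$ to $K$, the half-shifted $\Delta_{k-1}$ terms are absorbed into the left-hand side and the $\varphi_k-\varphi_{k+1}$ terms telescope (with $\varphi\ge0$), yielding $\sum_{k\ge k_1+1}\Delta_k\le\Delta_{k_1}+\tfrac{\kappa L\bar{I}}{\eta}\varphi_{k_1+1}<+\infty$. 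Hence $\sum_k\vertiii{\mathbf{x}^{k+1}-\mathbf{x}^k}\le\sum_k\Delta_k<+\infty$, which is (i); the sequence is therefore Cauchy and converges to some $\widehat{\mathbf{x}}$, which by Lemma \ref{lm:limit_point_set}(i) lies in $\lp(\mathbf{x}^0)\subset\mathrm{crit}\,\Psi$, which is (ii).

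The step I expect to require the most care is the bookkeeping between the inner-loop increments $\Delta_k$ and the outer displacement $\vertiii{\mathbf{x}^{k+1}-\mathbf{x}^k}$: the subgradient bound of Proposition \ref{prop:subgradient_bound} is naturally phrased in terms of the $\ell^1$-type sum $\Delta_k$ while Proposition \ref{prop:descent_cycle} controls the $\ell^2$-type sum $\sum_i\sum_\ell\Vert\cdot\Vert^2$, so one must pass consistently between the two (losing the finite constant $L\bar{I}$) and, crucially, keep the index shift straight---the KŁ inequality is invoked at $\mathbf{x}^k$, so the subgradient bound there involves $\Delta_{k-1}$, which is exactly what makes the arithmetic--geometric-mean absorption telescope. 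Beyond that, the argument is the routine KŁ machinery and follows \citep[Theorem 3]{lauga2025BCD} verbatim up to the renaming of constants.
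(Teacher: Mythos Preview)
Your proposal is correct and follows essentially the same route as the paper's proof: both combine the sufficient decrease, the subgradient bound, and the uniformized K\L{} inequality on the limit set, then use concavity of $\varphi$ together with the AM--GM trick $2\sqrt{ab}\le a+b$ to obtain a telescoping recursion on the inner-loop lengths (your $\Delta_k$ is the paper's $D_k$). The only cosmetic difference is that you apply Cauchy--Schwarz up front to write the decrease directly as $\tfrac{\eta}{L\bar I}\Delta_k^2$, whereas the paper carries the $\ell^2$ sum $C_k$ separately and relates $D_k\le\sqrt{I_kL}\sqrt{C_k}$ at the end; the resulting inequalities and constants match.
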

\begin{proof}
\begin{enumerate}[(ii)]
  \item Since the sequence $(\mathbf{x}^k)_{k\in\mathbb{N}}$ is bounded, there exists a sub-sequence that converges to $\widehat{\mathbf{x}}$. As $\{\Psi(\mathbf{x}^k)\}_{k \in \mathbb{N}}$ is a non-increasing sequence, and as the limit points set $\lp(\mathbf{x}^0)$ is such that $\lim_{k \rightarrow \infty} \mathrm{dist} (\mathbf{x}^k,\lp(\mathbf{x}^0)) = 0$ (Lemma \ref{lm:limit_point_set}, point (ii)), there exist $k_0 \in \mathbb{N},\varepsilon >0, \eta >0$ such that for all $k > k_0$, $\mathbf{x}^k$ belongs to:
  \begin{equation*}
      \left\{\mathbf{x} \in \RR^d: \mathrm{dist} (\mathbf{x},\lp(\mathbf{x}^0)) < \varepsilon \right\} \cap [\Psi(\widehat{\mathbf{x}}) < \Psi(\mathbf{x}) < \Psi(\widehat{\mathbf{x}}) + \eta].
  \end{equation*}
  Using now that $\lp(\mathbf{x}^0)$ is nonempty and compact, and that $\Psi$ is constant on it (Lemma \ref{lm:limit_point_set}, points (ii) and (iv)), one can apply Lemma \ref{lm:unif_KL} so that for any $k>k_0$:
      \begin{equation*}
          \varphi'(\Psi(\mathbf{x}^k)-\Psi(\widehat{\mathbf{x}})) \mathrm{dist}(0,\partial \Psi (\mathbf{x}^k)) \geq 1.
      \end{equation*}
  Now, at least one element of $\partial \Psi (\mathbf{x}^k)$ has its norm bounded (Proposition \ref{prop:subgradient_bound}), thus $\mathrm{dist}(0,\partial \Psi (\mathbf{x}^k))$ is necessarily less than or equal to this bound. We have
  \begin{equation}
      \mathrm{dist}(0,\partial \Psi (\mathbf{x}^k)) \leq \kappa \sum_{i=0}^{I_{k-1}-1}  \sum_{\ell \in J_{k-1,i}} \Vert \tilde{x}_{i+1,\ell}^{k-1} - \tilde{x}_{i,\ell}^{k-1} \Vert,
  \end{equation}
  Denote by $\kappa D_{k-1}$ the right-hand side of the previous inequality. We have:
  \begin{align*}
        \varphi'(\Psi(\mathbf{x}^k)-\Psi(\widehat{\mathbf{x}})) \kappa D_{k-1} \geq 1 \implies  \varphi'(\Psi(\mathbf{x}^k)-\Psi(\widehat{\mathbf{x}}))  \geq \kappa^{-1}D_{k-1}^{-1}.
  \end{align*}
  The concavity of $\varphi$ yields that:
      \begin{equation}
          \varphi(\Psi(\mathbf{x}^k)-\Psi(\widehat{\mathbf{x}})) - \varphi(\Psi(\mathbf{x}^{k+1})-\Psi(\widehat{\mathbf{x}})) \geq \varphi'(\Psi(\mathbf{x}^k)-\Psi(\widehat{\mathbf{x}})) \left(\Psi(\mathbf{x}^k)-\Psi(\mathbf{x}^{k+1})\right).
      \end{equation}
  Now denote by
  \begin{equation}
      C_k = \sum_{i=0}^{I_{k}-1}  \sum_{\ell \in J_i^k} \Vert \tilde{x}_{i+1,\ell}^{k} - \tilde{x}_{i,\ell}^{k} \Vert^2.
  \end{equation}
  From Proposition \ref{prop:descent_cycle_majorant}, we have that $\Psi(\mathbf{x}^k)-\Psi(\mathbf{x}^{k+1}) \geq \eta C_k$. We can then write that:
  \begin{align*}
      \frac{\eta}{\kappa}\frac{C_k}{D_{k-1}} \leq \varphi(\Psi(\mathbf{x}^k)-\Psi(\widehat{\mathbf{x}})) - \varphi(\Psi(\mathbf{x}^{k+1})-\Psi(\widehat{\mathbf{x}})),
  \end{align*}
  which we can rewrite with  $\Delta_k =\varphi(\Psi(\mathbf{x}^k)-\Psi(\widehat{\mathbf{x}})) - \varphi(\Psi(\mathbf{x}^{k+1})-\Psi(\widehat{\mathbf{x}})) $and hence obtain $C_k  \leq \frac{\kappa}{\eta}D_{k-1} \Delta_k$.
  Moreover, we have 
  \begin{align*}
      C_k  = \sum_{\substack{0 \leq i <I_k \\ 1 \leq \ell \leq L}} \Vert \tilde{x}_{i+1,\ell}^{k} - \tilde{x}_{i,\ell}^{k} \Vert^2 \text{ and } D_{k} =\sum_{\substack{0 \leq i < I_{k} \\ 1 \leq \ell \leq L}} \Vert \tilde{x}_{i+1,\ell}^{k} - \tilde{x}_{i,\ell}^{k} \Vert
  \end{align*}
  The $1$-trick of Cauchy's inequality then yields:
  \begin{equation*}
      D_k \leq \sqrt{I_k \times L} \sqrt{C_k},
  \end{equation*}
  which, combined with the fact that $2\sqrt{ab} \leq a + b$  for $a,b\geq 0$, gives:
  \begin{align*}
      2 D_k & \leq 2 \sqrt{I_k \times L} \sqrt{C_k}
      \leq 2 \sqrt{L I_k \frac{\kappa}{\eta}D_{k-1} \Delta_k} 
      \leq \frac{I_k L \kappa}{\eta}\Delta_k + D_{k-1}.
  \end{align*}
  Summing up this inequality from $k=k_0+1$ to $k=K$, we obtain:
  \begin{align*}
      2 \sum_{k=k_0+1}^K D_k & \leq \sum_{k=k_0+1}^K \left(\frac{L I_k \kappa}{\eta}\Delta_k + D_{k-1}\right)\\
      & \leq \sum_{k=k_0+1}^K D_{k-1} + \frac{L \bar{I} \kappa}{\eta}\sum_{k=k_0+1}^K \Delta_k \\
      & \leq D_{k_0} +\sum_{k=k_0+1}^K D_{k} +  \frac{L \bar{I} \kappa}{\eta}\sum_{k=k_0+1}^K \Delta_k.
  \end{align*}
  Hence,
  \begin{align*}
      \sum_{k=k_0+1}^K D_k & \leq  D_{k_0} + \frac{L \bar{I} \kappa}{\eta}\sum_{k=k_0+1}^K \Delta_k  \\
      & \leq D_{k_0} + \frac{L \bar{I} \kappa}{\eta}\left(\varphi(\Psi(\mathbf{x}^{k_0+1})-\Psi(\widehat{\mathbf{x}})) - \varphi(\Psi(\mathbf{x}^{K+1})-\Psi(\widehat{\mathbf{x}})) \right) \\
      & \leq  D_{k_0} + \frac{L \bar{I} \kappa}{\eta}\varphi(\Psi(\mathbf{x}^{k_0+1})-\Psi(\widehat{\mathbf{x}})).
  \end{align*}
  The last inequality comes from the positiveness of $\varphi$ (Definition \ref{def:desingularizing}). Since $\vertiii{\mathbf{x}^{k+1}-\mathbf{x}^k} \leq D_k$, we have that $\{\mathbf{x}^k\}_{k \in \mathbb{N}}$ has finite length.
  \item Since $\{\mathbf{x}^k\}_{k \in \mathbb{N}}$ has finite length, it is a Cauchy sequence, hence it converges to a point $\widehat{\mathbf{x}}$ in $\lp(\mathbf{x}^0)$ which is a critical point of $\Psi$ (Lemma \ref{lm:limit_point_set}, point (i)).
\end{enumerate}
\end{proof}

\section{Reweighting of $\ell_1$ penalties}
\label{app:reweighting}
In this section, we present the construction of a majorant function for two non-convex penalties amenable to this type of optimization. For the rest of this section, we have for every $\ell\in \{1,\ldots,L\}$, $\psi_\ell := |\cdot|$.
\paragraph{Log-sum penalty \citep{prater2022proximity}.} Define for every $\ell\in\{1,\ldots,L\}$, the function $\phi_\ell(u) = \gamma_\ell \log(u+\epsilon)$ for every $u\in[0,+\infty]$ where $\gamma_\ell>0$ and $\epsilon>0$ are fixed parameters. The function $g$ implementing a log-sum prior is then defined as:
\begin{equation}
    (\forall \mathbf{x} \in \Hi), \quad g(\mathbf{x}) = \sum_{\ell=1}^L \gamma_\ell \log(|x_\ell|+\epsilon).
\end{equation}
and for every $k \in \mathbb{N}$, the majorant $q(\cdot, \mathbf{x}^k)$ of $g$ is given by:
\begin{equation}
    (\forall \mathbf{x} \in \Hi), \quad q(\mathbf{x},\mathbf{x}^k) = \sum_{\ell=1}^L \gamma_\ell \log(|x_\ell^k|+\epsilon) + \lambda_{\ell,k}(|x_\ell|-|x_\ell^k|),
\end{equation}
where for every $\ell\in\{1,\ldots,L\}$, $\lambda_{\ell,k} = \gamma_\ell (|x_\ell^k|+\epsilon)^{-1}$.
It leads to the following reweighting operation for all $k$ iterations:
\begin{equation}
    \Lambda_k = \textrm{Diag }\left( \left( \frac{\gamma_\ell}{|x_\ell^k|+\epsilon}\right)_{1\leq \ell \leq L} \right)
\end{equation}

\paragraph{Minimax concave penalty \citep{shen2019structured}.} Define for every $\ell\in\{1,\ldots,L\}$, the function $\phi_\ell(u) = \gamma_\ell u - \mathrm{env}_{\epsilon \gamma_\ell \psi_\ell}(u)$ for every $u\in[0,+\infty]$ where $\gamma_\ell>0$, $\epsilon>1$, and where $\mathrm{env}_{\theta \psi_\ell}$ is the Moreau envelope of $\gamma_\ell \psi_\ell$. Note that here $\epsilon$ does not play the same role as for the log-sum penalty. The function $g$ implementing the minimax concave penalty (MCP) is then defined as:
\begin{equation}
    (\forall \mathbf{x} \in \Hi), \quad g(\mathbf{x}) = \sum_{\ell=1}^L  \gamma_\ell |x_\ell| - \mathrm{env}_{\epsilon \gamma_\ell |\cdot|}(|x_\ell|).
\end{equation}
and for every $k \in \mathbb{N}$, the majorant $q(\cdot, \mathbf{x}^k)$ of $g$ is given by:
\begin{equation}
    (\forall \mathbf{x} \in \Hi), \quad q(\mathbf{x},\mathbf{x}^k) = \sum_{\ell=1}^L \gamma_\ell |x_\ell^k| - \mathrm{env}_{\epsilon \gamma_\ell |\cdot|}(|x_\ell^k|) + \lambda_{\ell,k}(|x_\ell|-|x_\ell^k|).
\end{equation}
where for every $\ell\in\{1,\ldots,L\}$, $\lambda_{\ell,k} = \max\{0,\gamma_\ell- |x_\ell^k|/\epsilon\}$. The reweighting operation for all $k$ iterations is then given by:
\begin{equation}
    \Lambda_k = \textrm{Diag }\left( \left( \max\{0,\gamma_\ell - \frac{|x_\ell^k|}{\epsilon}\}\right)_{1\leq \ell \leq L} \right).
\end{equation}

\subsection{Additional numerical experiments} \label{app:numerics} In this section, we provide additional numerical experiments on the non-convex Graphical Lasso when the sparsity level of the ground truth precision matrix is $75\%$ (i.e., a $25\%$ of non-zero components). One can directly see that to obtain the best reconstruction possible, the number of iteration per reweighting increases.

\begin{figure}[ht]
\centering
\begin{minipage}{0.45\textwidth}
    \centering
    \includegraphics[width=\textwidth]{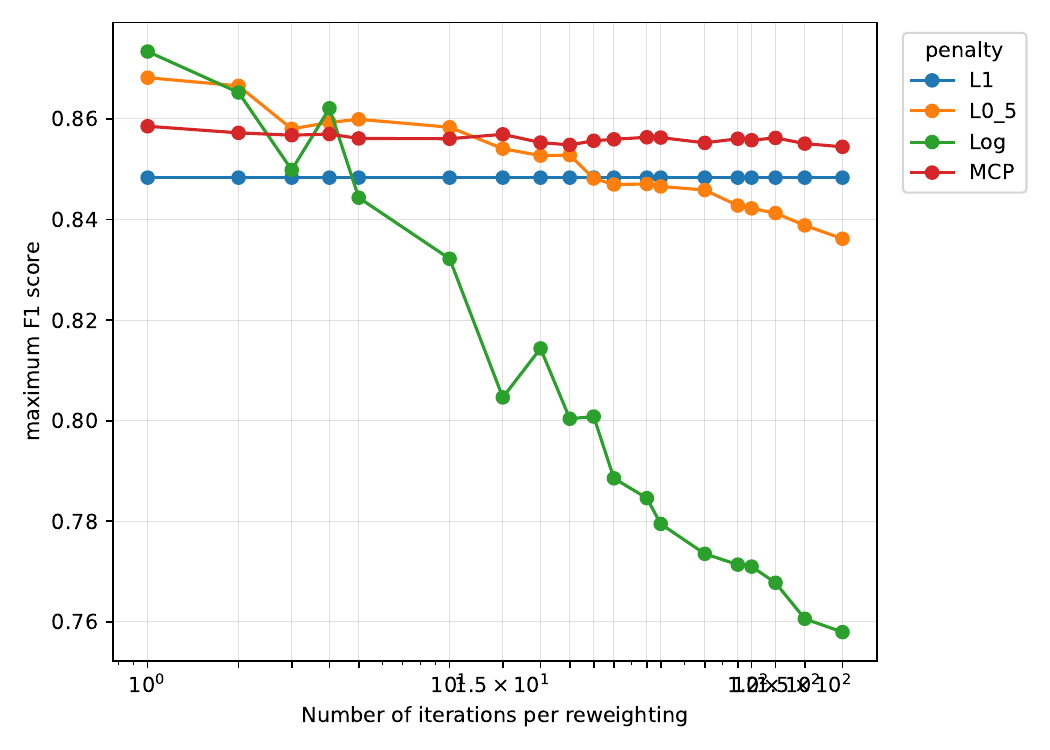}\\
    \includegraphics[width=\textwidth]{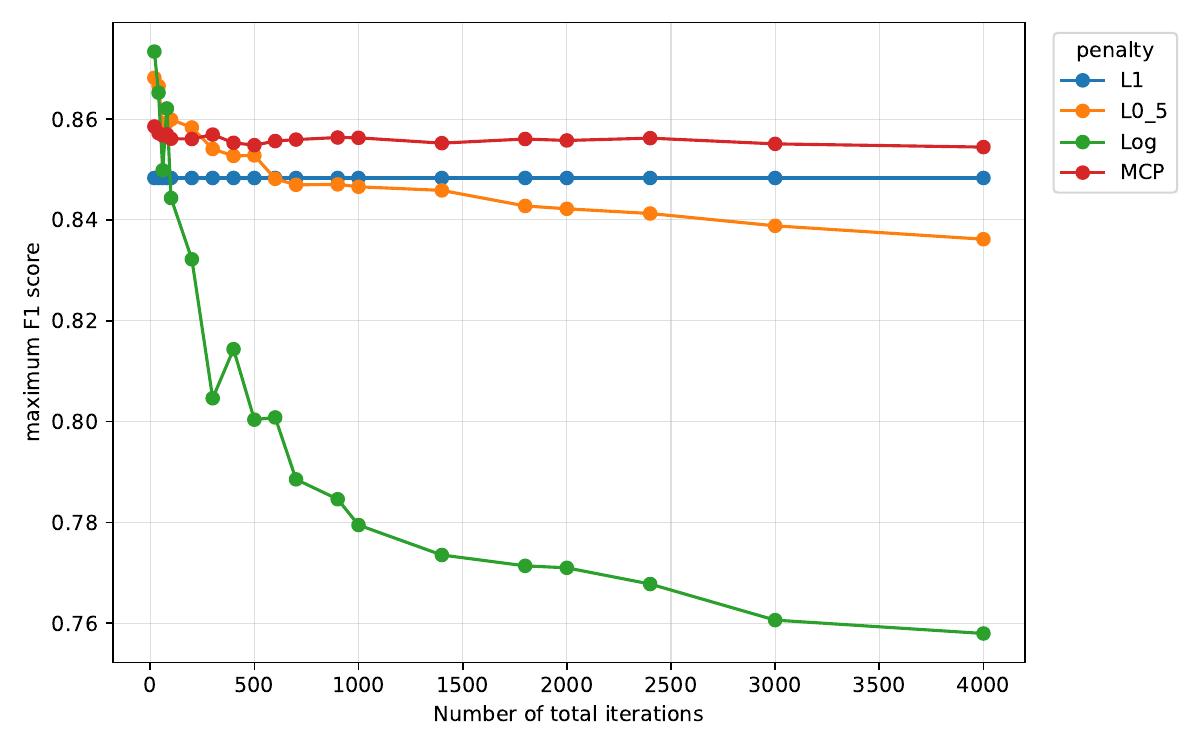}
\end{minipage}\hspace{1em}
\begin{minipage}{0.45\textwidth}
    \centering
    \includegraphics[width=\textwidth]{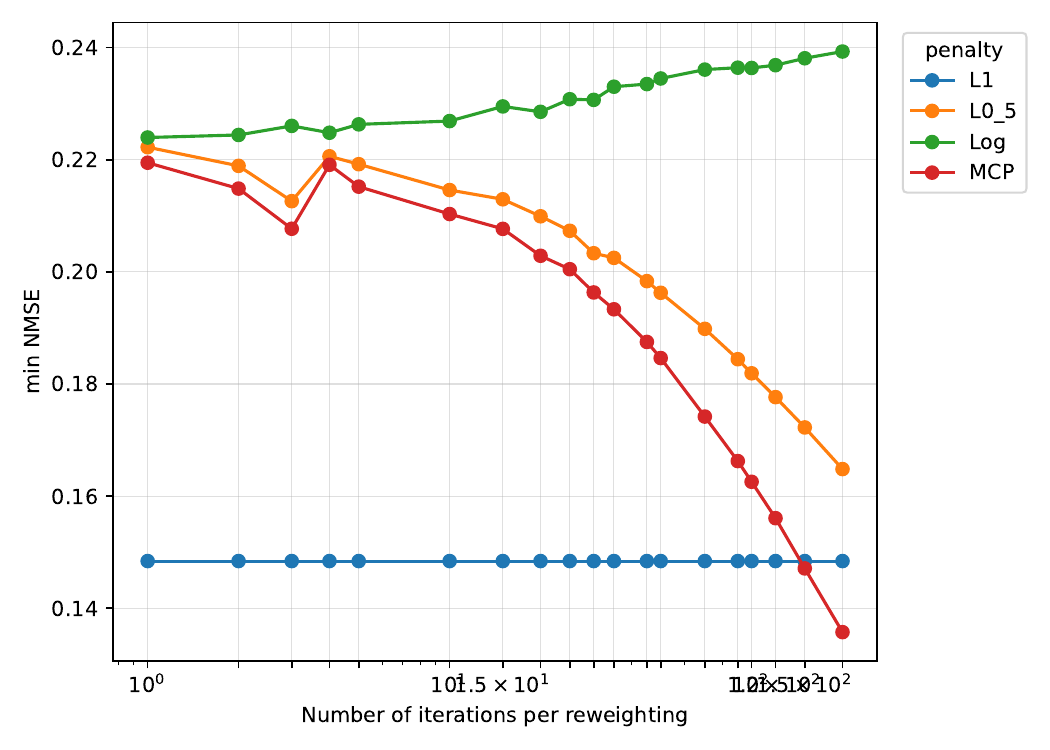}\\
    \includegraphics[width=\textwidth]{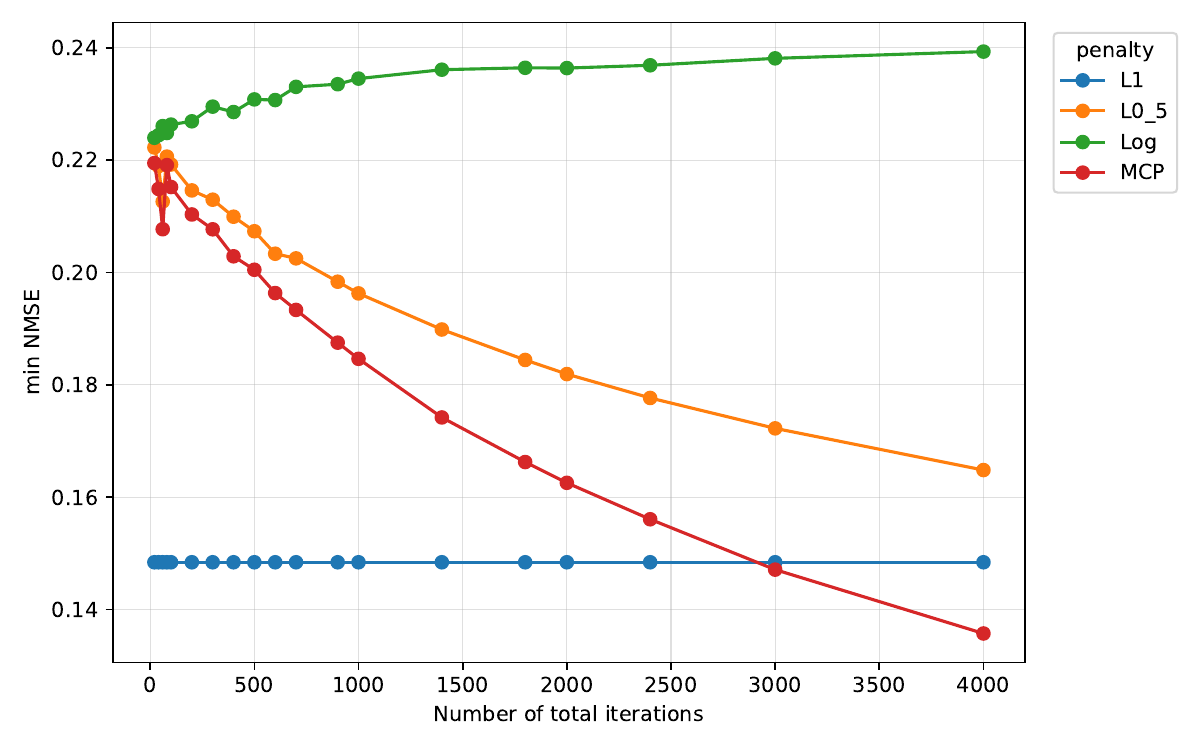}
\end{minipage}
\caption{Graphical-ISTA reconstruction for a sparsity level of $75\%$. Top left: maximum F$1$-score w.r.t. number of iterations between reweightings (log-scale). Bottom left: maximum F$1$-score w.r.t. total number of iterations. Top right: minimum NMSE w.r.t. number of iterations between reweightings (log-scale). Bottom right: minimum NMSE  w.r.t. total number of iterations. } \label{fig:gista0.75} 
\end{figure}

\begin{figure}[t]
\centering
\begin{minipage}{0.45\textwidth}
    \centering
    \includegraphics[width=\textwidth]{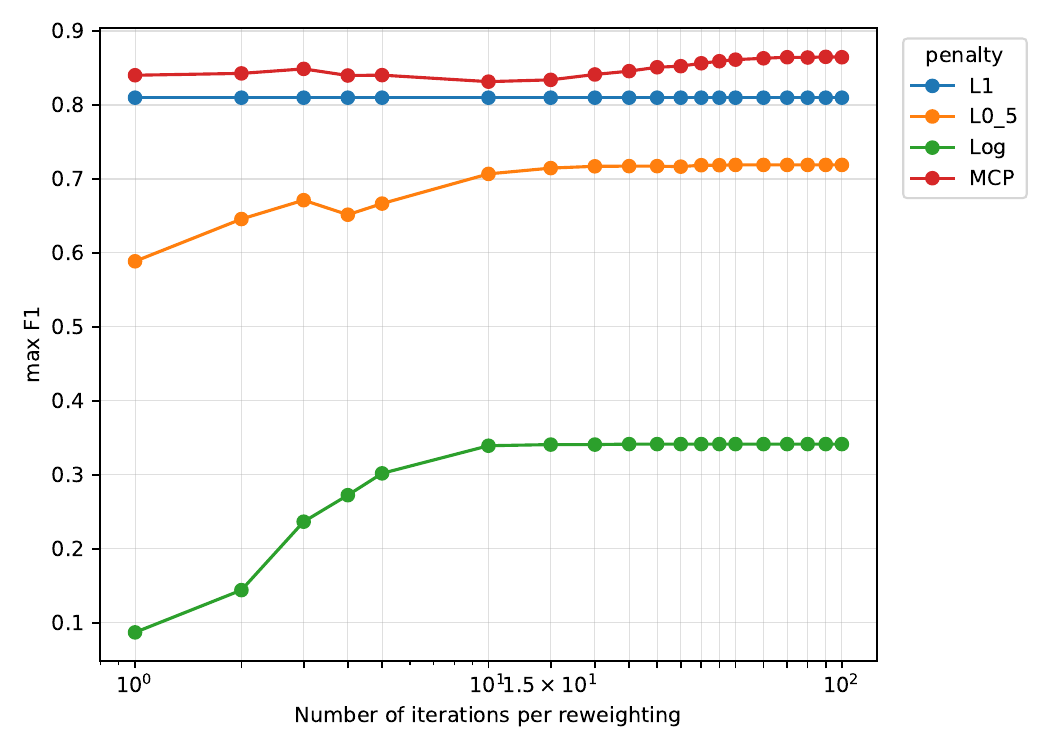}\\
    \includegraphics[width=\textwidth]{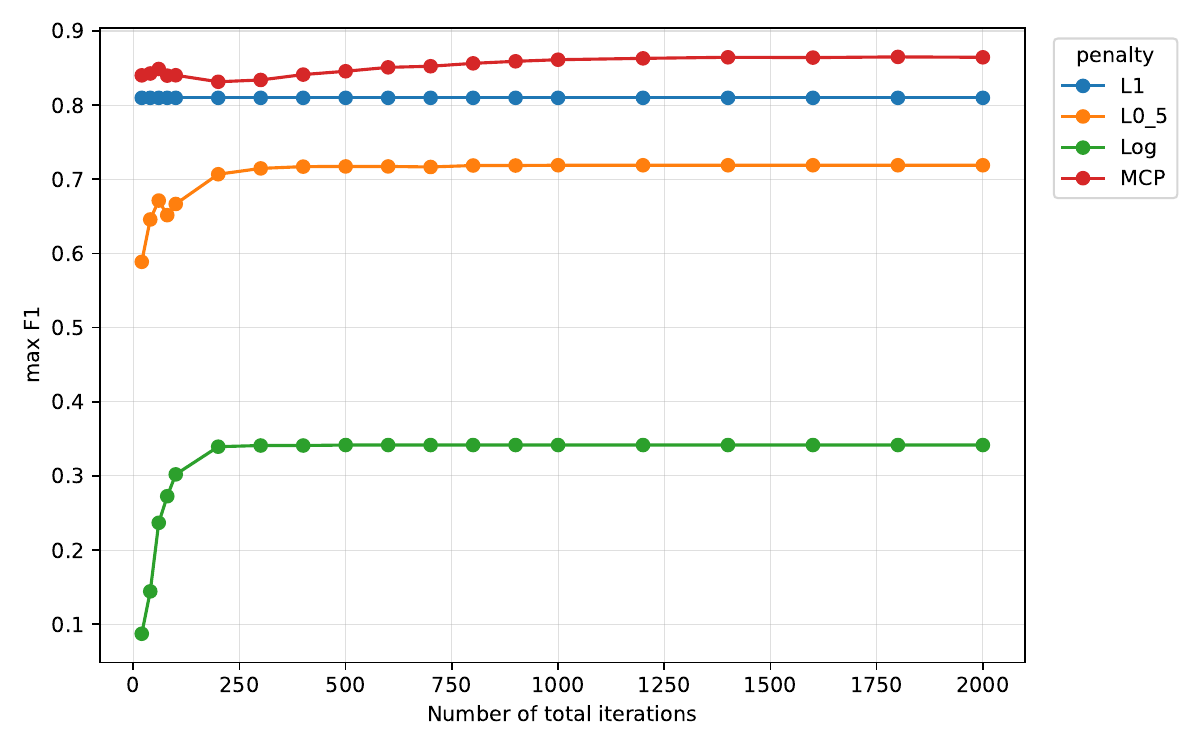}
\end{minipage}\hspace{1em}
\begin{minipage}{0.45\textwidth}
    \centering
    \includegraphics[width=\textwidth]{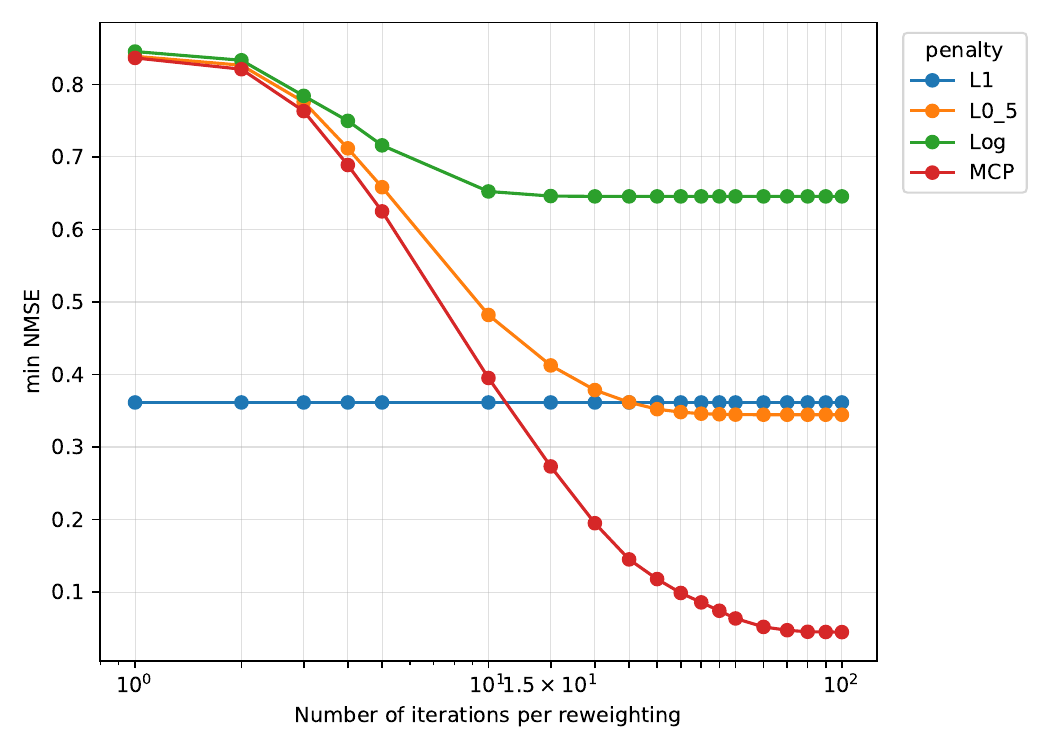}\\
    \includegraphics[width=\textwidth]{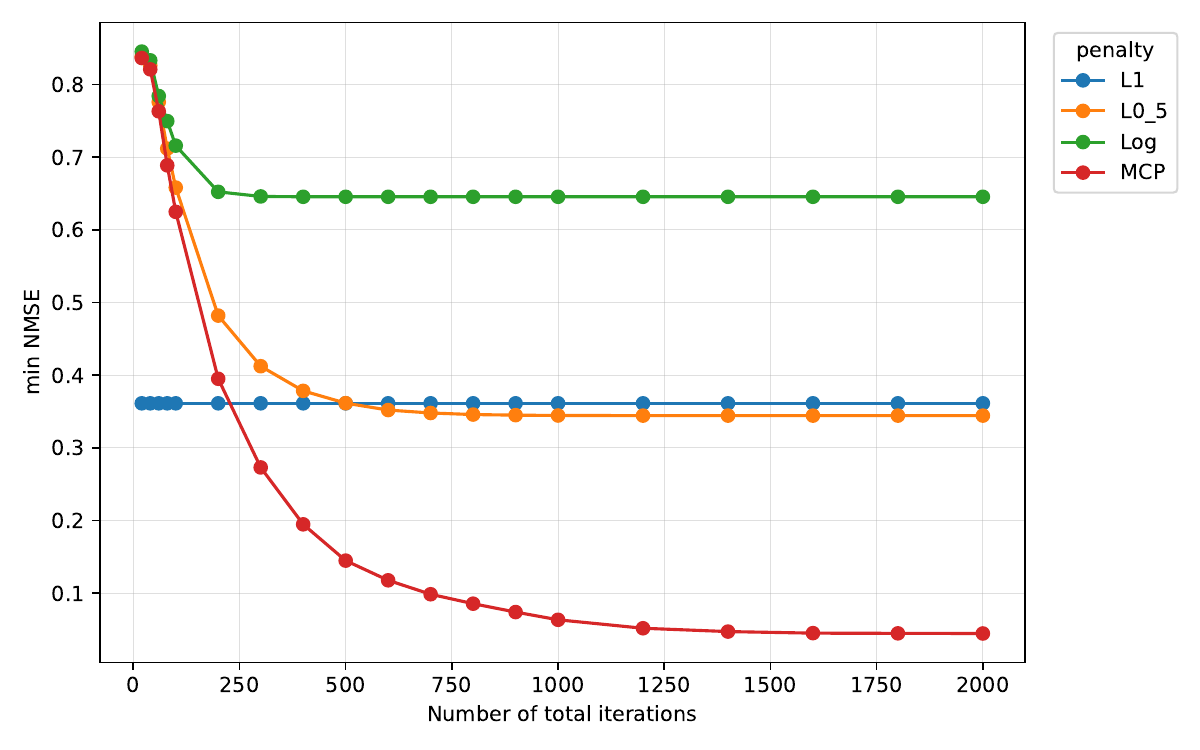}
\end{minipage}
\caption{QUIC reconstruction for a sparsity level of $75\%$. Top left: maximum F$1$-score w.r.t. number of iterations between reweightings (log-scale). Bottom left: maximum F$1$-score w.r.t. total number of iterations. Top right: minimum NMSE w.r.t. number of iterations between reweightings (log-scale). Bottom right: minimum NMSE  w.r.t. total number of iterations.} \label{fig:quic0.75} 
\end{figure}

\begin{figure}[t]
\centering
\begin{minipage}{0.45\textwidth}
    \centering
    \includegraphics[width=\textwidth]{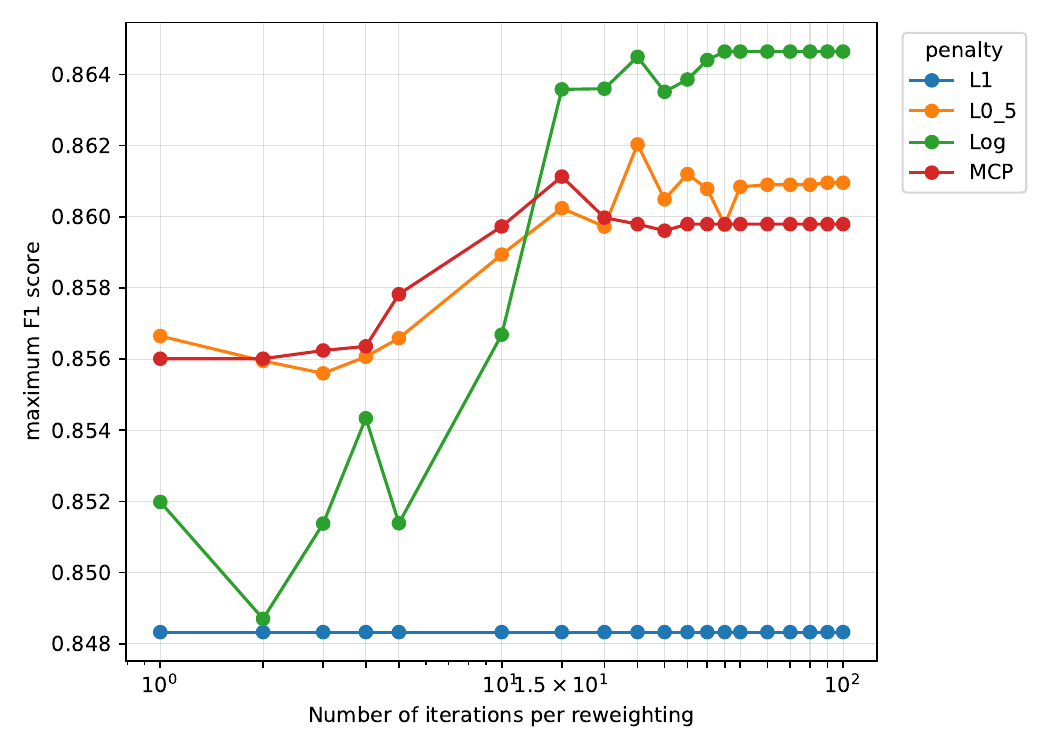}\\
    \includegraphics[width=\textwidth]{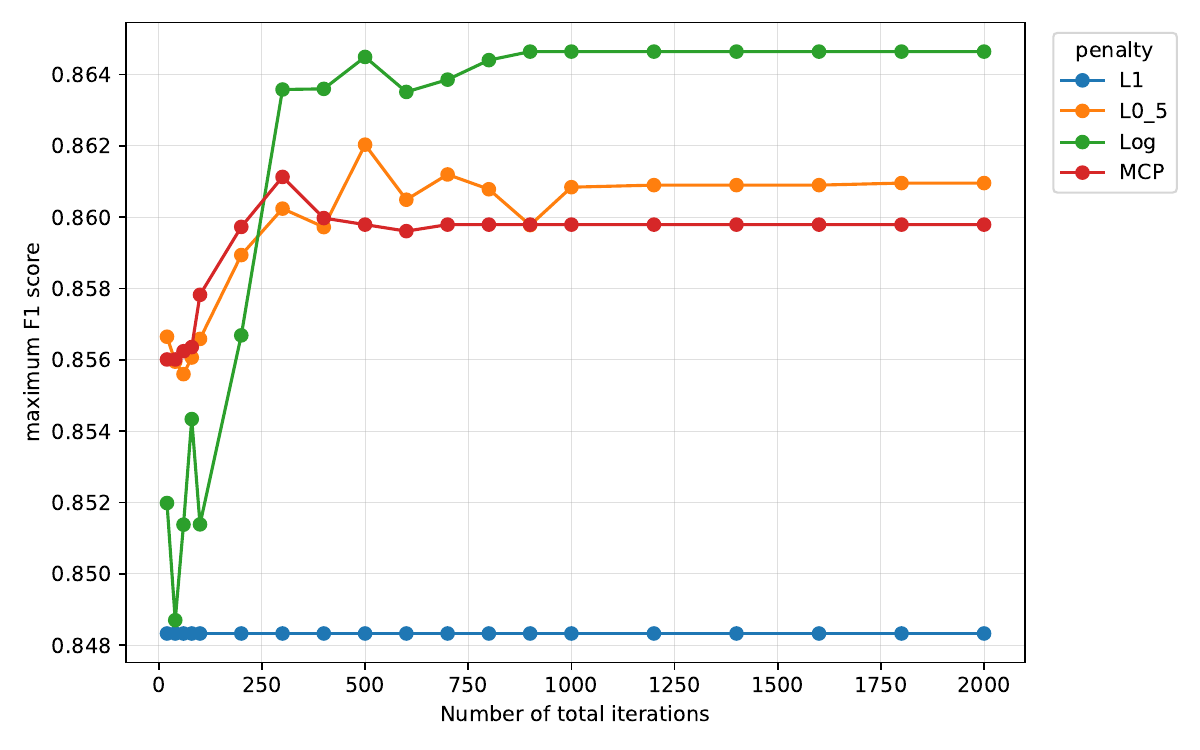}
\end{minipage}\hspace{1em}
\begin{minipage}{0.45\textwidth}
    \centering
    \includegraphics[width=\textwidth]{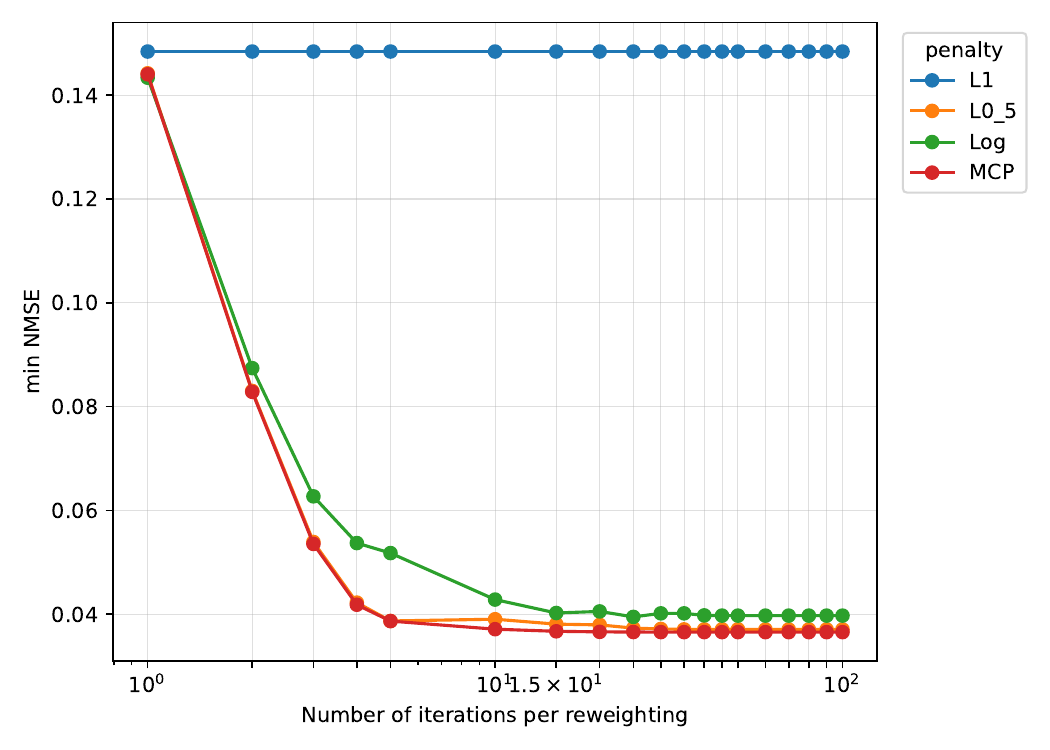}\\
    \includegraphics[width=\textwidth]{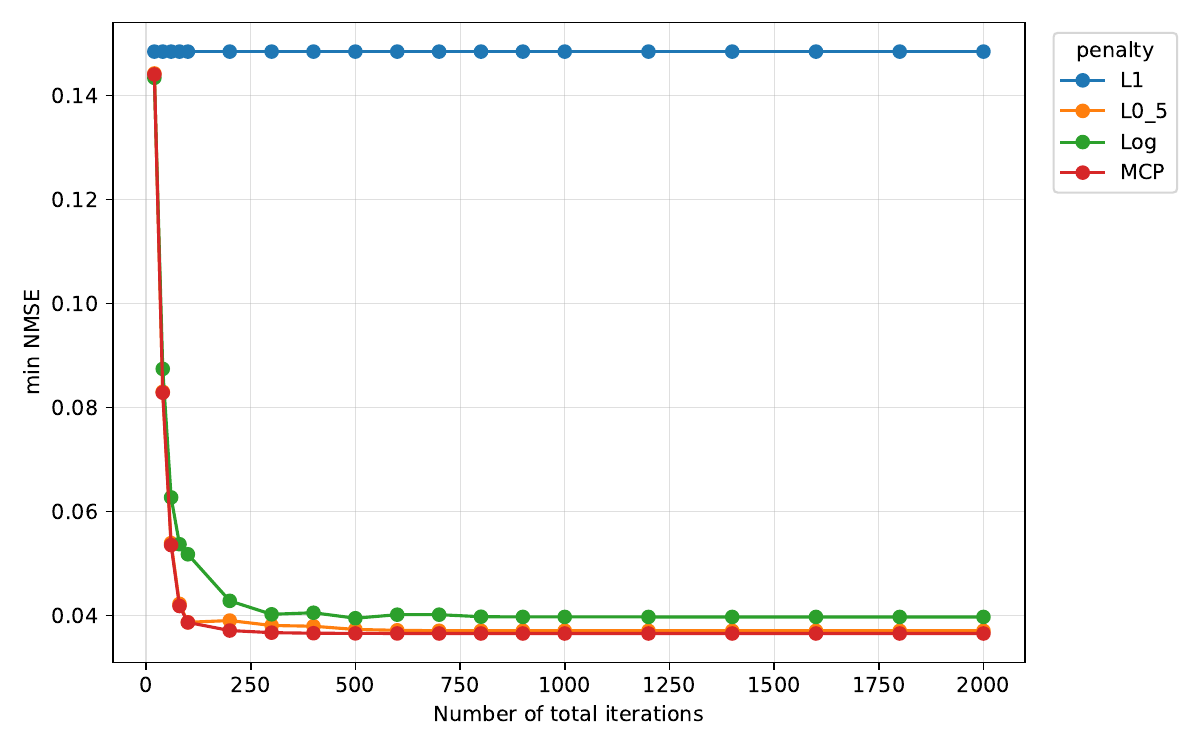}
\end{minipage}
\caption{P-GLasso reconstruction for a sparsity level of $75\%$. Top left: maximum F$1$-score w.r.t. number of iterations between reweightings (log-scale). Bottom left: maximum F$1$-score w.r.t. total number of iterations. Top right: minimum NMSE w.r.t. number of iterations between reweightings (log-scale). Bottom right: minimum NMSE  w.r.t. total number of iterations.} \label{fig:pglasso0.75} 
\end{figure}
\end{appendices}
\end{document}